\documentclass{article}

     \PassOptionsToPackage{numbers, compress}{natbib}
 \usepackage[preprint]{neurips_2026}

\usepackage[utf8]{inputenc} 
\usepackage[T1]{fontenc}    
\usepackage{hyperref}       
\usepackage{url}            
\usepackage{booktabs}       
\usepackage{amsfonts}       
\usepackage{nicefrac}       
\usepackage{microtype}      
\usepackage{xcolor}         

\usepackage{comment}        

\usepackage{amsmath,amssymb, mathtools}
\usepackage{amsthm}

\newtheorem{lemma}{Lemma}
\newtheorem{proposition}{Proposition}

\usepackage{enumitem}

\newtheorem{innercustomlem}{Lemma}
\newenvironment{customlem}[1]
  {\renewcommand\theinnercustomlem{#1}\innercustomlem}
  {\endinnercustomlem}

\newtheorem{innercustomprop}{Proposition}
\newenvironment{customprop}[1]
  {\renewcommand\theinnercustomprop{#1}\innercustomprop}
  {\endinnercustomprop}

\DeclareMathOperator{\E}{\mathbb{E}}  
\DeclareMathOperator{\IndicatorFunc}{\mathbf{1}}  

\usepackage{graphicx}
\usepackage{booktabs}
\usepackage{multirow}
\usepackage{xcolor}
\usepackage{colortbl}
\usepackage{array}

\usepackage{algorithm}
\usepackage{algorithmic}
\usepackage{amsmath}

\usepackage{booktabs}
\usepackage{array}
\usepackage{wrapfig}

\makeatletter
\newcommand{\appendixtoc}{%
  \section*{Appendices}%
  \@starttoc{app}%
}

\newcommand{\appsection}[1]{%
  \section{#1}%
  \addcontentsline{app}{section}{\protect\numberline{\thesection}#1}%
}

\newcommand{\appsubsection}[1]{%
  \subsection{#1}%
  \addcontentsline{app}{subsection}{\protect\numberline{\thesubsection}#1}%
}

\newcommand{\appsubsubsection}[1]{%
  \subsubsection{#1}%
  \addcontentsline{app}{subsubsection}{\protect\numberline{\thesubsubsection}#1}%
}
\makeatother

\usepackage{tikz}
\usepackage{xcolor}
\usetikzlibrary{calc}

\definecolor{rbfmheavy}{HTML}{00E676}
\definecolor{fbil}{HTML}{1A73E8}
\definecolor{rbfmlight}{HTML}{FF4D00}
\definecolor{drbc}{HTML}{8E24AA}
\definecolor{bedroil}{HTML}{FFD600}

\newcommand{\legendcircle}[2]{%
  \tikz[baseline=-0.6ex]{
    \draw[#1, line width=1.2pt] (0,0) -- (0.5cm,0);
    \fill[#1] (0.25cm,0) circle (1.5pt);
  }%
  \hspace{0.35em}#2%
}

\newcommand{\legendtriangle}[2]{%
  \tikz[baseline=-0.6ex]{
    \draw[#1, line width=1.2pt] (0,0) -- (0.5cm,0);
    \path[draw=#1, fill=#1] (0.25cm,0.08cm) -- (0.19cm,-0.04cm) -- (0.31cm,-0.04cm) -- cycle;
  }%
  \hspace{0.35em}#2%
}

\newcommand{\legendsquare}[2]{%
  \tikz[baseline=-0.6ex]{
    \draw[#1, line width=1.2pt] (0,0) -- (0.5cm,0);
    \path[draw=#1, fill=#1] (0.22cm,-0.04cm) rectangle (0.28cm,0.02cm);
  }%
  \hspace{0.35em}#2%
}

\newcommand{\legenddiamond}[2]{%
  \tikz[baseline=-0.6ex]{
    \draw[#1, line width=1.2pt] (0,0) -- (0.5cm,0);
    \path[draw=#1, fill=#1]
      (0.25cm,0.08cm) --
      (0.19cm,0) --
      (0.25cm,-0.08cm) --
      (0.31cm,0) -- cycle;
  }%
  \hspace{0.35em}#2%
}

\newcommand{\legendcross}[2]{%
  \tikz[baseline=-0.6ex]{
    \draw[#1, line width=1.2pt] (0,0) -- (0.5cm,0);
    \draw[#1, line width=1pt]
      (0.22cm,-0.05cm) -- (0.28cm,0.05cm)
      (0.22cm,0.05cm) -- (0.28cm,-0.05cm);
  }%
  \hspace{0.35em}#2%
}

\definecolor{runblue}{RGB}{219,232,248}
\definecolor{walkgreen}{RGB}{220,240,225}
\definecolor{fliporange}{RGB}{252,235,215}
\definecolor{standpurple}{RGB}{235,225,248}

\usepackage{subcaption} 

\title{When Dynamics Shift, Robust Task Inference Wins: Offline Imitation Learning with Behavior Foundation Models Revisited}

\author{
  Rishabh Agrawal \quad Rahul Jain \quad Ashutosh Nayyar \\
  University of Southern California \\
  \texttt{rishabha@usc.edu, rahul.jain@usc.edu, ashutosn@usc.edu}
}

\hypersetup{
  hidelinks
}

\begin{document}

\maketitle

\begin{abstract}
  Behavior Foundation Models (BFMs) enable scalable imitation learning (IL) by pretraining task-agnostic representations that can be rapidly adapted to new tasks. However, existing BFMs assume fixed environment dynamics, limiting their robustness under real-world shifts such as changes in friction, actuation, or sensor noise. We address this by formulating BFM task-inference as a robust minimax optimization problem, enabling adaptation to worst-case dynamics perturbations without modifying pretraining. To the best of our knowledge, this is the first BFM-based framework that achieves robustness to dynamics shifts while relying solely on offline data from a single nominal environment. Our approach significantly outperforms standard BFM and robust offline IL baselines under dynamics shifts. These results demonstrate that robust policy can be achieved entirely at task-inference time, improving the practicality of BFMs in dynamic settings.
\end{abstract}

\section{Introduction}

Deploying autonomous agents in complex, unstructured environments demands learning behavioral policies that generalize broadly without hand-crafted reward functions or controllers~\citep{ho2016generative, agrawal2025markov}. Imitation Learning (IL), wherein an agent acquires behavior directly from expert demonstrations, offers a compelling alternative, enabling remarkable successes in robotic manipulation~\citep{an2025dexterous}, language generation~\citep{achiam2023gpt, rafailov2023direct}, and autonomous control~\citep{codevilla2019exploring}. Yet conventional IL methods learn each task in isolation, requiring retraining or large demonstration sets for every new behavior~\citep{duan2017one, cho2024meta}, limiting their scalability.

\begin{figure}[t]
\centering
\includegraphics[width=\linewidth]{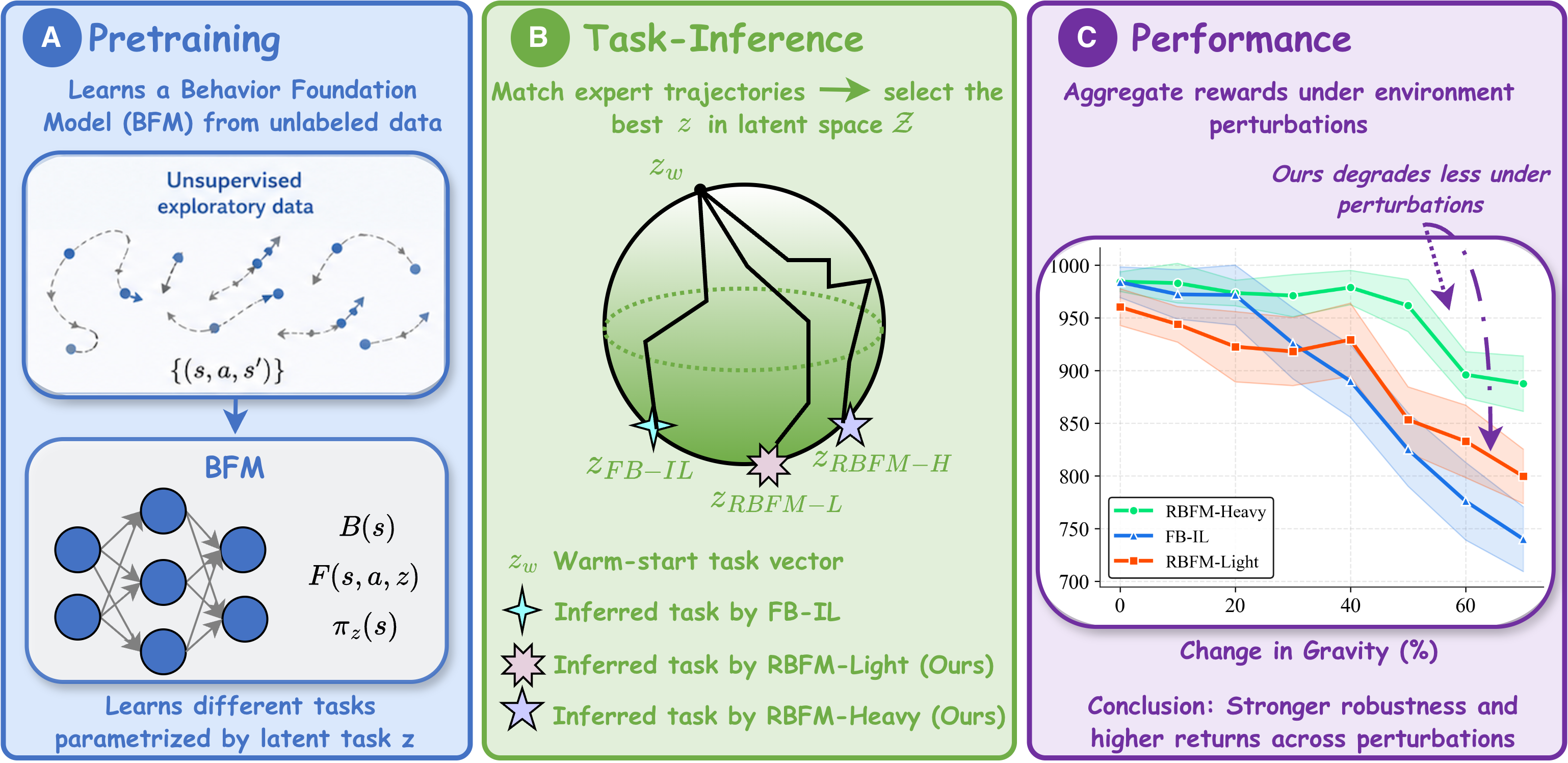}
\caption{Overview of the proposed RBFM framework. FB-IL, RBFM-Light, and RBFM-Heavy share an identical pretraining procedure; robustness is introduced only at the task inference stage. RBFM variants produce policies significantly more robust to dynamic perturbations than FB-IL.}
\label{fig:bfm_approach}
\end{figure}

Behavior Foundation Models (BFMs)~\citep{touati2021learning, touati2023does} address this scalability bottleneck by pretraining a single model on diverse exploratory offline data in an unsupervised manner, capturing task-agnostic environment representations via occupancy measures or successor measures. Forward-backward (FB) decompositions~\citep{pirotta2024fast} factorize these successor measures into policy-dependent and global components, yielding linear representations that generalize across tasks. At inference time, BFMs adapt rapidly to new tasks from limited demonstrations by optimizing a low-dimensional task embedding $z$, producing competitive imitation policies within seconds to minutes without full retraining~\citep{pirotta2024fast}. This \textit{pretrain once, adapt anywhere} paradigm is a compelling solution for scalable multi-task IL in fixed environments.

However, BFMs fundamentally assume that the dynamics at pretraining and deployment coincide, an assumption routinely violated in practice due to sim-to-real discrepancies, mechanical wear, sensor noise, and environmental variation~\citep{tiwari2026reinforcement, hu2025impact}. 
Recent work demonstrates that standard BFMs fail to adapt when environment dynamics shift \citep{bobrin2026zeroshot, jeen2025zero}. Proposed remedies: dynamics-conditioned belief encoders~\citep{bobrin2026zeroshot} and memory-augmented architectures~\citep{jeen2025zero}, both require pretraining data spanning diverse dynamics variations to learn meaningful representations, a requirement that is expensive, time-consuming, and often infeasible in real-world settings~\citep{peng2018sim, aljalbout2025reality}. This raises the following question:
\begin{center}
\textit{Can BFMs be made robust to environment dynamics shift when offline data from only a single nominal environment is available?}
\end{center}
We answer affirmatively by formulating BFM task inference as a robust minimax optimization problem. Keeping the pretrained model fixed, we let an adversary perturb the environment within a specified uncertainty set while the learner minimizes the induced adversarial imitation loss, yielding a task embedding that hedges against plausible dynamics shifts. To preserve the fast task-inference characteristic of BFMs, we propose two variants (Figure~\ref{fig:bfm_approach}): \textit{RBFM-Light} augments the inference objective with a single robustness parameter optimized jointly with $z$, achieving robustness with minimal computational overhead. \textit{RBFM-Heavy} further constrains the policy-induced occupancy measure to lie within the feasible distribution set, enforcing realizability and improving robustness at the cost of a larger optimization problem. Both variants require only lightweight modifications to the task-inference stage, with no changes to pretraining. Empirically, RBFM-Light and RBFM-Heavy consistently and substantially outperform standard BFMs (FB-IL) under diverse dynamics perturbations across multiple environments, while remaining competitive in the nominal setting (data-collecting environment) and orders of magnitude faster than single-task robust IL baselines such as DRBC \citep{panaganti2023distributionally} and BE-DROIL \citep{agrawal2025balance}. To summarize, our contributions are as follows:
\begin{enumerate}
    \item We propose a robust formulation of BFM task inference via minimax optimization over an uncertainty set of environment dynamics, enabling robustness to dynamics shift without any modification to BFM pretraining.
    \item We introduce \textit{RBFM-Light} and \textit{RBFM-Heavy}, two algorithms that trade computational efficiency against robustness while preserving the fast task-inference property of BFMs.
    \item We demonstrate consistent and substantial robustness gains across three morphologically diverse environments under a wide range of perturbations. Our approach requires only a single exploratory dataset collected in the nominal setting for pretraining, unlike \citep{bobrin2026zeroshot,jeen2025zero}, which rely on data from multiple diverse environments, and only a small set of expert demonstrations for task inference to produce a robust policy. Importantly, no expert data from perturbed environments is required at any stage.
    \item To the best of our knowledge, this is the first work to achieve robustness to environment dynamics variation within the BFM framework without requiring pretraining or expert data from multiple dynamics regimes. This increases the range of real-world deployments for BFM-based methodology.
\end{enumerate}

\section{Preliminaries}
\noindent\textbf{Markov decision processes.}
We consider a discounted, reward-free Markov decision process (MDP) $\mathcal{M} = (\mathcal{S}, \mathcal{A}, P, \gamma, \mu)$. Here, $\mathcal{S}$ and $\mathcal{A}$ denote the state and action spaces, $P(\mathrm{d}s' \mid s,a)$ is the transition kernel, $\gamma \in (0,1)$ is the discount factor, and the initial state is drawn as $s_0 \sim \mu$. A policy $\pi$ defines a distribution over actions conditioned on the state, denoted by $\pi(a|s)$. 
Starting from $s_0 \sim \mu$, trajectories $(s_t,a_t)_{t \ge 0}$ evolve according to $s_t \sim P(\mathrm{d}s_t \mid s_{t-1}, a_{t-1})$ and $a_t \sim \pi(\cdot \mid s_t)$.

\noindent\textbf{Successor measure.}
To characterize the evolution of future states under a policy $\pi$, we consider the successor measure $M^\pi(\cdot \mid s,a)$ when taking action $a$ in state $s$, which aggregates discounted future state visitation. For any measurable set $X \subset \mathcal{S}$,
\begin{equation}
M^\pi(X \mid s,a)
=
\sum_{t=0}^{\infty} \gamma^t
\mathbb{P}_\pi(s_{t+1} \in X \mid s,a).
\end{equation}
This construction naturally separates the effect of the dynamics from the specification of a task. In particular, for any reward function $r : \mathcal{S} \to \mathbb{R}$, the associated Q-function can be written as
\begin{equation}
Q_r^\pi(s,a)
=
\mathbb{E}_\pi\!\left[
\sum_{t=0}^{\infty} \gamma^t r(s_{t+1})
\;\middle|\; s,a
\right]
=
\int_{\mathcal{S}} M^\pi(\mathrm{d}s' \mid s,a)\, r(s').
\end{equation}
Thus, once the successor measure is known, value functions for different rewards can be obtained without modifying the policy-dependent component.

\noindent\textbf{Forward--backward (FB) parameterization.}
In practice, directly representing $M^\pi$ is intractable in large or continuous state spaces. A common approach is therefore to approximate it using a low-rank decomposition. Concretely, Behavior Foundation Models (BFMs) represents $M^\pi$ as
\begin{equation}
M^\pi(X \mid s,a)
\approx
\int_{s' \in X} F^\pi(s,a)^\top B(s') \, \phi(\mathrm{d}s'),
\end{equation}
where $F^\pi : \mathcal{S}\times\mathcal{A} \to \mathbb{R}^d$ and $B : \mathcal{S} \to \mathbb{R}^d$ denote forward and backward embeddings, respectively, and $\phi$ is a reference distribution over states. Under this factorization, the dependence on the reward reduces to a linear projection:
\begin{equation}
Q_r^\pi(s,a) = F^\pi(s,a)^\top z,
\qquad
z = \mathbb{E}_{s \sim \phi}[B(s)\, r(s)].
\end{equation}

To enable generalization across tasks, both the policy and the forward embedding are conditioned on a latent vector $z \in \mathcal{Z} \subseteq \mathbb{R}^d$ \citep{touati2023does}. This yields the parameterization
\begin{equation}
\begin{cases}
M^{\pi_z}(X \mid s,a)
\approx
\int_{s' \in X} F(s,a,z)^\top B(s') \, \phi(\mathrm{d}s'), \\
\pi_z(s) = \arg\max_a F(s,a,z)^\top z.
\end{cases}
\end{equation}
In practice, the latent space $\mathcal{Z}$ is typically constrained (e.g., to a bounded subset of $\mathbb{R}^d$). In the \textit{pretraining} stage, the embeddings are learned from an offline dataset with distribution $\phi$ by minimizing a temporal-difference objective derived from the Bellman equation \citep{touati2023does}:
\begin{align}
\mathcal{L}_{\mathrm{FB}}(F,B)
&=
\mathbb{E}_{\substack{z \sim \nu, (s,a,s') \sim \phi\\ s^+ \sim \phi,\; a' \sim \pi_z(\cdot \mid s')}} 
\Big[
\big(
F(s,a,z)^\top B(s^+)
-
\gamma \bar{F}(s',a',z)^\top \bar{B}(s^+)
\big)^2
\Big] \nonumber \\
&\quad
- 2 \, \mathbb{E}_{\substack{z \sim \nu, (s,a,s') \sim \phi}}
\big[ F(s,a,z)^\top B(s') \big],
\label{eqn:bfm_pretraining}
\end{align}
where $\bar{F}$ and $\bar{B}$ denote stop-gradient operators and $\nu$ is a distribution over latent variables. For continuous action spaces, the maximization in $\pi_z$ is replaced by an actor network trained to maximize:
\begin{equation}
\mathcal{L}_{\mathrm{actor}}(\pi)
=
- \mathbb{E}_{\substack{z \sim \nu, s \sim \phi,\; a \sim \pi_z(\cdot \mid s)}}
\big[ F(s,a,z)^\top z \big].
\label{eqn:actor_pretraining}
\end{equation}

The overall procedure consists of two phases. In the first phase (pretraining), we learn the forward and backward embeddings, along with the actor (when applicable), offline from a fixed exploratory dataset by minimizing $\mathcal{L}_{\mathrm{FB}}$ and $\mathcal{L}_{\mathrm{actor}}$, without access to task-specific rewards. In the second phase, we perform task-inference by selecting a latent variable $z$ that induces a policy $\pi_z$. The latent $z$ can be inferred from rewards, goals, or expert demonstrations; since our focus is imitation learning, in this work, we will consider the reward-free setting with expert demonstration data.

\noindent\textbf{Learning from demonstrations.}
Since no reward function is available for task-inference, learning must rely solely on expert demonstrations, without any additional interaction with the environment in the offline setting. To formalize this, we describe policies through their discounted occupancy measures under a nominal transition function $T^o$ as $\rho^{\pi}_{T^{o}}(s,a,s')
=
(1-\gamma)\,
\mathbb{E}_\pi\!\left[
\sum_{t=0}^{\infty}
\gamma^t
\IndicatorFunc_{s_t = s, a_t = a, s_{t+1} = s'}
\right]$. The marginal state–action and state-only occupancy measures are then given by $\rho^{\pi}_{T^{o}}(s, a) = \sum_{s'} \rho^{\pi}_{T^{o}}(s,a,s')$ and $\rho^{\pi}_{T^{o}}(s) = \sum_{a,s'} \rho^{\pi}_{T^{o}}(s,a,s')$ respectively. Given a dataset of trajectories generated by an expert policy $\pi_D$, the goal is to identify a latent task-vector $z$ such that the induced policy $\pi_z$ matches the expert behavior. A natural objective is
\begin{equation}
\arg\min_{z \in \mathcal{Z}}\;
\mathbb{E}_{s \sim \rho^{\pi_D}_{T^{o}}}
\big[
\mathcal{L}(\pi_z(\cdot|s), \pi_D(\cdot|s))
\big],
\end{equation}
where $\mathcal{L}$ is the \textit{imitation loss} (e.g., mean squared error or KL divergence) measuring the gap between learner and expert policies. We refer to this objective as FB-IL, which serves as our primary baseline.

\noindent\textbf{Robust formulation.}
Finally, we account for uncertainty in the transition dynamics by introducing an ambiguity or uncertainty set over Transition kernels. Specifically, we consider
\begin{equation}
\mathcal{T}(\varepsilon') =
\left\{
\left( T_{s,a} \right)_{(s,a)\in \mathcal{S}\times\mathcal{A}}
\;\middle|\;
T_{s,a} \in \Delta(\mathcal{S}),\;
D_{\mathrm{TV}}\!\left(T_{s,a}, T^{o}_{s,a}\right) \le \varepsilon',
\;\forall (s,a)\in \mathcal{S}\times\mathcal{A}
\right\}.
\end{equation}

This induces a distributionally robust objective in which performance is evaluated under the worst-case transition model within the ambiguity set. Formally, we consider \citep{agrawal2025balance}
\begin{equation}
\arg\min_{z \in \mathcal{Z}}
\;
\max_{T \in \mathcal{T}}
\;
\E_{s \sim \rho^{\pi_D}_{T}}
\big[
\mathcal{L}(\pi_z(\cdot|s), \pi_D(\cdot|s))
\big],
\label{eq:robust_il}
\end{equation}

where $\rho^{\pi_D}_{T}$ denotes the discounted state distribution induced by the expert policy $\pi_D$ under transition kernel $T \in \mathcal{T}$. The objective admits a natural minimax interpretation: an adversary selects a transition model $T \in \mathcal{T}(\varepsilon')$ that maximizes the imitation loss, while the learner chooses a latent variable $z$ (and corresponding induced-policy $\pi_z$) to minimize this worst-case loss. Crucially, this game is played in the offline regime, where only trajectories generated by $\pi_D$ under $T^o$ are available.

\section{Fast Robust Offline Imitation Learning} \label{sec:methodolgy}
We propose two complementary formulations for robust and computationally efficient task-inference of pre-trained Behavioral Foundation Models (BFMs) under transition uncertainty. The first variant, \textbf{RBFM-Light} (\S\ref{subsec:rbfm_lite}), relaxes dynamics perturbations into a state-occupancy uncertainty set, yielding a tractable offline objective. The second variant, \textbf{RBFM-Heavy} (\S\ref{subsec:rbfm_heavy}), enforces robustness through explicit Bellman flow constraints, leading to less conservative policies at the expense of increased computational complexity.

\subsection{RBFM-Light} \label{subsec:rbfm_lite}
We begin by characterizing the sensitivity of the discounted state-occupancy measure $\rho_T^\pi(s)$ to perturbations in the transition kernel $T$.

\begin{lemma}
For any policy $\pi$ and transition kernel $T \in \mathcal{T}(\varepsilon')$, the following holds:
\begin{equation*}
    D_{\mathrm{TV}}(\rho^\pi_T(s), \rho^\pi_{T^o}(s)) \le \frac{\gamma \varepsilon'}{1 - \gamma}.
\end{equation*}
\label{lemma:D_TV_mismatch_s}
\end{lemma}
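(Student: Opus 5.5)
We will prove the bound with a standard simulation-lemma argument: write each state-occupancy measure as the fixed point of a linear Bellman flow equation and compare the two fixed points. Fix $\pi$ and let $P^\pi_T(s'\mid s)=\sum_a \pi(a\mid s)T_{s,a}(s')$ denote the state-to-state kernel induced by $\pi$ under $T$, and similarly $P^\pi_{T^o}$. The discounted state occupancy (first marginal of $\rho^\pi_T(s,a,s')$) satisfies
\begin{equation*}
\rho^\pi_T = (1-\gamma)\mu + \gamma\, \rho^\pi_T P^\pi_T ,
\qquad\text{equivalently}\qquad
\rho^\pi_T = (1-\gamma)\mu (I-\gamma P^\pi_T)^{-1},
\end{equation*}
treating measures as row vectors. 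The same identity holds for $T^o$.

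Subtracting the two flow equations and adding and subtracting $\gamma\rho^\pi_T P^\pi_{T^o}$ gives
\begin{equation*}
\rho^\pi_T-\rho^\pi_{T^o} = \gamma\,\rho^\pi_T\,(P^\pi_T-P^\pi_{T^o}) + \gamma\,(\rho^\pi_T-\rho^\pi_{T^o})P^\pi_{T^o}.
\end{equation*}
Solving for the difference yields
\begin{equation*}
\rho^\pi_T-\rho^\pi_{T^o}=\gamma\,\rho^\pi_T(P^\pi_T-P^\pi_{T^o})(I-\gamma P^\pi_{T^o})^{-1}.
\end{equation*}
We then take $\ell_1$ norms using two facts. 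First, right-multiplication by a stochastic kernel is nonexpansive in $\ell_1$, so $\|(I-\gamma P^\pi_{T^o})^{-1}\|_{1\to1}\le \sum_t\gamma^t = 1/(1-\gamma)$. Second, for any probability vector $\rho$,
\begin{equation*}
\|\rho(P^\pi_T-P^\pi_{T^o})\|_1 \le \sum_s\rho(s)\sum_a\pi(a\mid s)\,\|T_{s,a}-T^o_{s,a}\|_1 \le 2\varepsilon',
\end{equation*}
by convexity and the uncertainty-set constraint $D_{\mathrm{TV}}(T_{s,a},T^o_{s,a})\le\varepsilon'$ for every $(s,a)$. Combining the two facts, $\|\rho^\pi_T-\rho^\pi_{T^o}\|_1\le 2\gamma\varepsilon'/(1-\gamma)$. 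Halving gives the claimed total-variation bound.

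The main obstacle is bookkeeping rather than substance. The timing convention matters: with occupancy indexed from $s_0\sim\mu$, the initial term $(1-\gamma)\mu$ is identical under both kernels and cancels, and this cancellation is exactly what produces the factor $\gamma$ in the bound. We also need the TV/$\ell_1$ factor of $2$ to appear consistently on both sides. In continuous state spaces the same argument goes through with integral operators in place of matrices, and the Neumann series converges because $\gamma<1$. If the operator inversion seems delicate, we would instead argue by a telescoping/coupling over time steps. Specifically, we bound $\|d_t^T-d_t^{T^o}\|_{\mathrm{TV}}\le \min(1,t\varepsilon')\le t\varepsilon'$ for the time-$t$ state marginals, and then sum $(1-\gamma)\sum_t\gamma^t t\varepsilon' = \gamma\varepsilon'/(1-\gamma)$, which gives the same constant.
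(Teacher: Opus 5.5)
Your proof is correct. It differs from the paper mainly in that the paper gives no argument of its own: it simply defers to Lemma~7 of Panaganti et al.\ (2023).

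You instead derive the bound directly from the Bellman flow equation via the resolvent identity $\rho^\pi_T-\rho^\pi_{T^o}=\gamma\,\rho^\pi_T(P^\pi_T-P^\pi_{T^o})(I-\gamma P^\pi_{T^o})^{-1}$. You bound the middle factor by $2\varepsilon'$, since it is a probability-weighted average of per-$(s,a)$ kernel gaps, and the resolvent by $1/(1-\gamma)$. Your telescoping alternative, $\|d_t^T-d_t^{T^o}\|_{\mathrm{TV}}\le t\varepsilon'$ summed against $(1-\gamma)\gamma^t$, reaches the same constant with no operator inversion.

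The citation buys brevity. Your version buys self-containment and makes the conventions explicit, which is exactly where importing an external lemma can silently shift constants. The occupancy here is indexed from $s_0\sim\mu$, unlike the paper's successor measure, which starts at $s_{t+1}$. So the $(1-\gamma)\mu$ term cancels and yields the factor $\gamma$; for the next-state marginal one would only get $\varepsilon'/(1-\gamma)$. TV is also taken as half the $\ell_1$ norm on both sides.

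Your one-step add-and-subtract estimate is the same decomposition the paper uses in its own proof of Lemma~2. So your argument makes the chain from this lemma, through the state--action lemma in the appendix, to Lemma~2 entirely internal to the paper.
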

Lemma~\ref{lemma:D_TV_mismatch_s} provides a uniform bound on the deviation between occupancy measures induced by perturbed and nominal dynamics. This enables replacing uncertainty over transition kernels with an uncertainty set defined directly in occupancy space, yielding a tractable relaxation of the robust objective in \eqref{eq:robust_il}. Concretely, we define $\mathcal{D}^{\pi_D}_{\mathrm{light}}(\varepsilon_l) 
= 
\Big\{
\rho^{\pi_D}_{T}(s) : 
D_{\mathrm{TV}}\!\left(\rho^{\pi_D}_{T}(s),\, \rho^{\pi_D}_{T^{o}}(s)\right)
\le \frac{\gamma\varepsilon'}{1 - \gamma} = \varepsilon_l
\Big\}$, which contains all occupancy measures induced by admissible kernels $T \in \mathcal{T}(\varepsilon')$. By construction, optimizing over $\mathcal{D}^{\pi_D}_{\mathrm{light}}(\varepsilon)$ upper bounds the original worst-case objective over $\mathcal{T}(\varepsilon')$. This reformulation removes explicit dependence on the unknown dynamics and provides a purely offline optimization problem. The resulting objective is given by
\begin{equation}
\begin{aligned}
\text{RBFM-Light}
&:=
\min_{z} \; \max_{\rho^{\pi_D}_{T} \ge 0} 
\; \mathbb{E}_{s \sim \rho^{\pi_D}_{T}}\!\left[\mathcal{L}(\pi_z(\cdot|s), \pi_D(\cdot|s))\right]
\quad
\text{s.t.} 
& D_{TV}\!\left(\rho^{\pi_D}_{T}(s) \,\|\, 
\rho^{\pi_D}_{T^{o}}(s)\right)
\le \varepsilon_l
\end{aligned}
\label{eqn:rbfm_light}
\end{equation}
The inner maximization admits a tractable dual reformulation, yielding the equivalent optimization problem provided in Proposition \ref{prop:light_sol}.
\begin{proposition}
The solution to the constrained optimization problem in \eqref{eqn:rbfm_light} is given by solving:
\begin{equation*}
\min_{z}\min_{\lambda \in \mathbb{R}}
\left\{\mathbb{E}_{s \sim \rho^{\pi_D}_{T^{o}}}\big[(\mathcal{L}(\pi_z(\cdot|s), \pi_D(\cdot|s)) - \lambda)_+\big]
+ \varepsilon_l\left(\max_{\rho^{\pi_D}_{T^{o}}(s) > 0}
\mathcal{L}(\pi_z(\cdot|s), \pi_D(\cdot|s)) - \lambda\right)_{+} + \lambda\right\}
\label{eqn:rbfm_light_sol}
\end{equation*}
\label{prop:light_sol}
\end{proposition}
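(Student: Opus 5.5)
The plan is to treat the inner maximization of \eqref{eqn:rbfm_light} as a linear program over density ratios and apply Lagrangian duality, for a fixed $z$. Write $p(s)=\rho^{\pi_D}_{T^o}(s)$ and $\ell(s)=\mathcal{L}(\pi_z(\cdot|s),\pi_D(\cdot|s))$. Let $M=\max_{p(s)>0}\ell(s)$.

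Since the problem is posed offline, I read the adversarial distribution as a reweighting of the data, $q=w\,p$ with $w\ge 0$. This is why the maximum in the statement ranges only over the support of $p$. With this reading, the inner problem becomes
\begin{equation*}
\sup_{w\ge 0}\ \mathbb{E}_p[w\,\ell]\quad\text{s.t.}\quad \mathbb{E}_p[w]=1,\qquad \tfrac12\mathbb{E}_p|w-1|\le \varepsilon_l .
\end{equation*}
I use the standard total-variation convention, $D_{TV}=\tfrac12\|q-p\|_1$. The problem is linear in $w$ and the constraint set is convex. The choice $w\equiv 1$ is strictly feasible when $\varepsilon_l>0$, so Slater's condition gives strong duality.

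Introduce a free multiplier $\lambda$ for normalization and a multiplier $\eta\ge 0$ for the constraint $\mathbb{E}_p|w-1|\le 2\varepsilon_l$. The dual function is
\begin{equation*}
\lambda+2\varepsilon_l\eta+\mathbb{E}_p\Big[\sup_{w\ge0}\big\{w(\ell-\lambda)-\eta|w-1|\big\}\Big].
\end{equation*}
The pointwise supremum is piecewise linear in $w$. For $w>1$ its slope is $\ell-\lambda-\eta$, so it is finite only if $\ell(s)-\lambda\le\eta$ on the support of $p$. When that holds, the supremum is attained at $w\in\{0,1\}$ and equals $\max(\ell-\lambda,\,-\eta)$. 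The dual problem is therefore
\begin{equation*}
\min_{\lambda,\;\eta\ge \max(0,\,M-\lambda)}\ \lambda+2\varepsilon_l\eta+\mathbb{E}_p\big[\max(\ell-\lambda,-\eta)\big].
\end{equation*}

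The key step is eliminating $\eta$. Shift to $\lambda'=\lambda-\eta$, which uses $\max(\ell-\lambda,-\eta)=(\ell-\lambda')_+-\eta$. The objective then becomes
\begin{equation*}
\lambda'+\mathbb{E}_p[(\ell-\lambda')_+]+2\varepsilon_l\eta .
\end{equation*}
The constraint $\eta\ge M-\lambda$ becomes $\eta\ge (M-\lambda')/2$, together with $\eta\ge 0$. Minimizing over $\eta$ for fixed $\lambda'$ gives $2\varepsilon_l\eta=\varepsilon_l(M-\lambda')_+$. This is exactly the stated expression with $\lambda'$ renamed $\lambda$, and taking the outer $\min_z$ finishes the argument.

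The main obstacle is the bookkeeping of the TV convention and of the support restriction. With $D_{TV}=\|q-p\|_1$ and no factor $\tfrac12$, the coefficient would become $\varepsilon_l/2$ instead of $\varepsilon_l$. I would sanity-check the convention against the primal value: the worst case moves $\varepsilon_l$ mass from the lowest-loss states to the state attaining $M$. Choosing $\lambda$ as the lower $\varepsilon_l$-quantile of $\ell$ in the dual reproduces exactly $\mathbb{E}_p\ell+\varepsilon_l M-\varepsilon_l\cdot(\text{mean of the lowest }\varepsilon_l\text{ fraction})$, which confirms both the constant and strong duality.

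In continuous state spaces, $M$ should be read as an essential supremum over the support of $p$. The duality step then needs a standard infinite-dimensional LP argument, for example via Rockafellar's interchange of integral and supremum, in place of the finite-dimensional Slater argument.
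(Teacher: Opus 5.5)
Your proof is correct, and it takes a different route from the paper.

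\textbf{The paper's route.} The proof is a one-step reduction. It fixes $z$, defines $\ell_z$ and $M_z$ (the maximum over the support), and invokes the known total-variation duality result, Lemma~\ref{lem:f_div}. That lemma is stated for a worst-case \emph{minimum}. The paper's displayed identity has $\min$ where $\max$ is meant, and it leaves implicit the sign flip $\ell\mapsto-\ell$, $\lambda=-\eta$ that turns the lemma into the stated expression. It then reattaches $\min_z$.

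\textbf{Your route.} You derive the duality from scratch:
\begin{itemize}
\item an LP over density ratios $w$, with multipliers $\lambda$ for normalization and $\eta$ for the TV budget;
\item the pointwise supremum $\max(\ell-\lambda,-\eta)$, finite exactly when $\eta\ge M-\lambda$;
\item the shift $\lambda'=\lambda-\eta$, which turns the constraint into $\eta\ge (M-\lambda')_+/2$ and so yields $\varepsilon_l(M-\lambda')_+$.
\end{itemize}
Each of these steps checks out, and so does your quantile sanity check for $\varepsilon_l\le 1$.

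\textbf{What each buys.} The paper's argument is shorter and modular. Yours is longer and, in continuous spaces, needs the interchange and infinite-dimensional duality caveat you already flag. In exchange, it makes explicit three things the citation hides:
\begin{itemize}
\item The adversarial occupancy must be normalized. This is the origin of the free $\lambda$ and the $+\lambda$ term; without $\mathbb{E}_p[w]=1$ the ball in \eqref{eqn:rbfm_light} would permit extra mass and a different value.
\item The coefficient $\varepsilon_l$, rather than $\varepsilon_l/2$, depends on the $\tfrac12\|\cdot\|_1$ convention.
\item The maximum runs only over the support of $\rho^{\pi_D}_{T^o}$ because the adversary is restricted to reweightings $q=wp$. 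The constraint set in \eqref{eqn:rbfm_light} as written also admits distributions placing mass off that support. Then the worst case would move mass to the supremum of the loss over all of $\mathcal{S}$. The cited Lemma~\ref{lem:f_div} is itself stated with $\min_{x\in\mathcal{X}}$ over the whole space, and the paper silently replaces it with a max over the support. Your absolute-continuity reading is therefore exactly the assumption the proposition, as stated, needs.
\end{itemize}
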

The proof of Proposition \ref{prop:light_sol} is provided in the Appendix \ref{sec:appendix_proofs}. Objective defined in Proposition $\ref{prop:light_sol}$ can be optimized to solve the robust imitation learning problem in \eqref{eq:robust_il}, using the offline data only available from the nominal environment $T^{o}$. We further specialize the objective under standard assumptions commonly used in offline imitation learning.

\begin{proposition}
Suppose that the expert and learner policies $\pi_D$ and $\pi_z$ are deterministic and that the action space $\mathcal A$ is bounded. Let $\mathcal{L}(\pi_z(\cdot|s), \pi_D(\cdot|s))=\|\pi_z(s)-a\|_2^2$ and $
L:=\sup_{(s,a)\sim\mathcal \rho^{\pi_D}_{T^{o}}}\|\pi_z(s)-\pi_D(s)\|_2^2$.
Then the optimization problem in Proposition \ref{prop:light_sol} can be simplified to
\begin{equation*}
\resizebox{1.0\textwidth}{!}{$\displaystyle
\min_{z}\min_{\lambda\in[0,(1+\varepsilon)L]}
\left\{\mathbb{E}_{s \sim \rho^{\pi_D}_{T^{o}}}\big[(\mathcal{L}(\pi_z(\cdot|s), \pi_D(\cdot|s)) - \lambda)_+\big]
+ \varepsilon_l\left(\max_{\rho^{\pi_D}_{T^{o}}(s) > 0}
\mathcal{L}(\pi_z(\cdot|s), \pi_D(\cdot|s)) - \lambda\right)_{+} + \lambda\right\}
$}
\label{eqn:rbfm_light_sol_fast}
\end{equation*}
\label{prop:fast_rbfm_light}
\end{proposition}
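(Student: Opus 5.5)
Fix $z$ and write $\ell(s):=\mathcal{L}(\pi_z(\cdot|s),\pi_D(\cdot|s))=\|\pi_z(s)-\pi_D(s)\|_2^2$, which is the squared loss of the statement once the deterministic expert action $a=\pi_D(s)$ is substituted. Let
\[
G(\lambda):=\mathbb{E}_{s\sim\rho^{\pi_D}_{T^o}}[(\ell(s)-\lambda)_+]+\varepsilon_l\Big(\max_{\rho^{\pi_D}_{T^o}(s)>0}\ell(s)-\lambda\Big)_+ +\lambda
\]
denote the objective of Proposition~\ref{prop:light_sol}. The plan is to show that $\min_{\lambda\in\mathbb{R}}G(\lambda)$ is attained on the compact interval $[0,(1+\varepsilon)L]$. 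Restricting $\lambda$ to that interval then leaves the inner value unchanged for every $z$, so the outer minimization over $z$ is also unaffected. Because the action space is bounded and both policies are deterministic, $\ell$ is bounded with $0\le \ell(s)\le L<\infty$ on the support of $\rho^{\pi_D}_{T^o}$. Hence $M:=\max_{\rho^{\pi_D}_{T^o}(s)>0}\ell(s)\le L$, reading the max as an essential supremum if it is not attained.

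Since $G$ is convex and piecewise linear in $\lambda$, I will argue through its one-sided slopes. For $\lambda<0$ we have $\lambda<0\le \ell(s)$ for all $s$, so $(\ell-\lambda)_+=\ell-\lambda$ and $(M-\lambda)_+=M-\lambda$. This gives $G(\lambda)=\mathbb{E}[\ell]+\varepsilon_l M-\varepsilon_l\lambda$, which is nonincreasing in $\lambda$ with slope $-\varepsilon_l\le 0$. Therefore $G(\lambda)\ge G(0)$ for every $\lambda<0$, and the lower endpoint $0$ loses nothing.

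For $\lambda\ge L\ge M\ge \ell(s)$ both positive parts vanish, so $G(\lambda)=\lambda$ is increasing and $G(\lambda)\ge G(L)$. The minimizer can thus be taken in $[0,L]\subseteq[0,(1+\varepsilon)L]$, because $\varepsilon\ge 0$ and $L\ge 0$. I will note that the stated upper bound $(1+\varepsilon)L$ is looser than necessary but valid. A tighter bound $\lambda\le M$ would also work, but the statement uses $L$ because $L$ is available a priori from the bounded action space.

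The main subtlety is not the calculus but the interpretation. First, the max/esssup over the support must be well defined and dominated by $L$. Second, the $\varepsilon$ in the interval must be identified with $\varepsilon_l$, or with any nonnegative constant. Third, the minimum over $\lambda$ must actually be attained: $G$ is continuous and convex, so it attains its minimum on the compact interval, and by the two monotonicity arguments that minimum equals the unconstrained infimum. Combining these gives equality of the two optimization problems for each $z$, and hence overall.
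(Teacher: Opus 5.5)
Your proposal is correct and follows essentially the same argument as the paper: fix $z$; for $\lambda<0$ the objective has slope $-\varepsilon_l$, and for $\lambda$ above the largest loss on the support it reduces to $\lambda$, so the minimum over $\lambda$ is attained inside $[0,(1+\varepsilon_l)L]$. The differences are cosmetic. You cut off at $L$ rather than at $M_z$, and you use weak monotonicity plus attainment on a compact interval where the paper uses strict monotonicity (with $\varepsilon_l>0$) to place every minimizer in $[0,M_z]$.
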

Proposition \ref{prop:fast_rbfm_light} reduces the search space of the dual variable $\lambda$ from $\mathbb{R}$ to a bounded interval, enabling faster optimization. The proof is provided in the Appendix \ref{sec:appendix_proofs}.

\subsection{RBFM-Heavy} \label{subsec:rbfm_heavy}

While RBFM-Light provides a tractable offline solution by operating directly in occupancy space, it allows arbitrary perturbations within a TV ball around the nominal occupancy measure. In particular, the uncertainty set $\mathcal{D}^{\pi_D}_{\mathrm{light}}(\varepsilon)$ may include occupancy measures that assign probability mass to transitions that are not realizable under the expert policy in the underlying MDP. This can lead to overly conservative solutions. To address this limitation, we refine the uncertainty set by incorporating structural constraints that ensure realizability. We begin by considering the extension of Lemma~\ref{lemma:D_TV_mismatch_s} to joint state-action-next-state occupancy measures.
\begin{lemma}
For any policy $\pi$ and transition kernel $T \in \mathcal{T}(\varepsilon')$, the following holds:
\begin{equation*}
    D_{\mathrm{TV}}(\rho^\pi_T(s,a,s'), \rho^\pi_{T^o}(s,a,s')) \le \frac{\varepsilon'}{1 - \gamma}.
\end{equation*}
\label{lemma:D_TV_mismatch_s_a_s_prime}
\end{lemma}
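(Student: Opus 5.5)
The plan is to factor the joint occupancy through the state marginal and then apply Lemma~\ref{lemma:D_TV_mismatch_s}. Write
\begin{equation*}
\rho^\pi_T(s,a,s') = \rho^\pi_T(s)\,\pi(a\mid s)\,T(s'\mid s,a),
\qquad
\rho^\pi_{T^o}(s,a,s') = \rho^\pi_{T^o}(s)\,\pi(a\mid s)\,T^o(s'\mid s,a).
\end{equation*}
This factorization holds because, under either kernel, the discounted sum defining the occupancy factors at each time step into the probability that $s_t=s$, times $\pi(a\mid s)$, times the kernel. The difference then splits by adding and subtracting the hybrid term $\rho^\pi_T(s)\pi(a\mid s)T^o(s'\mid s,a)$:
\begin{equation*}
\rho^\pi_T - \rho^\pi_{T^o}
= \rho^\pi_T(s)\pi(a\mid s)\big(T - T^o\big)(s'\mid s,a)
+ \big(\rho^\pi_T(s)-\rho^\pi_{T^o}(s)\big)\pi(a\mid s)T^o(s'\mid s,a).
\end{equation*}

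Taking half the $\ell_1$ norm and applying the triangle inequality bounds each term separately. For the first term, summing over $s'$ gives at most $2\varepsilon'$, since $D_{\mathrm{TV}}(T_{s,a},T^o_{s,a})\le\varepsilon'$ for every $(s,a)\in\mathcal{S}\times\mathcal{A}$. Summing over $a$ and $s$ with weights $\rho^\pi_T(s)\pi(a\mid s)$, which form a probability distribution, therefore leaves a contribution of at most $\varepsilon'$ to the TV distance. For the second term, summing over $s'$ and then $a$ removes the probability factors $T^o$ and $\pi$. This leaves exactly $D_{\mathrm{TV}}(\rho^\pi_T(s),\rho^\pi_{T^o}(s))$, which Lemma~\ref{lemma:D_TV_mismatch_s} bounds by $\gamma\varepsilon'/(1-\gamma)$. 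Adding the two pieces gives
\begin{equation*}
\varepsilon' + \frac{\gamma\varepsilon'}{1-\gamma} = \frac{\varepsilon'}{1-\gamma},
\end{equation*}
which is the claimed bound.

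The main point needing care is justifying the factorization, specifically that $\rho^\pi_T(s,a,s')/\rho^\pi_T(s)$ equals $\pi(a\mid s)T(s'\mid s,a)$ with the same normalization as the state marginal. This follows directly from the definition, since both sides carry the same $(1-\gamma)\gamma^t$ weights. In continuous spaces, sums become integrals and the same argument goes through with densities or measures.

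The remaining risk is a mismatch in convention if Lemma~\ref{lemma:D_TV_mismatch_s} was established for the marginal $\rho(s)$ of the current state rather than the next state. I would check that this marginal is the one obtained here, which it is, since we sum out $a$ and $s'$. With that confirmed, no further obstacle is expected.
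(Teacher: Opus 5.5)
Your proof is correct and follows the paper's argument: you add and subtract the hybrid term with the nominal kernel, bound the kernel-mismatch piece using the $(s,a)$-rectangular TV ball, and bound the marginal-mismatch piece via the occupancy lemma, giving $\varepsilon' + \gamma\varepsilon'/(1-\gamma) = \varepsilon'/(1-\gamma)$. The only cosmetic difference is that the paper factors through the state--action marginal $\rho^\pi_T(s,a)$ and invokes an intermediate state--action lemma, itself reduced to Lemma~\ref{lemma:D_TV_mismatch_s} via $\rho^\pi_T(s,a)=\pi(a\mid s)\rho^\pi_T(s)$; you fold that step in by factoring through $\rho^\pi_T(s)$ directly.
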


Building on Lemma~\ref{lemma:D_TV_mismatch_s_a_s_prime}, we enforce a \emph{Bellman flow conservation} condition~\citep{puterman2014markov, altman2021constrained}, ensuring that $\rho^{\pi_D}_{T}(s,a,s')$ corresponds to a valid stationary occupancy measure induced by the expert policy $\pi_D$ under transition kernel $T$. This preserves the temporal structure of the MDP and rules out infeasible perturbations. In addition, we generalize the divergence constraint by replacing the TV distance with an $f$-divergence (see Appendix \ref{subsec:app_f_div} for details). Upon incorporating these changes, we obtain the following constrained minimax formulation:
\begin{equation}
\begin{aligned}
\text{RBFM-Heavy}
&:=
\min_{z} \; \max_{\rho^{\pi_D}_{T} \ge 0} 
\; \mathbb{E}_{s \sim \rho^{\pi_D}_{T}}\!\left[\mathcal{L}(\pi_z(\cdot|s), \pi_D(\cdot|s))\right]
\\
\text{s.t.} \quad
& (1) D_{f}\!\left(\rho^{\pi_D}_{T}(s,a,s') \,\|\, 
\rho^{\pi_D}_{T^{o}}(s,a,s')\right)
\le \varepsilon,
\\
& (2) \sum_{s'} \rho^{\pi_D}_{T}(s,a,s')
    = (1-\gamma)\mu(s)\pi_D(a|s)
    + \gamma \pi_D(a|s)\!
    \sum_{\tilde{s},\tilde{a}}
    \rho^{\pi_D}_{T}(\tilde{s},\tilde{a},s)
    \quad \forall (s,a).
\end{aligned}
\label{eqn:rbfm_heavy}
\end{equation}
To solve \eqref{eqn:rbfm_heavy}, we fix the task vector $z$ (hence the corresponding induced policy $\pi_z$) and consider the corresponding dual formulation:
\begin{equation}
\begin{aligned}
\min_{Q,\tau\ge 0}\; \max_{\rho^{\pi_D}_{T} \ge 0}
&\; \E_{s \sim \rho^{\pi_D}_{T}}\!\big[L_{\pi_z}(s)\big]
- \tau \!\left(
D_f\!\left(\rho^{\pi_D}_{T}\!\left(s,a,s'\right)\,\big\|\, \rho^{\pi_D}_{T^o}\!\left(s,a,s'\right)\right) - \varepsilon
\right) \\
&\hspace{-0.5cm}-\sum_{s,a} Q(s,a)\!\left[
\sum_{s'} \rho^{\pi_D}_{T}(s,a,s') 
- (1-\gamma)\mu(s)\pi_D(a|s)
- \gamma \pi_D(a|s)\!\sum_{\tilde{s},\tilde{a}}
\rho^{\pi_D}_{T}(\tilde{s},\tilde{a},s)
\right].
\end{aligned}
\label{eqn:lagrangian_regularized_bc}
\end{equation}

where $L_{\pi_z}(s) = \mathcal{L}(\pi_z(\cdot|s), \pi_D(\cdot|s))$, $Q(s,a)\in\mathbb{R}$ is the Lagrange multiplier for the Bellman flow constraint, and $\tau\ge0$ is the multiplier for the $f$-divergence constraint. Since the inner maximization in~\eqref{eqn:rbfm_heavy} is convex and admits a strictly feasible point (e.g., $\rho^{\pi_D}_{T^o}$), Slater’s condition~\citep{boyd2004convex} implies strong duality. Thus, \eqref{eqn:lagrangian_regularized_bc} attains the same optimal value as the original problem. After algebraic simplification, we obtain:
\begin{equation}
\begin{aligned}
\min_{Q,\tau\ge 0}\; \max_{\rho^{\pi_D}_{T} \ge 0}
&\; (1-\gamma)\E_{s\sim\mu,\,a\sim\pi_D(\cdot|s)}[Q(s,a)]
- \tau\, D_{f}\!\left(\rho^{\pi_D}_{T}(s,a,s') \,\|\, \rho^{\pi_D}_{T^{o}}(s,a,s')\right) + \epsilon\tau \\
&\quad + \E_{s,a,s'\sim \rho^{\pi_D}_{T}}\!\Big[L_{\pi_z}(s)
+ \gamma \E_{a'\sim\pi_D(\cdot|s')}[Q(s',a')] - Q(s,a)\Big].
\end{aligned}
\label{eqn:simplified_lagrangian_regularized_bc}
\end{equation}

This formulation still depends on the unknown occupancy $\rho^{\pi_D}_{T}(s,a,s')$ for arbitrary $T \in \mathcal{T}(\epsilon')$, and is therefore not directly amenable to offline optimization. To address this, we introduce the importance ratio $
w(s,a,s') = \frac{\rho^{\pi_D}_{T}(s,a,s')}{\rho^{\pi_D}_{T^o}(s,a,s')}$. Using the definition of $f$-divergence, we rewrite \eqref{eqn:simplified_lagrangian_regularized_bc} as
\begin{equation}
\begin{aligned}
\min_{Q,\tau\ge 0}\; \max_{w \ge 0}\; 
& (1-\gamma)\E_{s\sim\mu,\,a\sim\pi_D(\cdot|s)}[Q(s,a)] + \epsilon\tau \\
&\quad + \E_{s,a,s'\sim \rho^{\pi_D}_{T^o}}\!\left[-\tau\, f(w(s,a,s')) + w(s,a,s')\, c_{Q,\pi_z}(s,a,s')\right],
\end{aligned}
\label{eqn:final_lagrangian_constrained_bc}
\end{equation}
where $c_{Q,\pi_z}(s,a,s') = L_{\pi_z}(s) + \gamma\E_{a'\sim\pi_D(\cdot|s')}[Q(s',a')] - Q(s,a).$

This reformulation transforms the original maximization over unknown occupancy measures $\rho^{\pi_D}_{T}$ for any $T \in \mathcal{T}(\epsilon')$ into an objective that depends only on samples from the nominal distribution $\rho^{\pi_D}_{T^o}$. As a result, the problem becomes tractable in a purely offline setting using expert demonstrations alone. We now turn to solving the inner maximization over the importance weights $w$, which admits a closed-form solution.

\begin{proposition}
\label{proposition:optimal_importance_weight}
For $\tau > 0$, the inner maximization in~\eqref{eqn:final_lagrangian_constrained_bc} admits the solution
\[
w^{\star}_{Q,\tau,\pi_z}(s,a,s') =
\max\!\left(0,\,(f')^{-1}\!\left(\frac{c_{Q,\pi_z}(s,a,s')}{\tau}\right)\right).
\]
For $\tau = 0$, $w^{\star}_{Q,\tau,\pi_z}(s,a,s') = +\infty$ if $c_{Q,\pi_z}(s,a,s')>0$ and $0$ otherwise.
\end{proposition}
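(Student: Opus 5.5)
The inner maximization in~\eqref{eqn:final_lagrangian_constrained_bc} separates across $(s,a,s')$. For fixed $Q$, $\tau$ and $z$, the terms $(1-\gamma)\E[Q]$ and $\epsilon\tau$ do not depend on $w$. The remaining term is an expectation under $\rho^{\pi_D}_{T^o}$ of an integrand that depends on $w$ only through the single value $w(s,a,s')$. Since $w\ge 0$ is an arbitrary nonnegative function with no coupling constraint, the supremum of the expectation equals the expectation of the pointwise supremum; on the support of $\rho^{\pi_D}_{T^o}$ this is a measurable selection argument, and it is trivial in the finite case. So I will solve, for each fixed $c:=c_{Q,\pi_z}(s,a,s')$, the scalar problem
\[
\max_{w\ge 0}\; h(w) := w\,c - \tau f(w).
\]

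\textbf{Case $\tau>0$.} I will use the standing assumptions on the $f$-divergence generator from Appendix~\ref{subsec:app_f_div}: $f$ is convex and differentiable, and $f'$ is strictly increasing, so $(f')^{-1}$ exists on the range of $f'$. Then $h$ is concave on $[0,\infty)$, and the KKT conditions for the constraint $w\ge0$ are necessary and sufficient. The steps are:
\begin{enumerate}[label=(\roman*)]
\item Compute the stationarity condition $h'(w)=c-\tau f'(w)=0$, which gives the unconstrained maximizer $\hat w=(f')^{-1}(c/\tau)$.
\item If $\hat w\ge 0$, it is feasible and optimal by concavity.
\item If $\hat w<0$, then since $f'$ is increasing, $f'(0)>c/\tau$, so $h'(w)<0$ for all $w\ge 0$. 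Hence $h$ is decreasing on the feasible set and the maximizer is the boundary point $w=0$.
\end{enumerate}
Together these give $w^\star=\max\bigl(0,(f')^{-1}(c/\tau)\bigr)$. Equivalently, $w^\star$ is the projection of $\hat w$ onto $[0,\infty)$, which is justified by concavity of $h$ in one dimension.

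The main obstacle is domain and range: $c/\tau$ may lie outside the range of $f'$. For example, for KL one has $f'(w)=\log w+1$, whose inverse is always positive; for $\chi^2$, $f'(w)=2(w-1)$, whose inverse can be negative. I will handle this by extending $(f')^{-1}$ monotonically: set it to $0$, or to any negative value, when $c/\tau\le \inf_{w>0} f'(w)$. This case corresponds to the boundary solution in step (iii), so the clipping formula remains valid. If $f'$ is only weakly monotone, I will note that $(f')^{-1}$ denotes any element of the (sub)differential inverse, which still gives a maximizer.

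\textbf{Case $\tau=0$.} The objective reduces to the linear function $h(w)=w\,c$. If $c>0$, it is unbounded above as $w\to\infty$, so $w^\star=+\infty$. If $c<0$, the maximum on $[0,\infty)$ is attained at $w=0$. If $c=0$, every $w$ is optimal, and I will select $w^\star=0$ to match the stated convention. Finally, I will substitute these pointwise maximizers back into the expectation to confirm that they attain the inner supremum.
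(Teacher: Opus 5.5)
Your proposal is correct and follows essentially the same route as the paper: it decomposes the expectation pointwise, uses strict concavity of $w\mapsto cw-\tau f(w)$, solves the first-order condition $f'(w)=c/\tau$ and projects onto $[0,\infty)$, and treats the objective as linear when $\tau=0$; your step (iii), the lower-end range extension and the tie-break at $c=0$ only make explicit what the paper asserts in one line. One caveat, which the paper shares: if $c/\tau$ lies above the range of $f'$ (for example $c/\tau\ge 1/2$ for the SoftTV generator used in practice), then $w\mapsto cw-\tau f(w)$ is strictly increasing on $[0,\infty)$ and has no maximizer, so the formula tacitly assumes $(f')^{-1}(c/\tau)$ is defined, and otherwise the natural convention is $w^\star=+\infty$ as in the $\tau=0$ case.
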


The proof of Proposition~\ref{proposition:optimal_importance_weight} is provided in Appendix \ref{sec:appendix_proofs}. Substituting $w^{\star}_{Q,\tau,\pi_z}$ into~\eqref{eqn:final_lagrangian_constrained_bc} reduces the problem to a minimization over $Q$ and $\tau$. This results in a bilevel-like optimization with two coupled subproblems: estimating $(Q,\tau)$ and updating the task parameters.
\begin{equation}
\begin{alignedat}{1}
\min_{Q,\tau\ge 0}\quad 
& (1-\gamma)\E_{s\sim\mu,\,a\sim\pi_D(\cdot|s)}[Q(s,a)] + \epsilon\tau \\
&\quad + \E_{(s,a,s')\sim \rho^{\pi_D}_{T^o}}\!\left[-\tau f\!\left(w^{\star}_{Q,\tau,\pi_z}(s,a,s')\right)
    + w^{\star}_{Q,\tau,\pi_z}(s,a,s')\, c_{Q,\pi_z}(s,a,s')\right] \\[3mm]
\min_{z}\quad 
& \E_{(s,a,s')\sim \rho^{\pi_D}_{T^o}}\!\left[w^{\star}_{Q,\tau,\pi_z}(s,a,s')\, L_{\pi_z}(s)\right].
\end{alignedat}
\label{eqn:final_optimization_problem}
\end{equation}
Optimization alternates between updates of $(Q,\tau)$ and the task-vector $z$ until convergence, yielding a policy $\pi_z$ that is robust to worst-case transition perturbations in the uncertainty set.

\begin{figure}[t]
\centering
\setlength{\tabcolsep}{4pt}
\renewcommand{\arraystretch}{0.0}

\begin{tabular}{>{\centering\arraybackslash}m{0.5cm}
                >{\centering\arraybackslash}m{0.30\textwidth}
                >{\centering\arraybackslash}m{0.30\textwidth}
                >{\centering\arraybackslash}m{0.30\textwidth}}

&
\textbf{\hspace{0.6cm}Tilted Gravity} &
\textbf{\hspace{0.4cm}Ground Contact Stiffness} &
\textbf{\hspace{0.5cm}Actuator Ctrlrange} \\[2pt]

\cellcolor{runblue}\rotatebox{90}{\small\textbf{Run}} &
\cellcolor{runblue}\includegraphics[width=\linewidth]{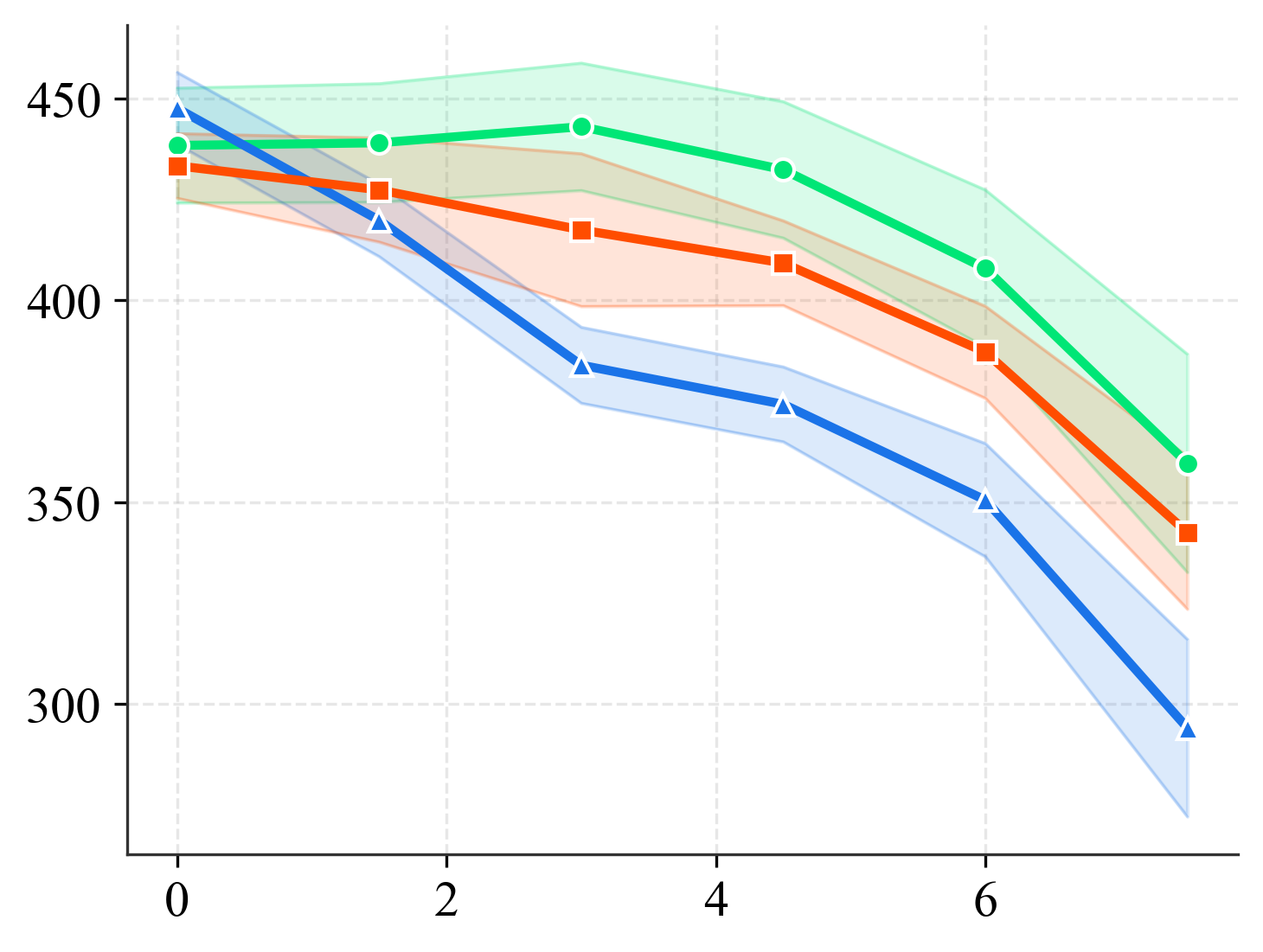} &
\cellcolor{runblue}\includegraphics[width=\linewidth]{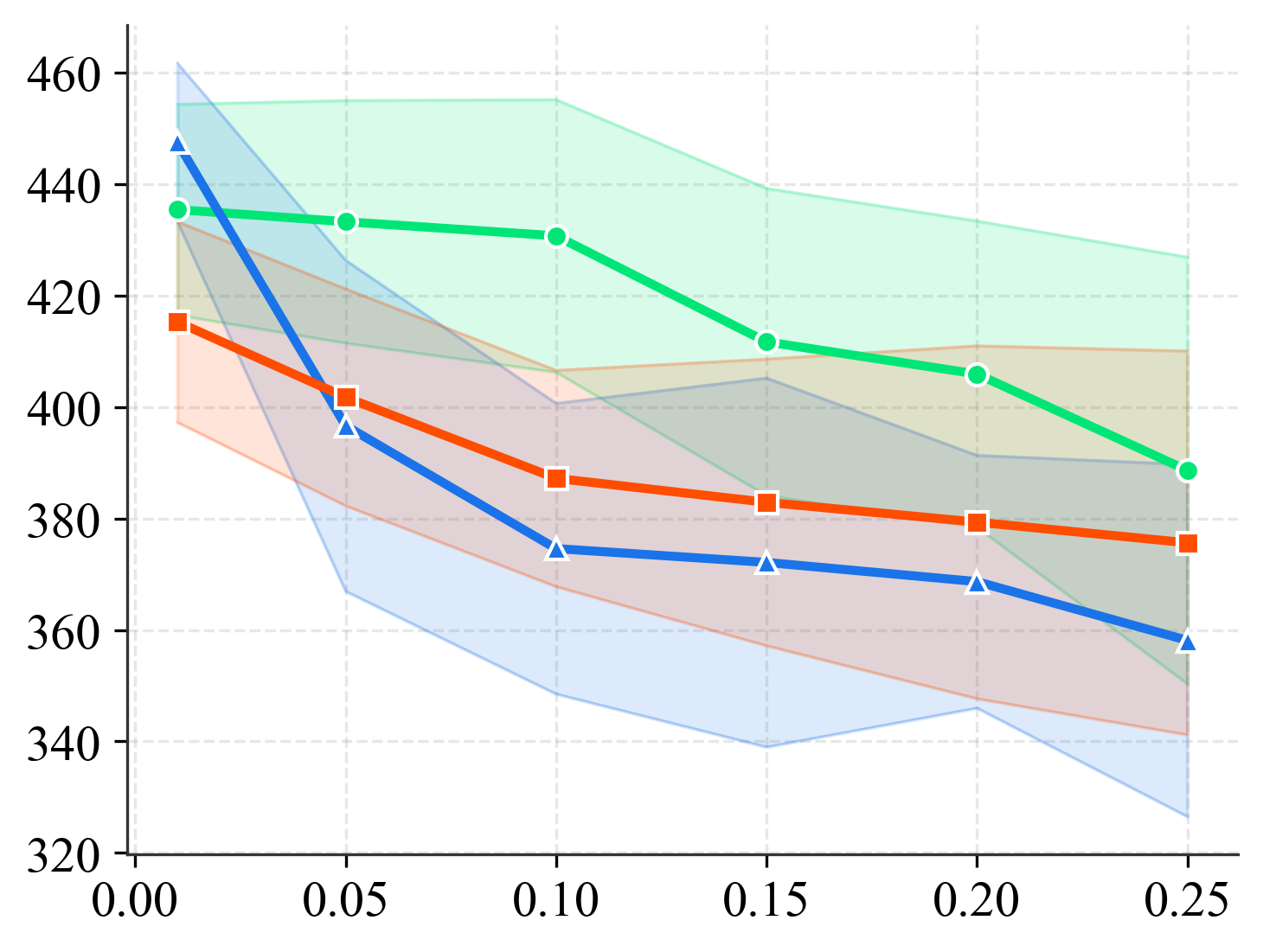} &
\cellcolor{runblue}\includegraphics[width=\linewidth]{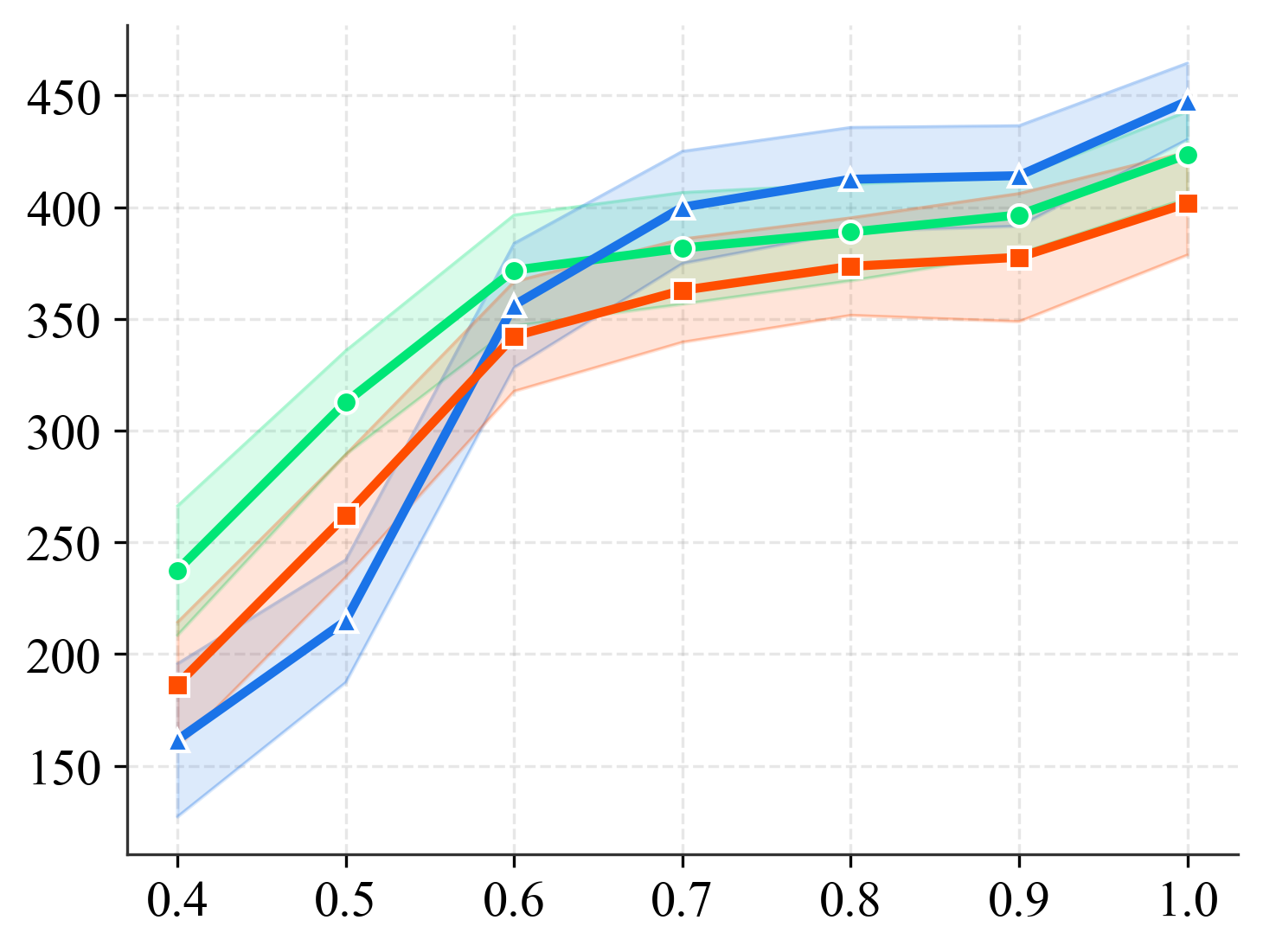} \\[1pt]

\cellcolor{walkgreen}\rotatebox{90}{\small\textbf{Walk}} &
\cellcolor{walkgreen}\includegraphics[width=\linewidth]{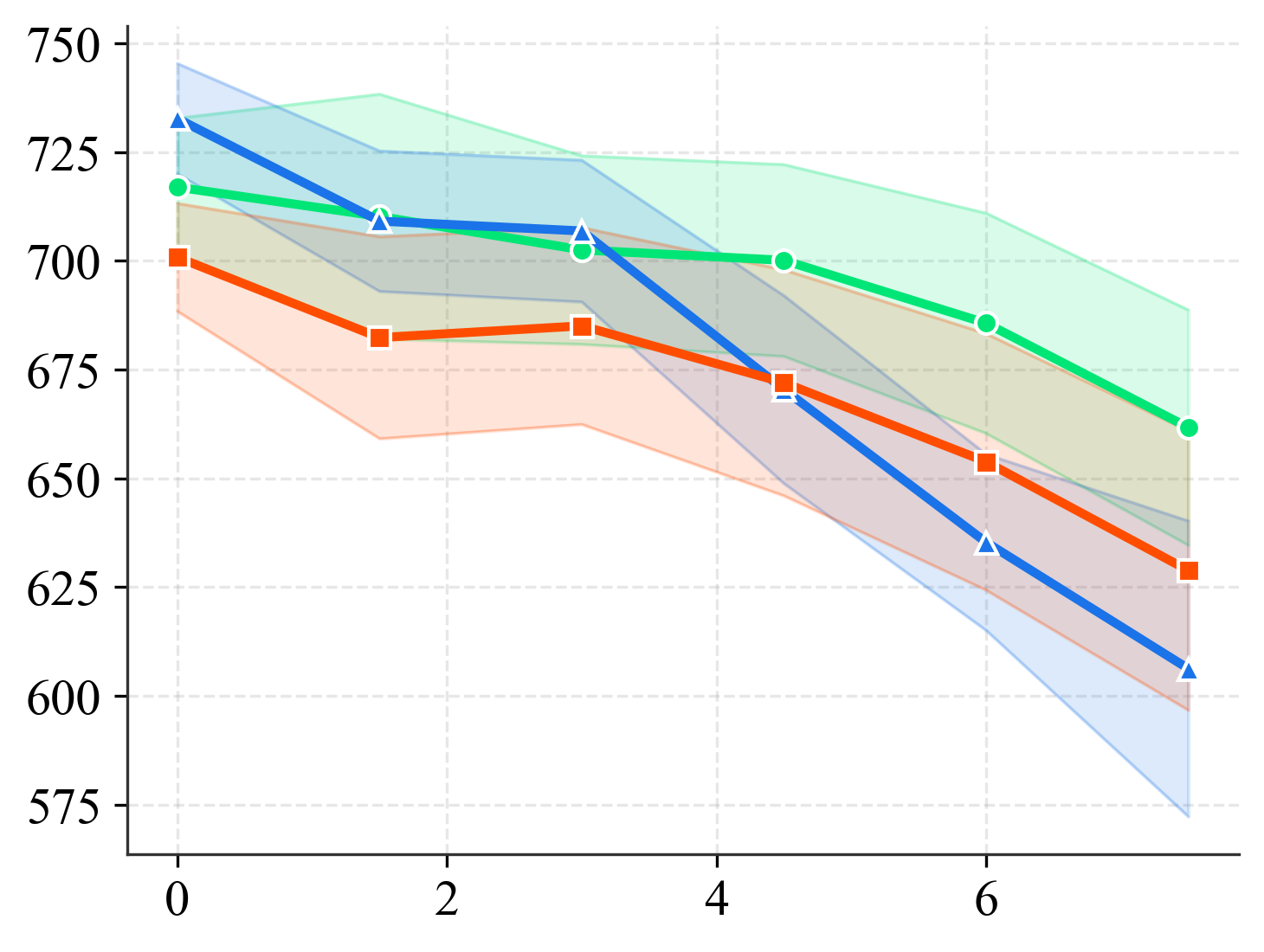} &
\cellcolor{walkgreen}\includegraphics[width=\linewidth]{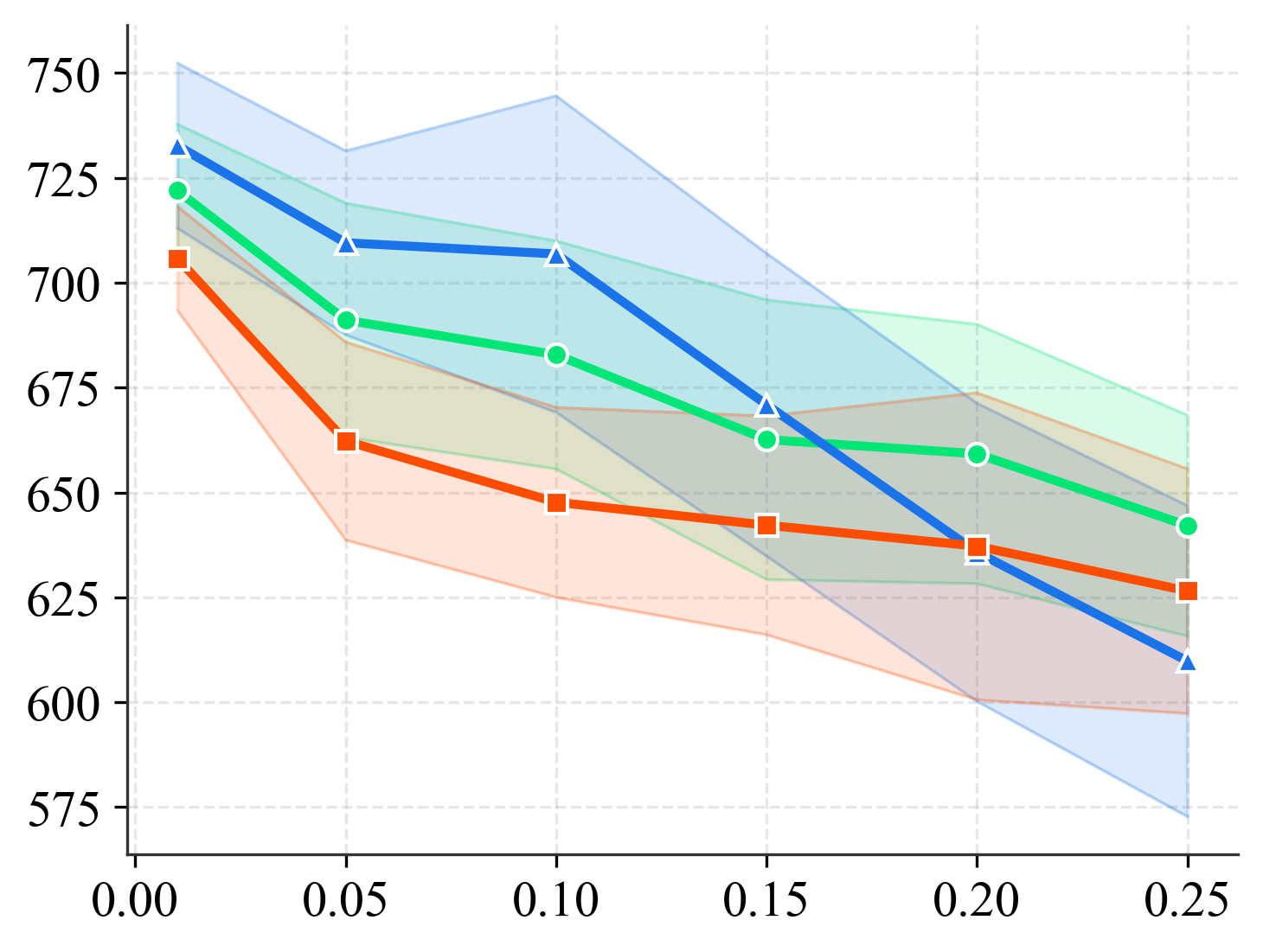} &
\cellcolor{walkgreen}\includegraphics[width=\linewidth]{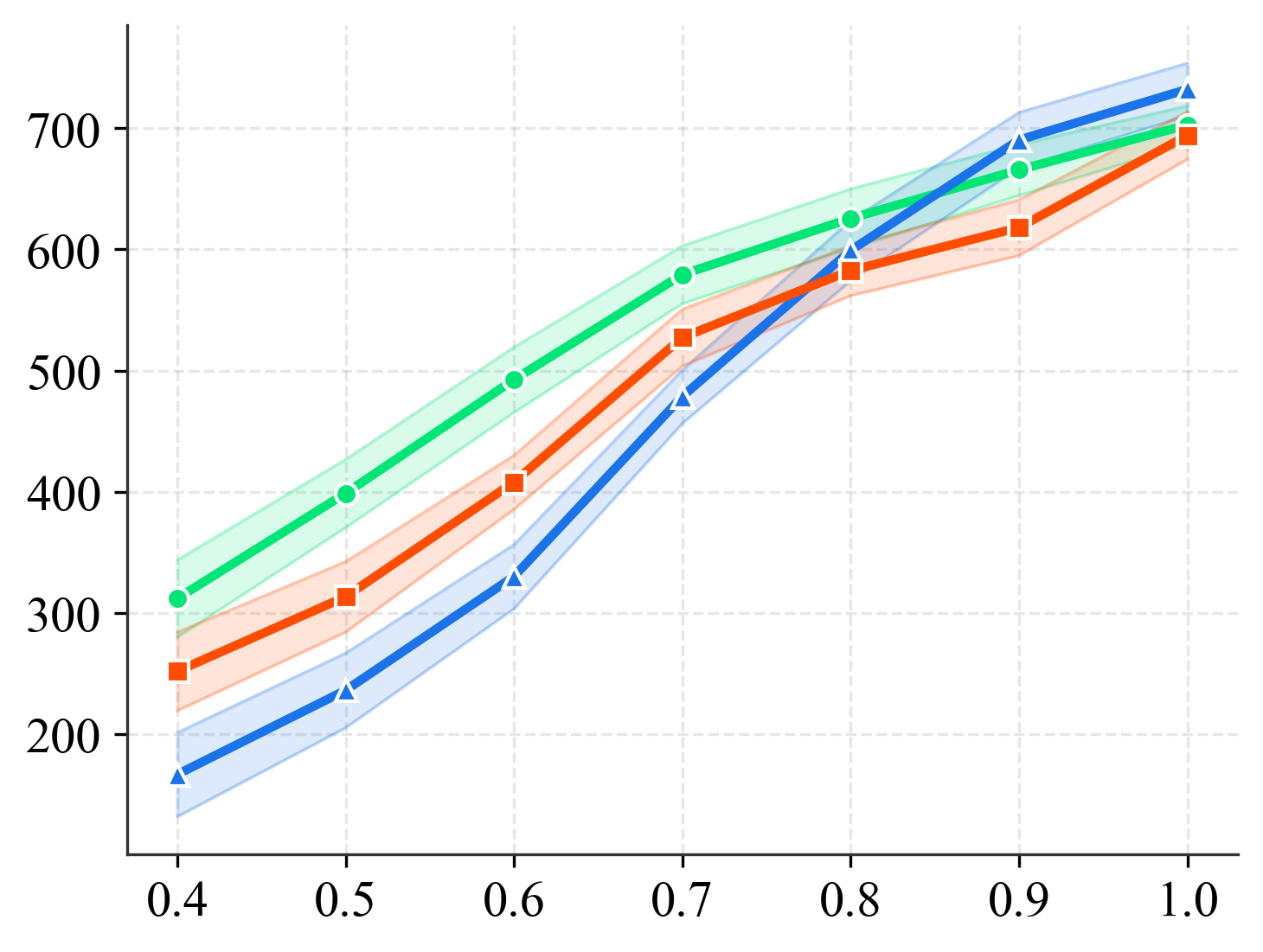} \\[1pt]

\cellcolor{fliporange}\rotatebox{90}{\small\textbf{Jump}} &
\cellcolor{fliporange}\includegraphics[width=\linewidth]{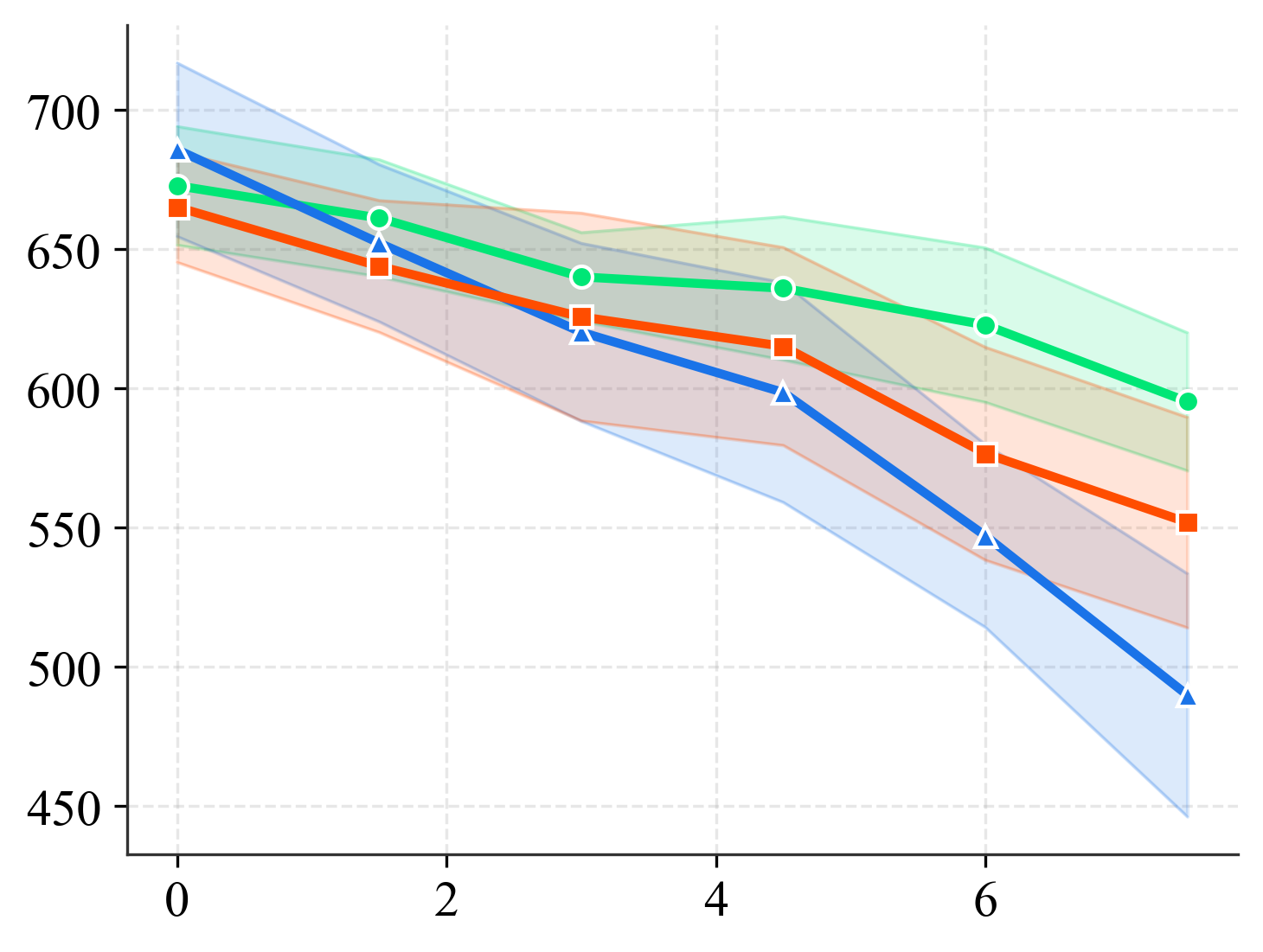} &
\cellcolor{fliporange}\includegraphics[width=\linewidth]{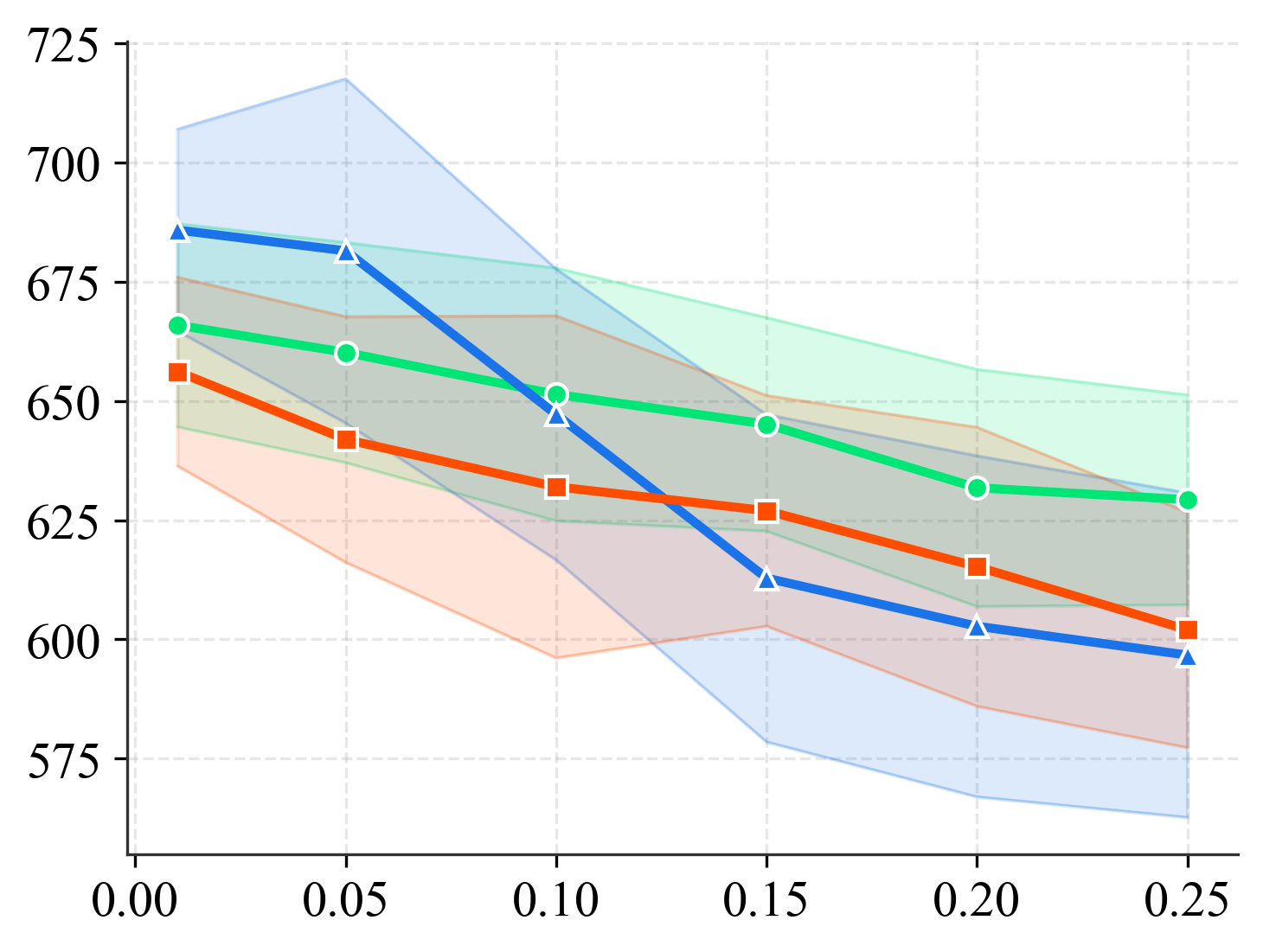} &
\cellcolor{fliporange}\includegraphics[width=\linewidth]{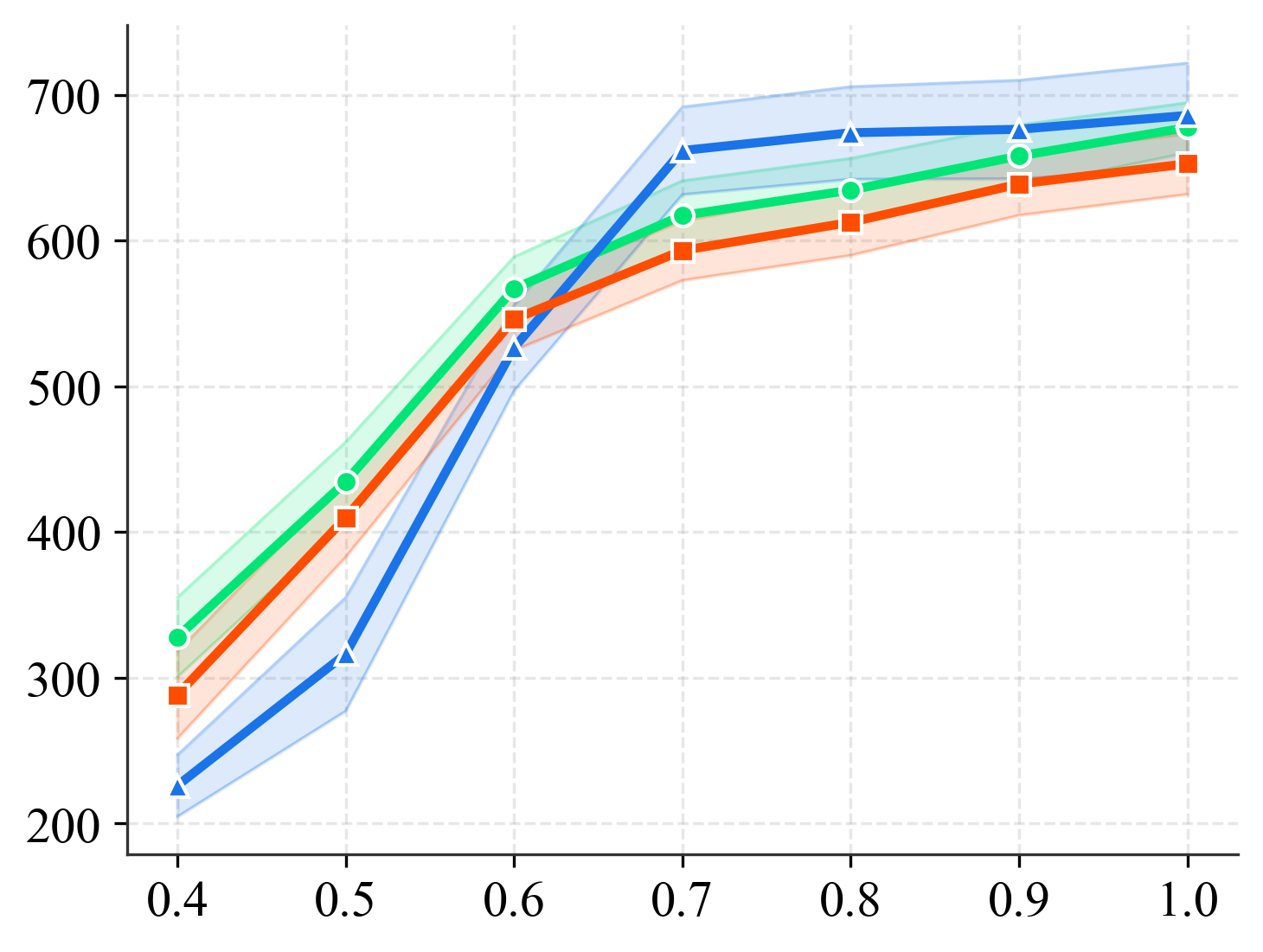} \\[1pt]

\cellcolor{standpurple}\rotatebox{90}{\small\textbf{Stand}} &
\cellcolor{standpurple}\includegraphics[width=\linewidth]{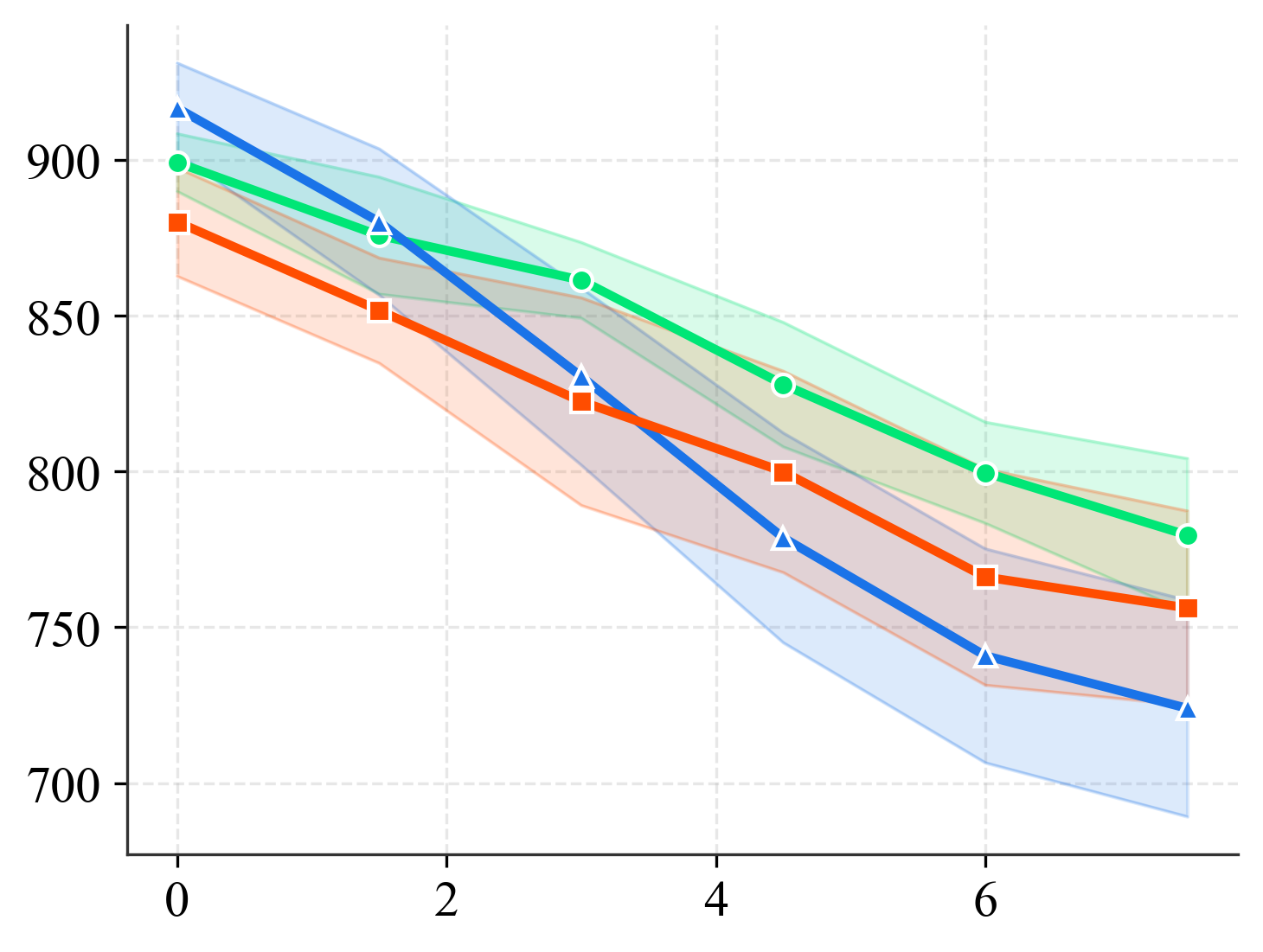} &
\cellcolor{standpurple}\includegraphics[width=\linewidth]{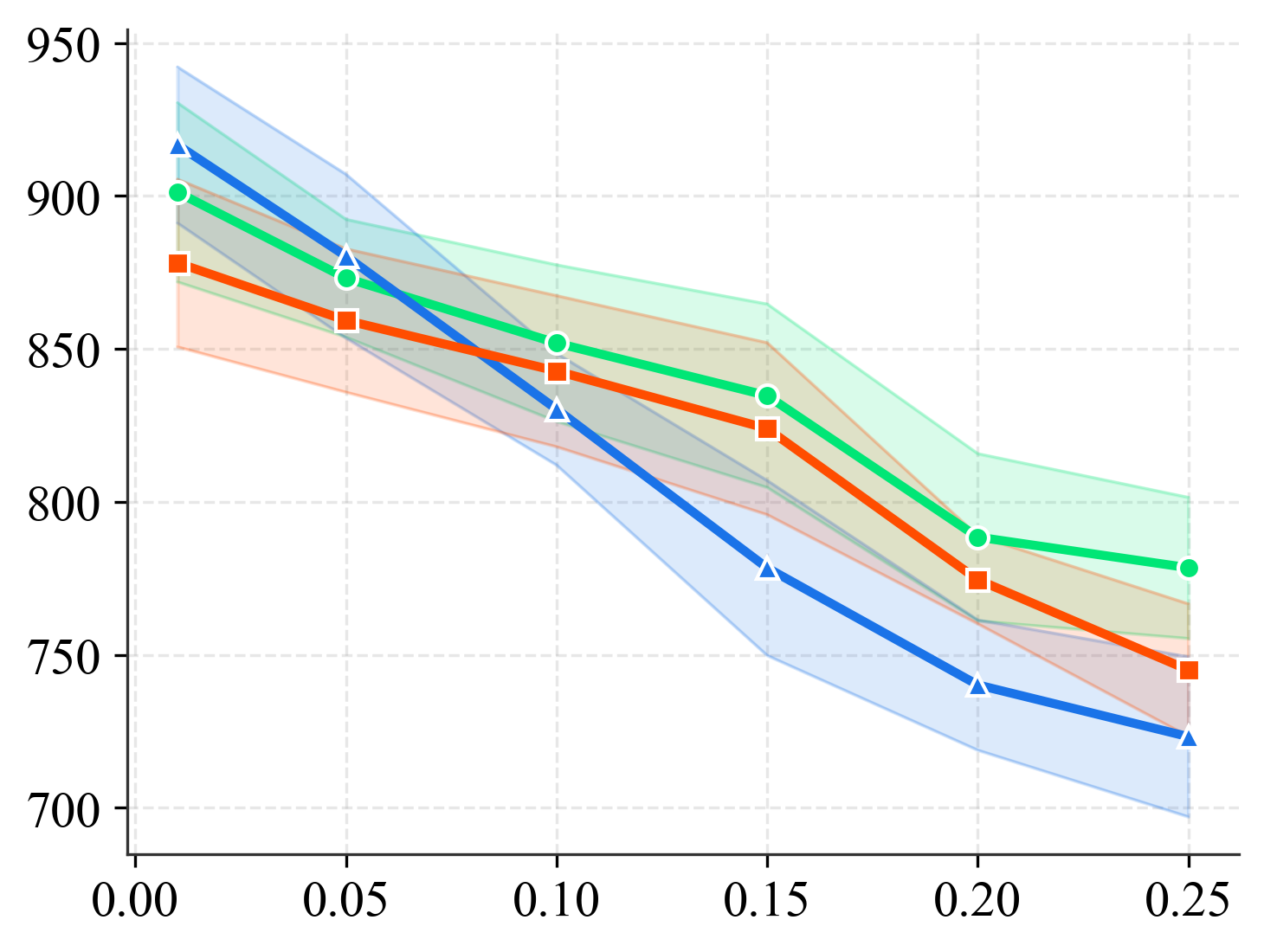} &
\cellcolor{standpurple}\includegraphics[width=\linewidth]{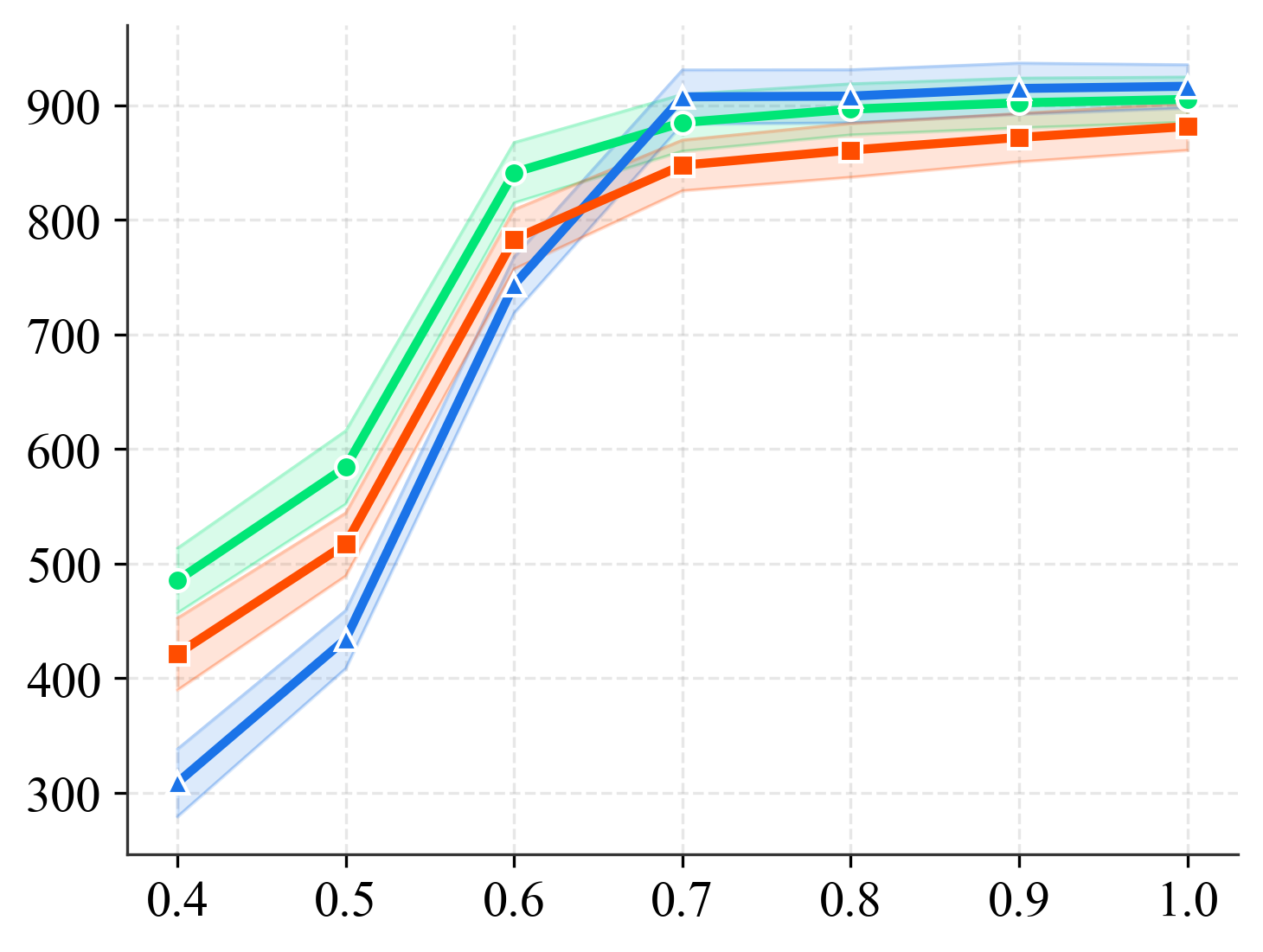} \\
\end{tabular}

\vspace{0.2cm}
\begin{minipage}{0.95\textwidth}
\centering
\footnotesize
\legendcircle{rbfmheavy}{RBFM-Heavy}\hspace{3.0em}
\legendtriangle{fbil}{FB-IL}\hspace{3.0em}
\legendsquare{rbfmlight}{RBFM-Light}
\end{minipage}

\caption{Quadruped performance under dynamics perturbations (95\% Confidence Interval), pretrained on Quadruped RND dataset. Rows: tasks; columns: perturbation types with absolute values of the perturbed parameter. 
Nominal conditions correspond to the leftmost point for the first two perturbations and the rightmost for actuator control range. Average return on y-axis.}
\label{fig:quadruped}
\end{figure}

\section{Experiments} \label{sec:experiments}

We evaluate RBFM-Light (\S\ref{subsec:rbfm_lite}) and RBFM-Heavy (\S\ref{subsec:rbfm_heavy}) on robustness to dynamics perturbations, seeking answers to five questions: \textbf{(Q1)} How do RBFM-Light and RBFM-Heavy compare with FB-IL under dynamics perturbations? \textbf{(Q2)} How do they compare with single-task robust offline IL algorithms? \textbf{(Q3)} Are RBFM-Light and RBFM-Heavy computationally efficient? \textbf{(Q4)} How does the quality and quantity of pretraining data affect robustness? \textbf{(Q5)} How does the robustness parameter $\varepsilon$ affect performance?

\noindent\textbf{Setup.} We evaluate on the ExORL benchmark~\citep{yarats2022don}, which provides datasets collected via unsupervised exploration on the DeepMind Control Suite~\citep{tassa2018deepmind}. Following prior BFM works~\citep{pirotta2024fast, jeen2024zero}, we use pretraining datasets from RND~\citep{burda2018exploration}, APS~\citep{liu2021aps}, and PROTO~\citep{yarats2021reinforcement}. Following~\citep{rupf2025optimistic}, experiments span three domains with twelve tasks: \textit{Walker} (Stand, Walk, Run, Flip), \textit{Quadruped} (Stand, Walk, Run, Jump), and \textit{Cheetah} (Walk, Run, Walk\_Backward, Run\_Backward). For each domain, we introduce physics perturbations: gravity, body mass, and joint friction loss for Walker; tilted gravity, ground contact stiffness, and actuator control range for Quadruped; and actuator strength, joint friction loss, and range of motion for Cheetah. For Walker, we evaluate all combinations of three pretraining datasets, yielding $36$ tasks. For Quadruped and Cheetah, we use RND-pretrained data only, for a total of $60$ tasks across all domains. A pretrained BFM generates $200$ expert trajectories per task; imitation learning uses $4$ randomly sampled trajectories per task. Performance is measured as average return over $100$ evaluation episodes, averaged across $5$ random seeds per environment--task--perturbation combination. We compare against FB-IL for (Q1), and additionally against DRBC~\citep{panaganti2023distributionally} and BE-DROIL~\citep{agrawal2025balance} for (Q2). Full experimental setup details are provided in Appendix~\ref{sec:appendix_exp_setup}.

\noindent\textbf{Implementation Details.} For all BFM-based methods, we follow~\citet{pirotta2024fast} and warm-start $z$ via $z_w = \mathbb{E}_{\tau}\!\left[\frac{\sum_{t\geq0}B(s_{t+1})}{l(\tau)}\right]$. The term $(1-\gamma)\mathbb{E}_{s \sim \mu,\, a \sim \pi_D(\cdot|s)}[Q(s,a)]$ in~\eqref{eqn:final_optimization_problem} requires sampling from the initial-state distribution $\mu$. Since each expert dataset contains only $4$ initial states, we follow~\citet{agrawal2025balance} and treat all trajectory states as effective initial states to improve coverage and stabilize estimation. Complete implementation details are provided in Appendix~\ref{sec:appendix_implementation_details}.

\begin{figure}[t]
\centering
\begin{subfigure}{0.24\textwidth}
    \centering
    \includegraphics[width=\linewidth]{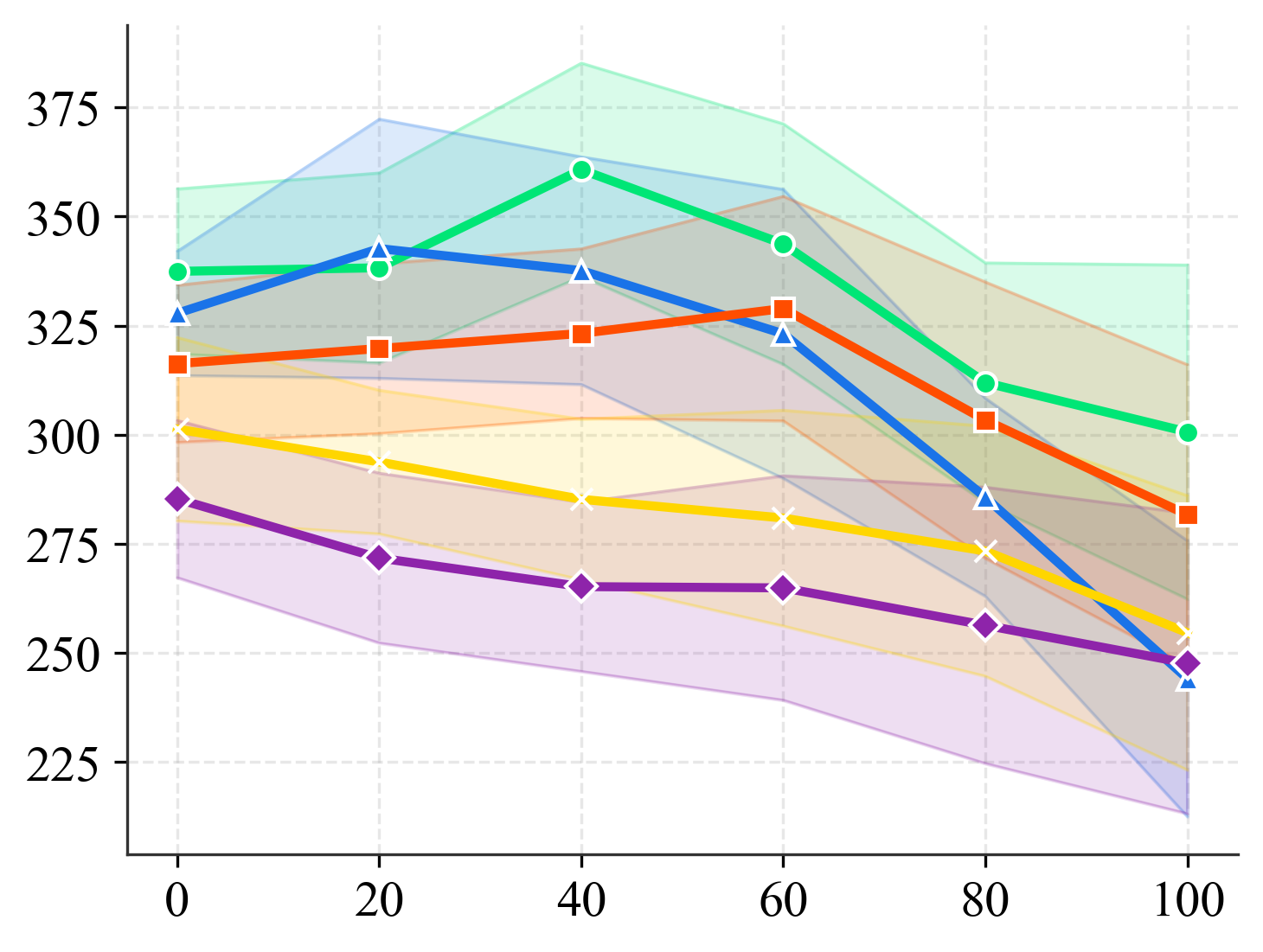}
    \caption{Run}
\end{subfigure}
\hfill
\begin{subfigure}{0.24\textwidth}
    \centering
    \includegraphics[width=\linewidth]{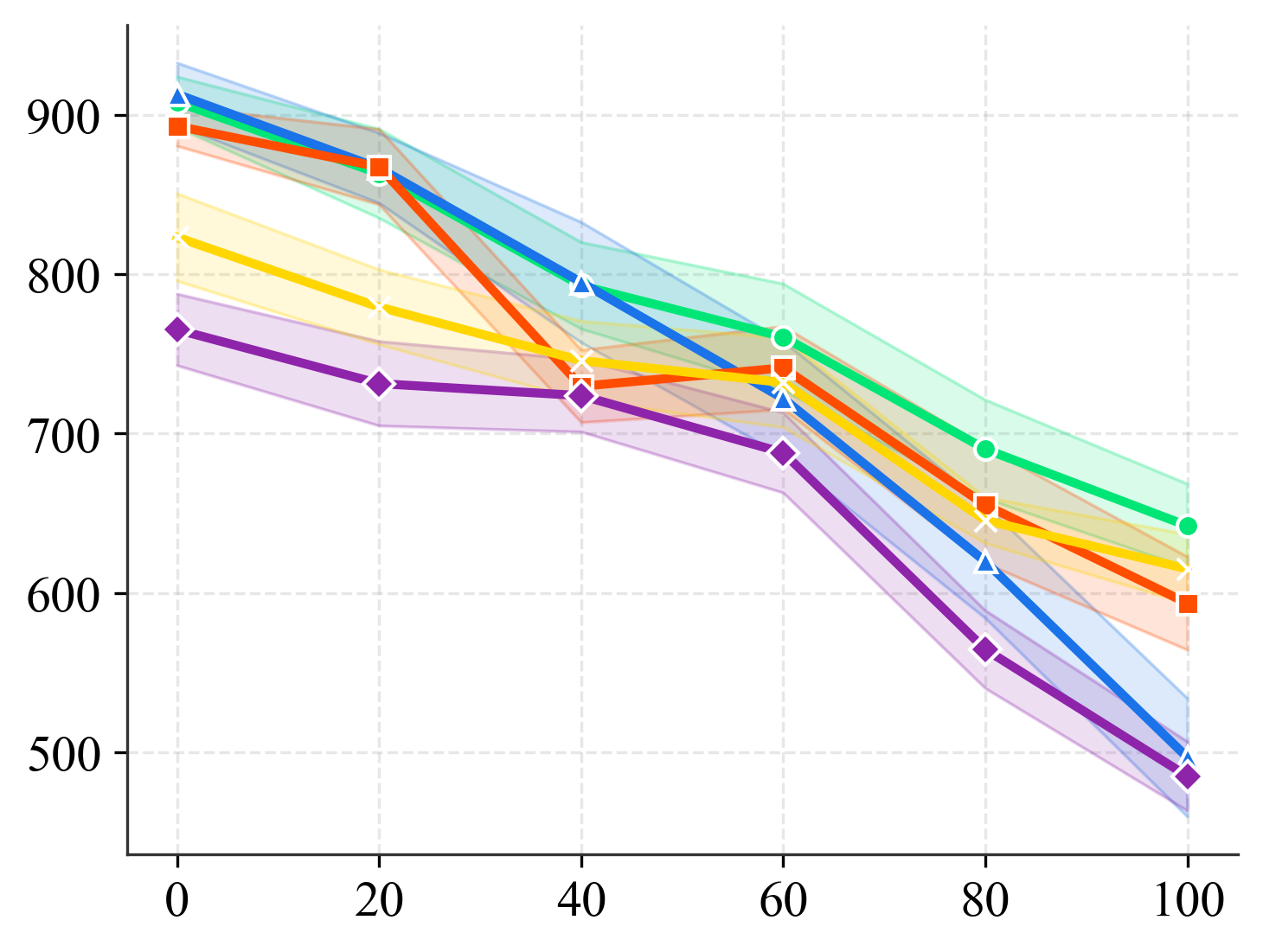}
    \caption{Walk}
\end{subfigure}
\hfill
\begin{subfigure}{0.24\textwidth}
    \centering
    \includegraphics[width=\linewidth]{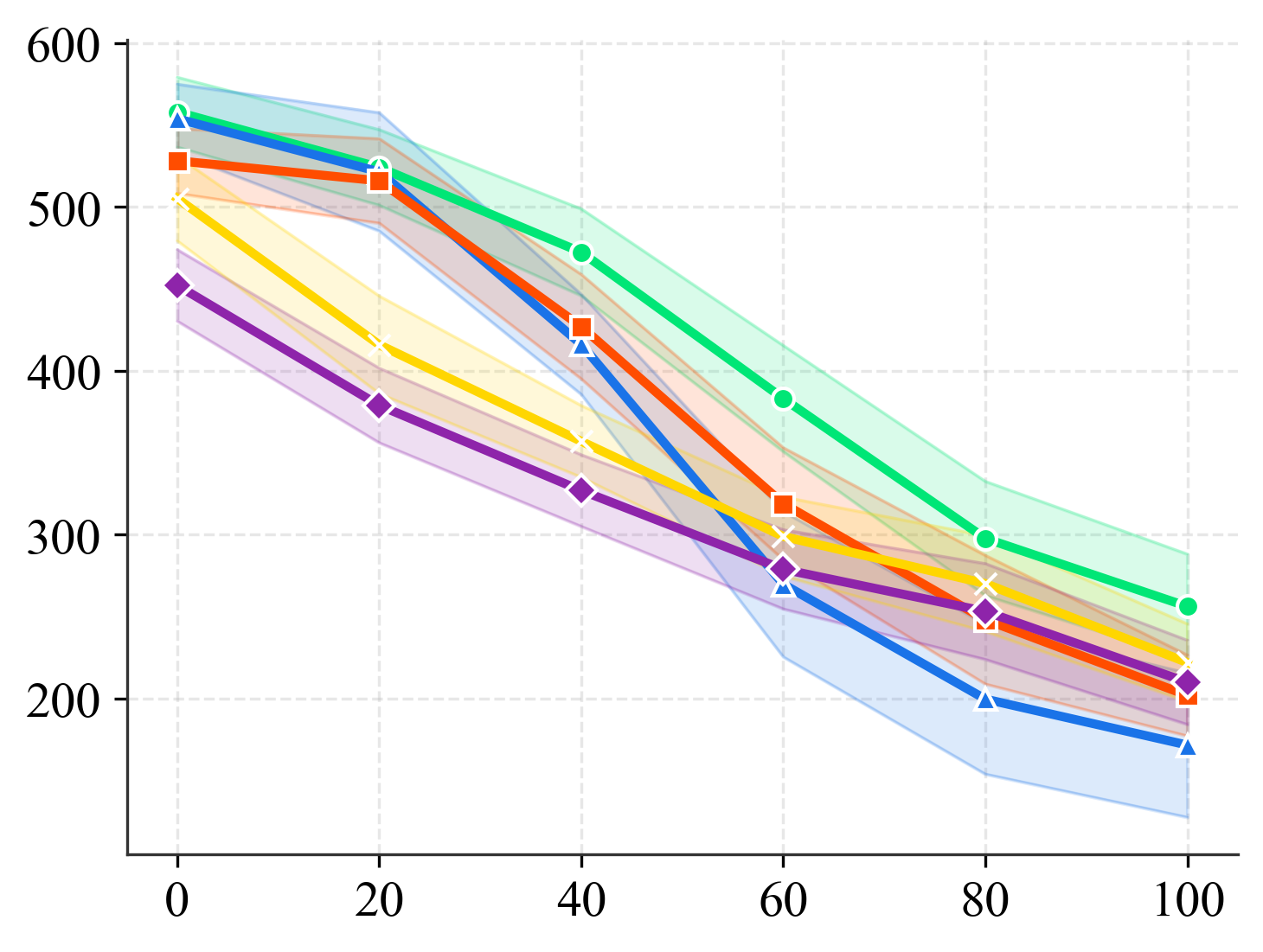}
    \caption{Flip}
\end{subfigure}
\hfill
\begin{subfigure}{0.24\textwidth}
    \centering
    \includegraphics[width=\linewidth]{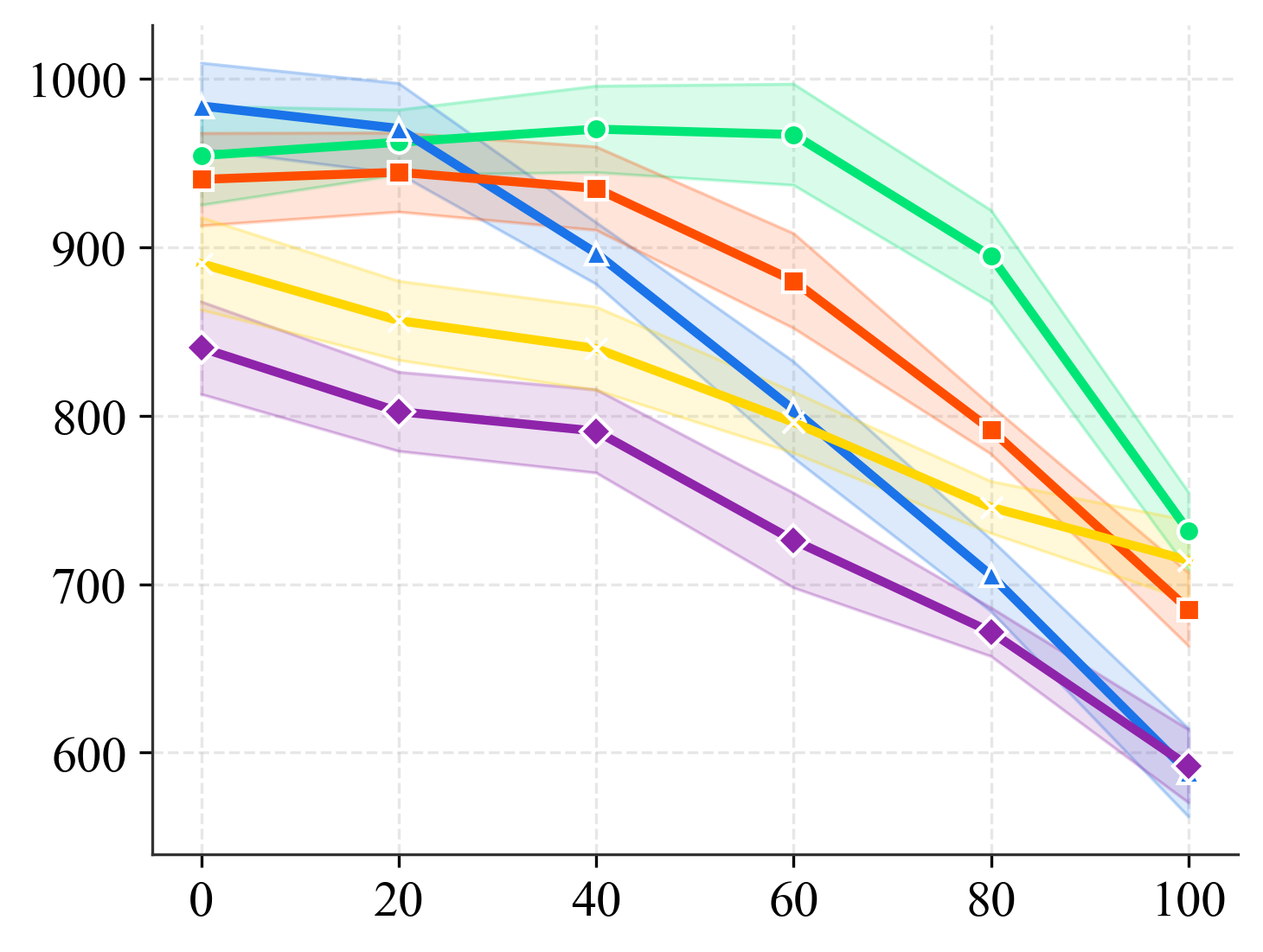}
    \caption{Stand}
\end{subfigure}
\vspace{0.2cm}
\begin{minipage}{0.95\textwidth}
\centering
\footnotesize
\legendcircle{rbfmheavy}{RBFM-Heavy}\hspace{1.2em}
\legendtriangle{fbil}{FB-IL}\hspace{1.2em}
\legendsquare{rbfmlight}{RBFM-Light}\hspace{1.2em}
\legenddiamond{drbc}{DRBC}\hspace{1.2em}
\legendcross{bedroil}{BE-DROIL}
\end{minipage}
\caption{Average return (y-axis) vs.\ body mass perturbation (x-axis, \% increase from nominal) on the Walker environment across four tasks, pretrained on RND dataset. 
}
\label{fig:walker_comparison_with_baselines}
\end{figure}

\noindent\textbf{Results.} \textbf{(Q1)} Across all three morphologies and perturbation types, both RBFM variants consistently outperform FB-IL as perturbation severity increases, with RBFM-Heavy achieving the highest returns, RBFM-Light intermediate, and FB-IL exhibiting the steepest degradation across the majority of settings (Figure~\ref{fig:quadruped} for Quadruped; Walker and Cheetah in Figures~\ref{fig:walker} and~\ref{fig:cheetah}, Appendix~\ref{subsec:app_q1_discussion}). 
\begin{wraptable}{r}{0.26\textwidth}
\centering
\renewcommand{\arraystretch}{1.1}
\begin{tabular}{@{}l c@{}}
\toprule
\textbf{Algorithm} & \textbf{Time} \\
\midrule
DRBC       & $5\text{h}\,23\text{m}$ \\
BE-DROIL   & $6\text{h}\,10\text{m}$ \\
FB-IL      & $1\text{m}$ \\
RBFM-Light & $5\text{m}$ \\
RBFM-Heavy & $10\text{m}$ \\
\bottomrule
\end{tabular}
\vspace{0.3em}
\caption{Per-task inference time averaged across environments and tasks.}
\label{tab:il_time}
\vspace{-0.35cm}
\end{wraptable}
The robustness gap widens monotonically with perturbation magnitude, consistent with our robust objectives: by optimizing $z$ against a worst-case distribution within an uncertainty set around the empirical expert distribution, RBFM explicitly accounts for transition distribution shift at evaluation time. RBFM-Heavy outperforms RBFM-Light because its realizability constraint yields a less conservative solution, whereas RBFM-Light's unconstrained objective over-penalizes transitions unlikely under the true perturbation distribution; FB-IL, optimizing solely under the empirical expert distribution, carries no margin for model mismatch and degrades most severely. These findings hold across perturbation types spanning distinct physical failure modes, motor degradation, joint resistance, kinematic restriction, gravitational disturbance, and terrain compliance, demonstrating that RBFM's robustness benefits generalize across morphologies and perturbation types (see Appendix~\ref{subsec:app_q1_discussion} for per-environment details).

\noindent\textbf{(Q2)} Figure~\ref{fig:walker_comparison_with_baselines} compares RBFM variants and FB-IL against single-task distributionally robust offline IL baselines DRBC and BE-DROIL on the Walker environment under body mass perturbation. At nominal and moderate perturbation levels, all BFM-based methods substantially outperform the single-task baselines, an advantage attributable to the multi-task pretraining of the FB models, which induces a task-agnostic representation that generalizes across reward structures, consistent with the nominal-setting findings of~\citep{pirotta2024fast}. At higher perturbation levels, BE-DROIL matches or marginally outperform FB-IL, demonstrating that per-task distributional robustness does confer an advantage over non-robust BFM inference when dynamics shifts are non-trivial. Crucially, the proposed RBFM framework that combines multi-task BFM pretraining with robust task inference, maintains consistent superiority over both FB-IL and the single-task baselines throughout the full perturbation sweep.


\begin{figure}[t]
    \centering
    \begin{subfigure}[t]{0.325\textwidth}
        \centering
        \includegraphics[width=\linewidth]{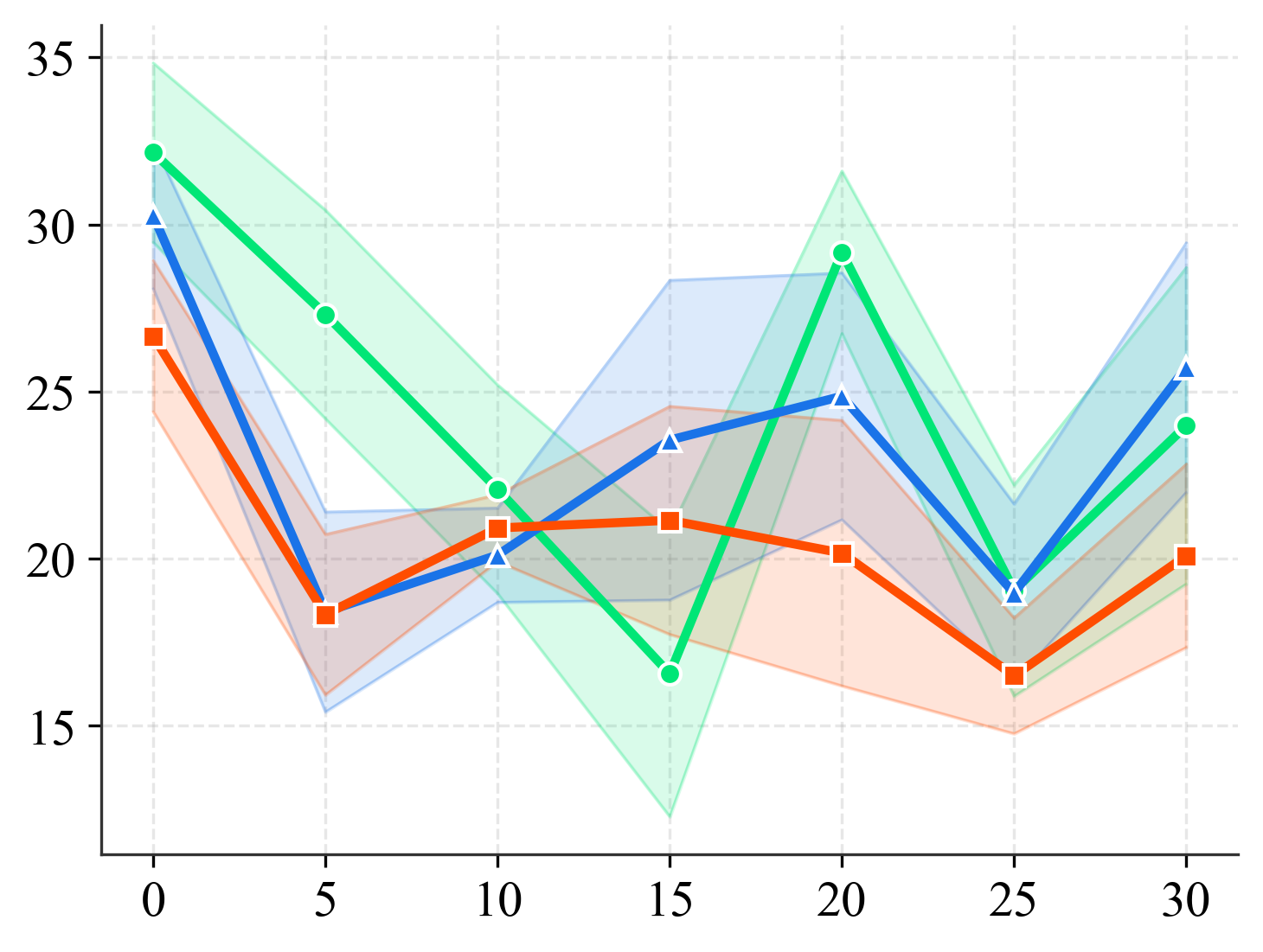}
        \caption{100k}
    \end{subfigure}\hfill
    \begin{subfigure}[t]{0.325\textwidth}
        \centering
        \includegraphics[width=\linewidth]{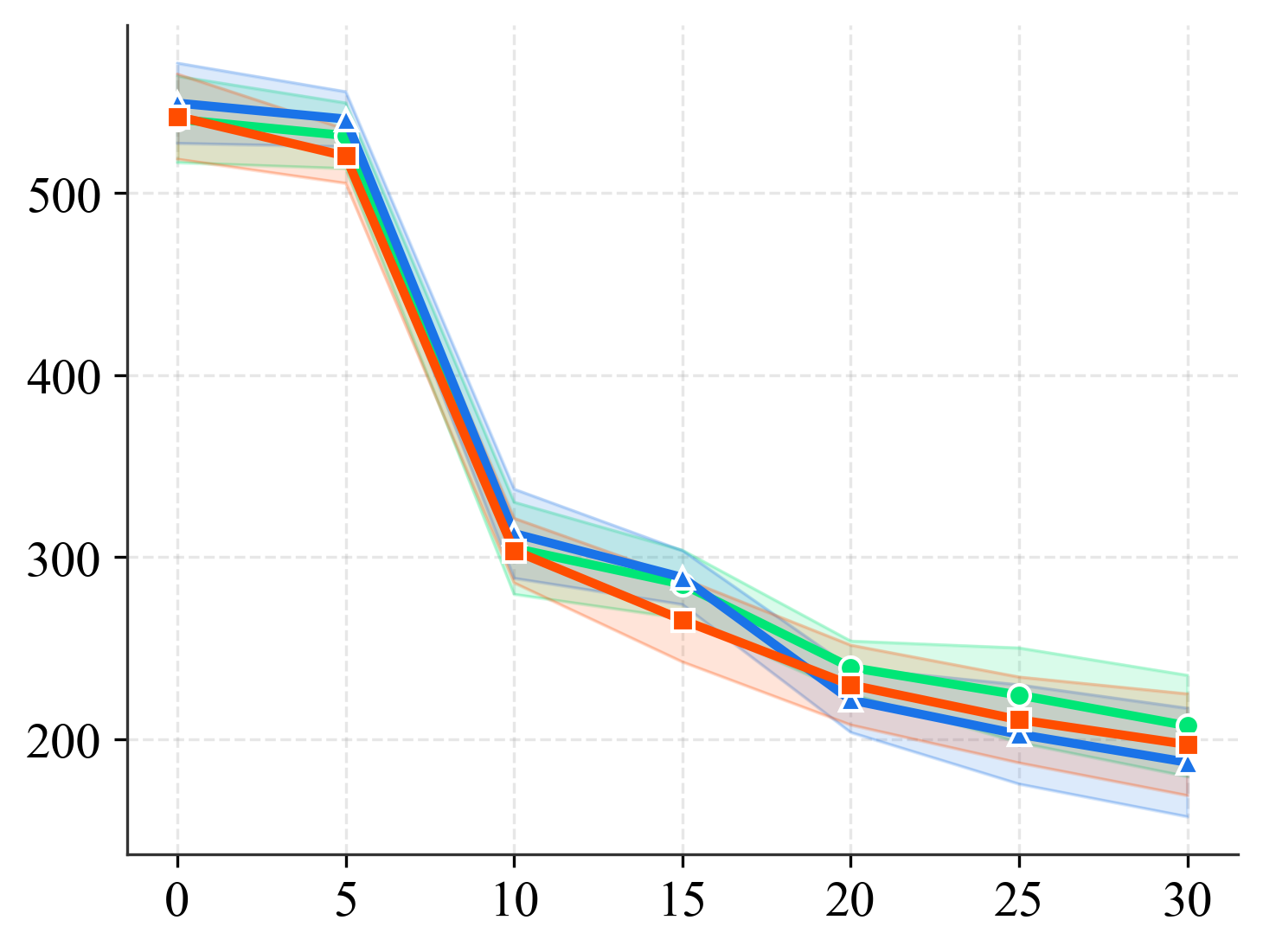}
        \caption{500k}
    \end{subfigure}\hfill
    \begin{subfigure}[t]{0.325\textwidth}
        \centering
        \includegraphics[width=\linewidth]{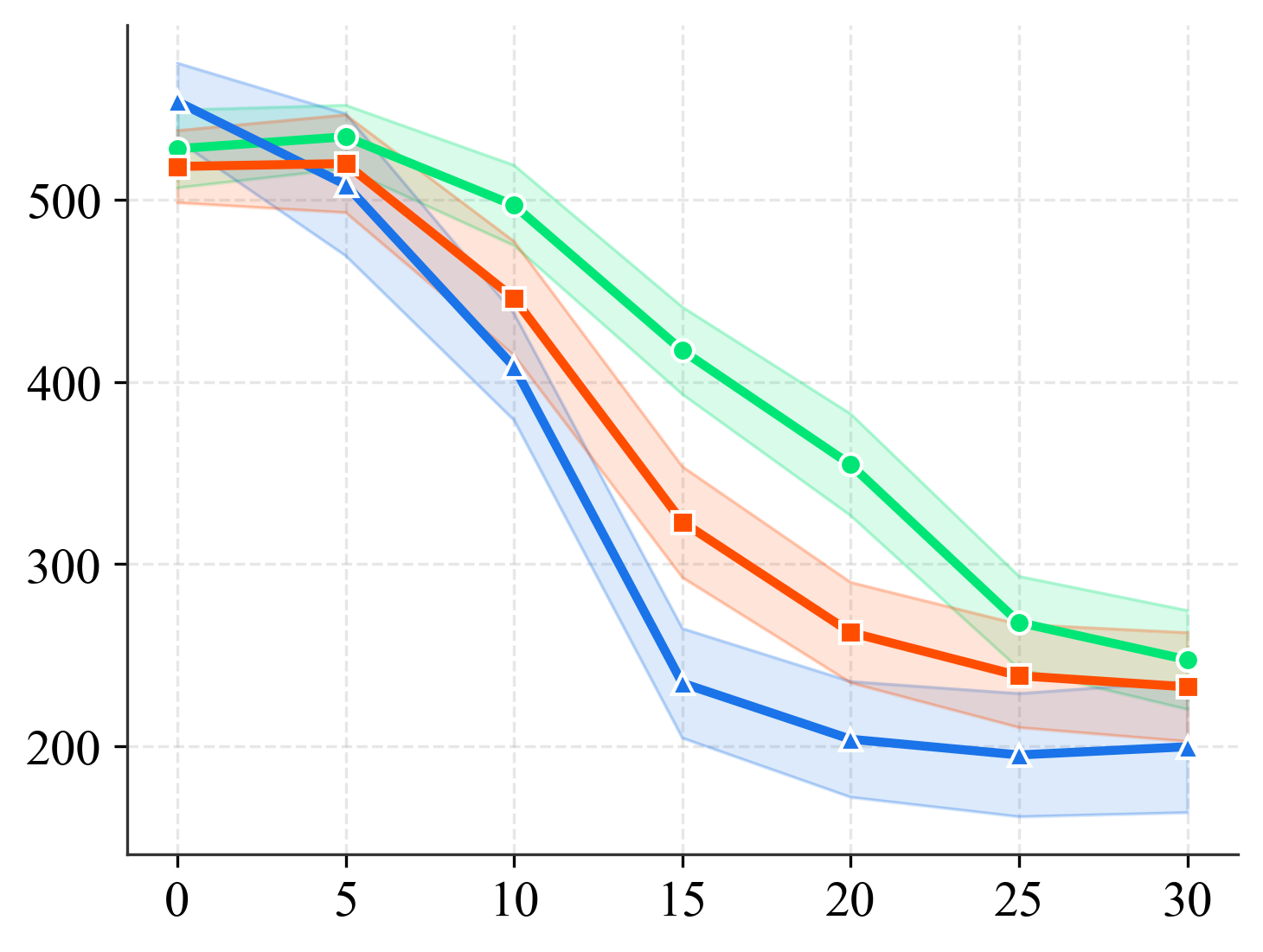}
        \caption{1M}
    \end{subfigure}

    \vspace{0.2cm}

    \begin{minipage}{0.95\textwidth}
        \centering
        \footnotesize
        \legendcircle{rbfmheavy}{RBFM-Heavy}\hspace{1.2em}
        \legendtriangle{fbil}{FB-IL}\hspace{1.2em}
        \legendsquare{rbfmlight}{RBFM-Light}
    \end{minipage}

    \caption{Average return (y-axis) vs.\ joint friction loss perturbation (x-axis, absolute Nm per joint) on Walker flip task across three RND pretraining dataset sizes ($100$k, $500$k, $1$M transitions).
    }
\label{fig:walker_comparison_with_varying_pretrain_datasize}
\end{figure}

\noindent\textbf{(Q3)} Table~\ref{tab:il_time} reports wall-clock time to compute an imitation policy for a single task, averaged across environments and tasks. Single-task robust baselines (DRBC: $5\text{h}\,23\text{m}$; BE-DROIL: $6\text{h}\,10\text{m}$) incur substantial per-task optimization cost, whereas BFM-based methods amortize representation learning over a one-time pretraining phase, reducing task-specific inference to minutes: FB-IL ($1\text{m}$), RBFM-Light ($5\text{m}$), RBFM-Heavy ($10\text{m}$). Since pretraining is a fixed cost shared across all downstream tasks, per-task inference time is the operationally relevant deployment metric and RBFM variants are orders of magnitude faster than single-task robust baselines while achieving superior robustness (Q1 \& Q2), demonstrating that RBFM attains distributional robustness without compromising the computational efficiency inherent to the BFM framework.

\begin{wrapfigure}{r}{0.37\columnwidth}
\centering
\includegraphics[width=\linewidth]{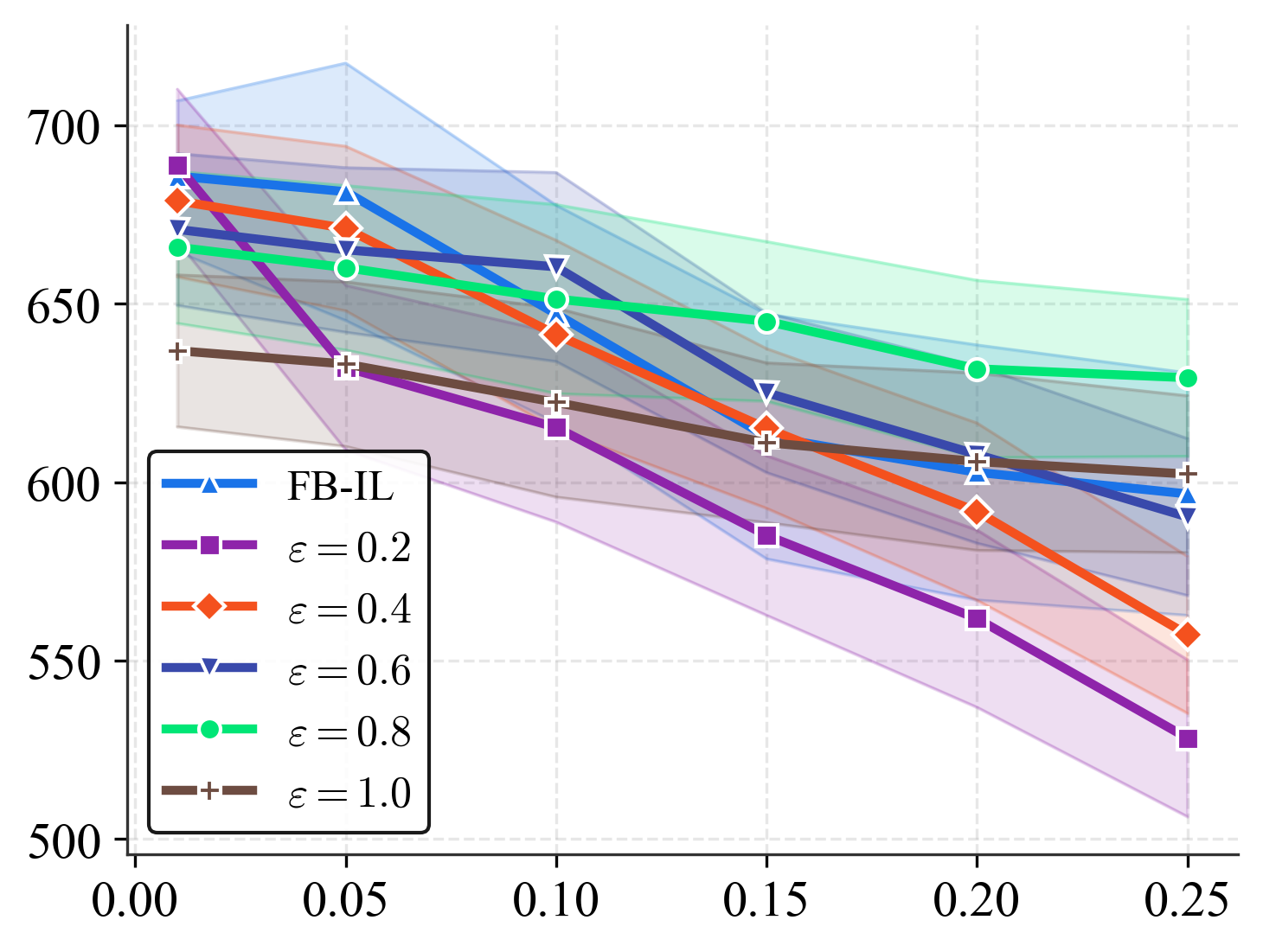}
\caption{Average return (y-axis) vs.\ ground contact stiffness (x-axis, absolute perturbation) on Quadruped jump under varying $\varepsilon$ for RBFM-Heavy.}
\label{fig:epsilon_perturbation}
\vspace{-2em}
\end{wrapfigure}

\noindent\textbf{(Q4)} We investigate two axes of pretraining data quality: source and quantity. For data source, we evaluate on Walker using APS- and PROTO-generated datasets (Appendix~\ref{subsec:app_different_pre_source}, Figures~\ref{fig:walker_aps} and~\ref{fig:walker_proto}); trends are consistent with (Q1), confirming robustness rankings across pretraining sources. For data quantity, we vary pretraining dataset size across 100k, 500k, and 1M transitions on the Walker flip task under joint friction loss (Figure~\ref{fig:walker_comparison_with_varying_pretrain_datasize}). At 100k, performance is inconsistent with no clear method separation; at 500k trends begin to stabilize; by 1M a consistent separation emerges. This progression underscores that robust task inference critically depends on the quality of the learned FB representations, which require sufficient pretraining data to become expressive.

\noindent\textbf{(Q5)} We vary $\varepsilon \in \{0.2, 0.4, 0.6, 0.8, 1.0\}$ and evaluate RBFM-Heavy on the Quadruped jump task under ground contact stiffness perturbations (Figure~\ref{fig:epsilon_perturbation}). Smaller $\varepsilon$ yields higher nominal returns but sharp degradation under perturbation, as the uncertainty set fails to cover the perturbed transition distribution; larger $\varepsilon$ induces more gradual decay at the cost of modest nominal performance loss. The value $\varepsilon=0.8$ achieves the best trade-off and is used in all experiments.

\section{Related Work}
Offline imitation learning is commonly approached via Behavioral Cloning (BC) \citep{firstBC}, which treats imitation as supervised learning but suffers from covariate shift and compounding errors due to ignoring environment dynamics \citep{bcLimitation1}. The DICE family \citep{Kostrikov2020Imitation,kim2022demodice,mao2024odice} incorporates transition structure by estimating occupancy ratios from offline data. In parallel, robust reinforcement learning addresses transition uncertainty via worst-case optimization over ambiguity sets \citep{nilim2005robust,iyengar2005robust,derman2020distributional,yu2023fast,pmlr-v283-panaganti25a}, but typically requires rewards and online interaction. Offline robust RL \citep{panaganti2022robust} removes interaction but still relies on known rewards, limiting applicability to imitation learning.

Robust imitation learning has largely focused on imperfect demonstrations or training stability \citep{pmlr-v97-wu19a,pmlr-v130-tangkaratt21a,pmlr-v78-laskey17a,pmlr-v15-ross11a}, while domain randomization and meta-IL methods assume access to simulators or fixed dynamics \citep{peng2018sim,huang2021generalization,finn2017one}. Recent approaches improve BC robustness under distributional shift \citep{wu2025robust,seo2024mitigating}, but do not address transition uncertainty. Among methods that do, DRBC \citep{panaganti2023distributionally} and BE-DROIL \citep{agrawal2025balance} formulate robust offline imitation learning under transition ambiguity. However, these approaches operate in a per-task manner, requiring separate optimization for each task. This limits their applicability in multi-task settings, where rapid adaptation to new tasks is essential, thereby motivating the need for a unified and robust framework capable of generalizing across tasks. We refer to Appendix \ref{sec:appendix_related_work} for an extended discussion on related works.

\section{Conclusion}

We revisited Behavior Foundation Models (BFMs) under dynamics shift and showed that standard task inference is inherently brittle to transition mismatch. To address this, we proposed a robust minimax formulation over an uncertainty set of dynamics, and introduced two practical algorithms, \textit{RBFM-Light} and \textit{RBFM-Heavy}, trading off efficiency and robustness without modifying pretraining or requiring multi-environment data. Empirically, both variants significantly improve robustness over standard BFMs and outperform single-task robust offline IL baselines, while preserving fast task inference. Our results highlight a promising direction for scalable imitation learning, where a single pretrained model can be robustly adapted to new tasks using only offline data.


\bibliographystyle{plainnat} 
\bibliography{references_neurips_2026}



\newpage
\appendix


\appendixtoc

\newpage
\appsection{Missing Proofs}\label{sec:appendix_proofs}

\begin{customlem}{1}
For any policy $\pi$ and transition kernel $T \in \mathcal{T}(\varepsilon')$, the following holds:
\begin{equation*}
    D_{\mathrm{TV}}(\rho^\pi_T(s), \rho^\pi_{T^o}(s)) \le \frac{\gamma \varepsilon'}{1 - \gamma}.
\end{equation*}
\end{customlem}
\begin{proof}
    See Lemma $7$ in \citep{panaganti2023distributionally} for details.   
\end{proof}

\begin{lemma}
For any policy $\pi$ and transition kernel $T \in \mathcal{T}(\rho')$, the following holds:
\begin{equation*}
    D_{\mathrm{TV}}(\rho^\pi_T(s,a), \rho^\pi_{T^o}(s,a)) \le \frac{\gamma \varepsilon'}{1 - \gamma}.
\end{equation*}
\label{lemma:D_TV_mismatch_s_a}
\end{lemma}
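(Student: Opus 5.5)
The plan is to reduce the state--action bound to Lemma~\ref{lemma:D_TV_mismatch_s}. The key observation is that the policy $\pi$ is the same under both kernels. Hence both occupancy measures factor through the same conditional action distribution:
\begin{equation*}
\rho^\pi_T(s,a) = \rho^\pi_T(s)\,\pi(a|s), \qquad \rho^\pi_{T^o}(s,a) = \rho^\pi_{T^o}(s)\,\pi(a|s).
\end{equation*}
To justify this I would marginalize the definition of $\rho^\pi_{T}(s,a,s')$ over $s'$. In each summand, conditioning on $s_t=s$ gives $\mathbb{P}(a_t=a\mid s_t=s)=\pi(a|s)$ for every $t$. Pulling $\pi(a|s)$ out of the discounted sum then leaves exactly $\rho^\pi_T(s)$. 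This holds for an arbitrary kernel $T$, and in particular for $T^o$.

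With the factorization in hand, I would write out the total variation directly:
\begin{equation*}
D_{\mathrm{TV}}(\rho^\pi_T(s,a),\rho^\pi_{T^o}(s,a)) = \tfrac12\sum_{s,a}\pi(a|s)\,\bigl|\rho^\pi_T(s)-\rho^\pi_{T^o}(s)\bigr| = \tfrac12\sum_s \bigl|\rho^\pi_T(s)-\rho^\pi_{T^o}(s)\bigr|.
\end{equation*}
The second equality uses $\sum_a \pi(a|s)=1$ for each $s$. The right-hand side equals $D_{\mathrm{TV}}(\rho^\pi_T(s),\rho^\pi_{T^o}(s))$. Lemma~\ref{lemma:D_TV_mismatch_s} bounds this by $\gamma\varepsilon'/(1-\gamma)$, which gives the claim. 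In the continuous case the same argument goes through with sums replaced by integrals and Fubini applied.

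One point needs care. The statement writes $\mathcal{T}(\rho')$, which I read as a typo for $\mathcal{T}(\varepsilon')$; with that reading Lemma~\ref{lemma:D_TV_mismatch_s} applies verbatim.

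The only potential obstacle is the factorization step, and that depends on how $\rho^\pi_T(s)$ is normalized. Summing over $(a,s')$ in the paper's definition reproduces exactly the state occupancy used in Lemma~\ref{lemma:D_TV_mismatch_s}, including the $(1-\gamma)$ factor and the starting index $t=0$, so no constant changes. If the cited lemma instead uses a shifted-index convention, I would re-derive it directly. In that case I would compare $\rho_T$ and $\rho_{T^o}$ through the fixed-point equation $\rho=(1-\gamma)\mu+\gamma\,\rho P^\pi_T$. Here $\rho P^\pi_T$ denotes the state distribution after one step: draw a state from $\rho$, an action from $\pi$, and the next state from $T$. Subtracting the two fixed-point equations and bounding the one-step kernel mismatch by $\varepsilon'$ gives $\|\rho_T-\rho_{T^o}\|_1\le \gamma\|\rho_T-\rho_{T^o}\|_1+2\gamma\varepsilon'$, which rearranges to the same constant.
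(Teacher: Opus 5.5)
Your proposal is correct and matches the paper's proof. The paper also writes $\rho^\pi_T(s,a)=\pi(a|s)\rho^\pi_T(s)$, sums out $a$ using $\sum_a\pi(a|s)=1$ to reduce the state--action TV to the state TV, and then invokes Lemma~\ref{lemma:D_TV_mismatch_s}. Your reading of $\mathcal{T}(\rho')$ as a typo for $\mathcal{T}(\varepsilon')$ is right; the paper's own proof even ends with $\gamma\rho'/(1-\gamma)$. Your fixed-point fallback is also a valid self-contained derivation of the state-level bound, which the paper only cites.
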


\begin{proof}
We build upon the Lemma \ref{lemma:D_TV_mismatch_s} that quantifies how uncertainty in the transition model influences the induced occupancy measure on the state-space for a fixed policy. 

Since $\rho^\pi_T(s,a) = \pi(a|s) \rho^\pi_T(s)$, it follows that
\begin{align*}
D_{\mathrm{TV}}(\rho^\pi_T(s,a), \rho^\pi_{T^o}(s,a))
&= \frac{1}{2}\sum_{s,a} |\rho^\pi_T(s,a) - \rho^\pi_{T^o}(s,a)| \\
&= \frac{1}{2}\sum_{s,a} \pi(a|s) |\rho^\pi_T(s) - \rho^\pi_{T^o}(s)| \\
&= D_{\mathrm{TV}}(\rho^\pi_T(s), \rho^\pi_{T^o}(s)) \\
&\le \frac{\gamma \rho'}{1 - \gamma}.
\end{align*}
\end{proof}

\begin{customlem}{2}
For any policy $\pi$ and transition kernel $T \in \mathcal{T}(\varepsilon')$, the following holds:
\begin{equation*}
    D_{\mathrm{TV}}(\rho^\pi_T(s,a,s'), \rho^\pi_{T^o}(s,a,s')) \le \frac{\varepsilon'}{1 - \gamma}.
\end{equation*}
\end{customlem}
\begin{proof}
    We have
\[
\rho^\pi_T(s,a,s') = \rho^\pi_T(s,a) T_{s,a}(s'), \quad
\rho^\pi_{T^o}(s,a,s') = \rho^\pi_{T^o}(s,a) T^o_{s,a}(s').
\]
Then,
\begin{align*}
&\| \rho^\pi_T(s,a,s') - \rho^\pi_{T^o}(s,a,s') \|_1 \\
&= \sum_{s,a,s'} \left| \rho^\pi_T(s,a) T_{s,a}(s') - \rho^\pi_{T^o}(s,a) T^o_{s,a}(s') \right| \\
&\le \sum_{s,a,s'} \rho^\pi_T(s,a) \left| T_{s,a}(s') - T^o_{s,a}(s') \right|
   + \sum_{s,a,s'} \left| \rho^\pi_T(s,a) - \rho^\pi_{T^o}(s,a) \right| T^o_{s,a}(s').
\end{align*}

For the first term, using $\|T_{s,a} - T^o_{s,a}\|_1 = 2 D_{\mathrm{TV}}(T_{s,a}, T^o_{s,a}) \le 2\varepsilon'$,
\[
\sum_{s,a,s'} \rho^\pi_T(s,a) \left| T_{s,a}(s') - T^o_{s,a}(s') \right|
= \sum_{s,a} \rho^\pi_T(s,a) \|T_{s,a} - T^o_{s,a}\|_1
\le 2\varepsilon'.
\]

For the second term, since $\sum_{s'} T^o_{s,a}(s') = 1$, we have
\begin{equation*}
\begin{aligned}
\sum_{s,a,s'} \left| \rho^\pi_T(s,a) - \rho^\pi_{T^o}(s,a) \right| T^o_{s,a}(s')
&= \sum_{s,a} \left| \rho^\pi_T(s,a) - \rho^\pi_{T^o}(s,a) \right|\\
&= \| \rho^\pi_T(s,a) - \rho^\pi_{T^o}(s,a) \|_1.
\end{aligned}
\end{equation*}
By lemma \ref{lemma:D_TV_mismatch_s_a},
\[
D_{\mathrm{TV}}(\rho^\pi_T(s,a), \rho^\pi_{T^o}(s,a)) \le \frac{\gamma \varepsilon'}{1 - \gamma},
\]
which implies
\[
\| \rho^\pi_T(s,a) - \rho^\pi_{T^o}(s,a) \|_1 \le \frac{2\gamma\varepsilon'}{1 - \gamma}.
\]

Combining both terms,
\[
\| \rho^\pi_T(s,a,s') - \rho^\pi_{T^o}(s,a,s') \|_1
\le 2\varepsilon' + \frac{2\gamma\varepsilon'}{1 - \gamma}
= \frac{2\varepsilon'}{1 - \gamma}.
\]

Hence,
\[
D_{\mathrm{TV}}\big(\rho^\pi_T(s,a,s'), \rho^\pi_{T^o}(s,a,s')\big)
= \frac{1}{2} \| \rho^\pi_T(s,a,s') - \rho^\pi_{T^o}(s,a,s') \|_1
\le \frac{\varepsilon'}{1 - \gamma}.
\]
\end{proof}

\begin{lemma}
Let $D_f$ be the f-divergence with $f(t) = \lvert t - 1 \rvert / 2$ corresponding to the total variation (TV) uncertainty set. Then,
\begin{equation}
\min_{D_f(P \,\|\, P_o) \le \rho} \; \mathbb{E}_{P}[l(X)]
=
- \min_{\eta \in \mathbb{R}} \;
\mathbb{E}_{P_o}\big[(\eta - l(X))_+\big]
+ \rho \, (\eta - \min_{x \in \mathcal{X}} l(x))_+ - \eta.
\end{equation}
\label{lem:f_div}
\end{lemma}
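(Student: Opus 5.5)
We prove the equivalent maximization form and then flip signs. Setting $g=-l$ gives $\min_{D_f(P\|P_o)\le\rho}\mathbb{E}_P[l]=-\sup_{D_f(P\|P_o)\le\rho}\mathbb{E}_P[g]$. Since $\min_x l(x)=-\sup_x g(x)$, the claim is equivalent (after renaming $\eta\mapsto-\eta$) to
\[
\sup_{D_{\mathrm{TV}}(P,P_o)\le\rho}\mathbb{E}_P[g]=\min_{\eta\in\mathbb{R}}\Big\{\mathbb{E}_{P_o}\big[(g-\eta)_+\big]+\rho\big(\sup_{x\in\mathcal{X}}g(x)-\eta\big)_+ +\eta\Big\}.
\]
This is exactly the shape of the dual used in Proposition~\ref{prop:light_sol}. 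We prove it by a two-sided argument: weak duality (``$\le$'') by a pointwise inequality, and attainment (``$\ge$'') by an explicit greedy worst-case $P$.

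\textbf{Upper bound.} Fix $\eta$ and a feasible $P$, and decompose $P=wP_o+\nu$ with $w\ge0$ and $\nu\ge0$ singular to $P_o$. The constraint reads
\[
\tfrac12\big(\mathbb{E}_{P_o}|w-1|+\nu(\mathcal X)\big)\le\rho,
\]
and normalization gives $\mathbb{E}_{P_o}[w]+\nu(\mathcal X)=1$. Write
\[
\mathbb{E}_P[g]=\eta+\mathbb{E}_{P_o}\big[w(g-\eta)\big]+\int(g-\eta)\,d\nu .
\]
Split $w=1+(w-1)_+-(w-1)_-$. On the absolutely continuous part, $\mathbb{E}_{P_o}[w(g-\eta)]\le \mathbb{E}_{P_o}[(g-\eta)_+]+\mathbb{E}_{P_o}[(w-1)_+(g-\eta)]$, after dropping the negative contributions (where $g<\eta$ we use $w(g-\eta)\le0$; where $g\ge\eta$ we use $w\le 1+(w-1)_+$). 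The added mass is $A:=\mathbb{E}_{P_o}[(w-1)_+]+\nu(\mathcal X)$. Normalization forces $A=\mathbb{E}_{P_o}[(w-1)_-]$, so the TV constraint gives $A\le\rho$. The added mass sits where $g\le\sup g$, so its contribution is at most $A(\sup g-\eta)_+\le\rho(\sup g-\eta)_+$. This is the only place the singular part matters, and it is why the supremum is over all of $\mathcal X$ rather than over the support of $P_o$. Taking the infimum over $\eta$ gives ``$\le$''.

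\textbf{Lower bound (attainment).} Pick $\eta^\star$ as the upper $\rho$-quantile of $g$ under $P_o$, i.e.\ with $P_o(g>\eta^\star)\le\rho\le P_o(g\ge\eta^\star)$, using randomization on ties. Build $P^\star$ by removing $\rho$ mass of $P_o$ from the region $\{g\le\eta^\star\}$ lowest in $g$ is \emph{not} what we want. Instead, remove mass $\rho$ from the lowest-$g$ region of $P_o$ and place it at (a sequence of points approaching) $\arg\sup g$. Then $D_{\mathrm{TV}}(P^\star,P_o)=\rho$. Checking the first-order condition of the convex piecewise-linear function of $\eta$ then gives equality of the primal value with the dual value: the subgradient $1-P_o(g>\eta)-\rho\,\mathbf 1\{\sup g>\eta\}$ contains $0$ at the appropriate quantile. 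If the supremum is not attained, pass to a limit along near-maximizers.

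\textbf{Main obstacle.} The delicate step is the lower bound, specifically matching the dual minimizer to the primal construction. Two cases need care. First, $\rho\ge1$ or mass exhausting, where the $(\cdot)_+$ clips. Second, atoms of $g$ under $P_o$. As a fallback, we would derive the result from the general $f$-divergence duality
\[
\sup\mathbb{E}_P[g]=\inf_{\lambda\ge0,\eta}\big\{\lambda\rho+\eta+\lambda\,\mathbb{E}_{P_o}\big[f^*\big((g-\eta)/\lambda\big)\big]\big\},
\]
using $f^*(s)=\max(s,-\tfrac12)$ for $s\le\tfrac12$ and $+\infty$ otherwise, together with the singular-part constraint $\sup g\le\eta+\lambda/2$. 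Strong duality holds by Slater since $P_o$ is strictly feasible for $\rho>0$. Eliminating $\lambda$ at its optimum $\lambda=2(\sup g-\eta)_+$ then yields the displayed one-dimensional formula.
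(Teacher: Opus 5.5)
Your route differs from the paper's. The paper gives no argument of its own; it defers to Lemma~5 of Panaganti et al.\ (2022). There the identity is obtained from the convex-conjugate dual of an $f$-divergence ball, specialized to the TV conjugate, followed by a change of variables in $\eta$ and elimination of the multiplier $\lambda$. That is essentially your fallback. In your fallback, the boundary choice $\lambda=2(\sup g-\eta)_+$ is best justified by substituting $\eta=\eta'+\lambda/2$. The objective then becomes $\lambda\rho+\eta'+\mathbb{E}_{P_o}[(g-\eta')_+]$ under $\lambda\ge(\sup g-\eta')_+$, which is increasing in $\lambda$.

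Your primary argument is self-contained and needs no infinite-dimensional strong-duality theorem. Its upper bound is correct as written: the pointwise inequality $w(g-\eta)\le(g-\eta)_++(w-1)_+(g-\eta)$, the identity $A=\mathbb{E}_{P_o}[(w-1)_-]$, and hence $A=D_{\mathrm{TV}}(P,P_o)\le\rho$ all check out. The decomposition also shows why the extremum runs over all of $\mathcal X$: singular mass is allowed. Replacing it by the maximum over the nominal support, as Proposition~\ref{prop:light_sol} does, tacitly restricts to $P\ll P_o$.

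The real problem is in your lower bound: you chose the wrong quantile. With the upper $\rho$-quantile, $P_o(g>\eta^\star)\le\rho\le P_o(g\ge\eta^\star)$, the claimed equality fails. Write $h(\eta):=\mathbb{E}_{P_o}[(g-\eta)_+]+\rho(\sup g-\eta)_++\eta$ and take $P_o$ uniform on $\mathcal X=[0,1]$, $g(x)=x$, $\rho=0.2$. The greedy $P^\star$ gives $0.68$, but your $\eta^\star=0.8$ gives $h(0.8)=0.86$.

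Your own subgradient $1-\rho-P_o(g>\eta)$ vanishes where $P_o(g>\eta)=1-\rho$. That is the \emph{lower} $\rho$-quantile, $P_o(g<\eta^\star)\le\rho\le P_o(g\le\eta^\star)$. With this $\eta^\star$, the construction you rejected as ``not what we want'' is exactly right. Remove mass $\rho$ from $\{g\le\eta^\star\}$ (all of $\{g<\eta^\star\}$ plus part of the tie set) and place it at a (near-)maximizer of $g$. Let $K$ be the retained part of $P_o$; it has mass $1-\rho$ and is carried by $\{g\ge\eta^\star\}$. Then $\mathbb{E}_{P_o}[(g-\eta^\star)_+]=\int(g-\eta^\star)\,dK$, so
\[
h(\eta^\star)=\int g\,dK-(1-\rho)\eta^\star+\rho(\sup g-\eta^\star)+\eta^\star=\int g\,dK+\rho\sup g=\mathbb{E}_{P^\star}[g].
\]
Hence $\sup_P\mathbb{E}_P[g]\ge h(\eta^\star)\ge\min_\eta h$, which closes the argument and also shows the minimum over $\eta$ is attained. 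No first-order condition is needed; what is needed is this computation, which your proposal asserts but does not carry out. For $\rho\ge1$, take $\eta^\star=\sup g$ and $P^\star$ a point mass at a (near-)maximizer, which gives $h(\sup g)=\sup g$. With these corrections the proof is complete.
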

\begin{proof}
    See Lemma $5$ in \citep{panaganti2022robust} for details.
\end{proof}

\begin{customprop}{1}
The solution to the constrained optimization problem in \eqref{eqn:rbfm_light} is obtained by solving the following:
\begin{equation*}
\min_{z}\min_{\lambda \in \mathbb{R}}
\left\{\mathbb{E}_{s \sim \rho^{\pi_D}_{T^{o}}}\big[(\mathcal{L}(\pi_z(\cdot|s), \pi_D(\cdot|s)) - \lambda)_+\big]
+ \varepsilon_l\left(\max_{\rho^{\pi_D}_{T^{o}}(s) > 0}
\mathcal{L}(\pi_z(\cdot|s), \pi_D(\cdot|s)) - \lambda\right)_{+} + \lambda\right\}.
\end{equation*}
\end{customprop}
\begin{proof}
    Let us fix the learner's task vector $z$ and therefore its policy $\pi_z$ and define the point-wise loss
\[
\ell_z(s)
:= \mathcal{L}(\pi_z(\cdot|s),\pi_D(\cdot|s))
\]
Now, let $M_z := \max_{s \in \mathcal S:\, \rho^{\pi_D}_{T^o}(s)>0} \ell_z(s)$. Applying the total-variation duality lemma \ref{lem:f_div} to $\ell_z$ with nominal
distribution $\rho^{\pi_D}_{T^o}$ and radius $\varepsilon$ gives
\[
\min_{Q:\,D_{\mathrm{TV}}(Q,\rho^{\pi_D}_{T^o})\le \varepsilon}
\mathbb E_Q[\ell_z(S)]
=
\min_{\lambda\in\mathbb R}
\left\{
\mathbb E_{s\sim \rho^{\pi_D}_{T^o}}\big[(\ell_z(s)-\lambda)_+\big]
+\varepsilon_l (M_z-\lambda)_+
+\lambda
\right\}.
\]
Substituting back the definition of $\ell_z$ and the minimization over task vector $z$ gives
\[
\min_{z}\min_{\lambda\in\mathbb R}
\left\{
\mathbb E_{s\sim \rho^{\pi_D}_{T^o}}\big[(\mathcal{L}(\pi_z(\cdot|s),\pi_D(\cdot|s))-\lambda)_+\big]
+\varepsilon_l\left(\max_{s:\,\rho^{\pi_D}_{T^o}(s)>0}\mathcal{L}(\pi_z(\cdot|s),\pi_D(\cdot|s))-\lambda\right)_+
+\lambda
\right\}.
\]
\end{proof}

\begin{customprop}{2}
Suppose that the expert and learner policies $\pi_D$ and $\pi_z$ are deterministic and that the action space $\mathcal A$ is bounded. Let $\mathcal{L}(\pi_z(\cdot|s), \pi_D(\cdot|s))=\|\pi_z(s)-\pi_D(s)\|_2^2$ and $
L:=\sup_{(s,a)\sim\mathcal \rho^{\pi_D}_{T^{o}}}\|\pi_z(s)-\pi_D(s)\|_2^2$.
Then the optimization problem in Proposition \ref{prop:light_sol} can be simplified to
\begin{equation*}
\min_{z}\min_{\lambda\in[0,(1+\varepsilon_l)L]}
\left\{\mathbb{E}_{s \sim \rho^{\pi_D}_{T^{o}}}\big[(\mathcal{L}(\pi_z(\cdot|s), \pi_D(\cdot|s)) - \lambda)_+\big]
+ \varepsilon_l\left(\max_{\rho^{\pi_D}_{T^{o}}(s) > 0}
\mathcal{L}(\pi_z(\cdot|s), \pi_D(\cdot|s)) - \lambda\right)_{+} + \lambda\right\}.
\end{equation*}
\end{customprop}

\begin{proof}
Let us fix the learner's task vector $z$ and the corresponding policy $\pi_z$ and define the point-wise loss
\[
\ell_z(s)
:= \mathcal{L}(\pi_z(\cdot|s),\pi_D(\cdot|s))
= \|\pi_z(s)-\pi_D(s)\|_2^2.
\]
Since the expert and learner policies are deterministic and the action space
$\mathcal A$ is bounded, the loss is bounded on the support of
$\rho^{\pi_D}_{T^o}$. In particular,
\[
0 \le \ell_z(s) \le L
\qquad \text{for all } s \text{ with } \rho^{\pi_D}_{T^o}(s)>0.
\]

Now, Proposition \ref{prop:light_sol} provides the following optimization problem:
\[
\min_{z}\min_{\lambda\in\mathbb R}
\left\{
\mathbb E_{s\sim \rho^{\pi_D}_{T^o}}\big[(\mathcal{L}(\pi_z(\cdot|s),\pi_D(\cdot|s))-\lambda)_+\big]
+\varepsilon_l\left(\max_{s:\,\rho^{\pi_D}_{T^o}(s)>0}\mathcal{L}(\pi_z(\cdot|s),\pi_D(\cdot|s))-\lambda\right)_+
+\lambda
\right\}.
\]

Next, we define $M_z := \max_{s \in \mathcal S:\, \rho^{\pi_D}_{T^o}(s)>0} \ell_z(s)$ and $g(\lambda)$ as
\[
g(\lambda):=
\mathbb E_{s\sim \rho^{\pi_D}_{T^o}}\big[(\ell_z(s)-\lambda)_+\big]
+\varepsilon_l(M_z-\lambda)_+
+\lambda.
\]

If $\lambda<0$, then $\ell_z(s)-\lambda>0$ and $M_z-\lambda>0$, so
\[
g(\lambda)
=
\mathbb E_{s\sim \rho^{\pi_D}_{T^o}}[\ell_z(s)-\lambda]
+\varepsilon_l(M_z-\lambda)
+\lambda,
\]
which is strictly decreasing in $\lambda$ because $\varepsilon_l>0$.
Hence no minimizer lies below $0$. If $\lambda\ge M_z$, then $\ell_z(s)-\lambda\le 0$ for all $s$ in the support of
$\rho^{\pi_D}_{T^o}$, and also $M_z-\lambda\le 0$, so
\[
g(\lambda)=\lambda,
\]
which is strictly increasing in $\lambda$. Hence no minimizer lies above
$M_z$. Therefore every minimizer belongs to $[0,M_z]$. Since $M_z\le L$, we may restrict the minimization further to the larger interval
$[0,(1+\varepsilon_l)L]$ without changing the value. Thus, we have
\[
\min_{z}\min_{\lambda\in[0,(1+\varepsilon_l)L]}
\left\{
\lambda +
\mathbb{E}_{s \sim \rho^{\pi_D}_{T^{o}}}\big[(\mathcal{L}(\pi_z(\cdot|s), \pi_D(\cdot|s)) - \lambda)_+\big]
+ \varepsilon_l\left(\max_{s \in \mathcal{S}:\rho^{\pi_D}_{T^{o}}(s) > 0}
\mathcal{L}(\pi_z(\cdot|s), \pi_D(\cdot|s)) - \lambda\right)_{+}
\right\}
\]
\end{proof}

\begin{customprop}{3}
For $\tau > 0$, the inner maximization in~\eqref{eqn:final_lagrangian_constrained_bc} admits the solution
\[
w^{\star}_{Q,\tau,\pi_z}(s,a,s') =
\max\!\left(0,\,(f')^{-1}\!\left(\frac{c_{Q,\pi_z}(s,a,s')}{\tau}\right)\right).
\]
For $\tau = 0$, $w^{\star}_{Q,\tau,\pi_z}(s,a,s') = +\infty$ if $c_{Q,\pi_z}(s,a,s')>0$ and $0$ otherwise.
\end{customprop}
\begin{proof}
We first consider the case $\tau > 0$. For fixed $(Q,\tau)$, the optimization over $w \ge 0$ in~\eqref{eqn:final_lagrangian_constrained_bc} reduces to
\[
\max_{w \ge 0} \;
\E_{s,a,s'\sim \rho^{\pi_D}_{T^o}}
\big[-\tau f(w(s,a,s')) + w(s,a,s')\,c_{Q,\pi}(s,a,s')\big].
\]
Since the expectation is separable across $(s,a,s')$, the maximization decomposes pointwise. For each $(s,a,s')$, we solve
\[
\max_{w \ge 0} \;
-\tau f(w) + c_{Q,\pi}(s,a,s')\, w.
\]

Because $f$ is strictly convex, the objective is strictly concave in $w$, and the problem admits a unique maximizer. Ignoring the constraint $w \ge 0$, the first-order optimality condition gives
\[
-\tau f'(w) + c_{Q,\pi}(s,a,s') = 0
\quad \Rightarrow \quad
w = (f')^{-1}\!\left(\frac{c_{Q,\pi}(s,a,s')}{\tau}\right).
\]

Enforcing the constraint $w \ge 0$ amounts to projecting this unconstrained solution onto $\mathbb{R}_+$, yielding
\[
w^{\star}_{Q,\tau,\pi}(s,a,s')
= \max\!\left(0,\,(f')^{-1}\!\left(\frac{c_{Q,\pi}(s,a,s')}{\tau}\right)\right).
\]

For tuples $(s,a,s')$ with $\rho^{\pi_D}_{T^o}(s,a,s') = 0$, the corresponding terms do not affect the objective, and $w$ can be defined arbitrarily.

We now consider the case $\tau = 0$. The objective reduces to
\[
\E_{s,a,s'\sim \rho^{\pi_D}_{T^o}}\!\big[w(s,a,s')\,c_{Q,\pi}(s,a,s')\big],
\]
which is linear in $w$. Maximizing over $w \ge 0$ yields
\[
w^{\star}_{Q,\tau,\pi}(s,a,s') =
\begin{cases}
+\infty, & \text{if } c_{Q,\pi}(s,a,s') > 0,\\
0, & \text{otherwise}.
\end{cases}
\]
\end{proof}


\appsection{Extended Related Work}\label{sec:appendix_related_work}

\noindent\textbf{Offline Imitation Learning.} Behavioral Cloning (BC)~\citep{firstBC} reduces imitation to supervised regression from expert state-action pairs, but its independence from environment dynamics makes it susceptible to compounding errors under covariate shift~\citep{bcLimitation1}. The DICE family of methods~\citep{Kostrikov2020Imitation, kim2022demodice, mao2024odice} addresses this by estimating stationary occupancy ratios between expert and learner distributions, enabling off-policy imitation without explicit policy access. Recent work~\citep{agrawal2024policy, agrawal2025markov, pmlr-v283-agrawal25a} establishes that offline IL must respect a Markov balance condition coupling the expert policy to the underlying transition dynamics. Our formulation shares this occupancy-based perspective but departs fundamentally: rather than correcting for policy mismatch under fixed dynamics, we introduce importance weights through a robust inner maximization over a transition uncertainty set, explicitly accounting for dynamics shift at test time: a regime where BC, DICE, and Markov-balance methods uniformly fail.

\noindent\textbf{Robust Reinforcement Learning.} Robust RL optimizes worst-case returns over an ambiguity set of transition kernels, typically using rectangular uncertainty sets and minimax dynamic programming~\citep{nilim2005robust, iyengar2005robust}. Subsequent work extends this to distributionally robust formulations under total variation, $f$-divergence, and Wasserstein metrics~\citep{derman2020distributional, yu2023fast, pmlr-v283-panaganti25a}. Offline variants estimate worst-case value functions from fixed datasets via nominal-measure reformulations~\citep{panaganti2022robust}, but continue to require known reward signals, rendering them inapplicable to the imitation setting where only expert demonstrations are available. Our work transplants these robust foundations into the IL regime, constructing a fully offline, reward-free formulation that hedges against transition uncertainty using only nominal expert data.

\noindent\textbf{Robust Imitation Learning.} A substantial body of IL work addresses robustness to demonstration imperfections, suboptimal labels, noisy annotations, or covariate shift: through label denoising~\citep{pmlr-v97-wu19a, pmlr-v130-tangkaratt21a}, corrective querying~\citep{pmlr-v15-ross11a}, and noise injection~\citep{pmlr-v78-laskey17a}, but these assume fixed transition dynamics throughout. Meta-IL and multi-task IL frameworks improve cross-task generalization via shared experience~\citep{finn2017one, james2018task, Zhou2020Watch} but similarly do not account for dynamics mismatch. Domain randomization~\citep{peng2018sim, huang2021generalization} perturbs physical parameters during training to improve sim-to-real transfer, but requires a high-fidelity simulator and access to perturbed environments, assumptions that are incompatible with the strictly offline setting we consider. Among methods that explicitly target transition uncertainty, \citet{wu2025robust} enforce Lipschitz regularization against input perturbations under fixed dynamics, while DRIL-DICE~\citep{seo2024mitigating} mitigates covariate shift via $f$-divergence regularization, also under nominal dynamics. RIME~\citep{chae2022robust} learns policies robust across families of MDPs but requires online rollouts for robustness estimation. The most closely related single-task methods are DRBC~\citep{panaganti2023distributionally} and BE-DROIL~\citep{agrawal2025balance}, which robustify offline IL against transition uncertainty; however, both are single-task methods that require independent per-task optimization, incurring prohibitive computational cost in the multi-task regime where policies for many tasks must be recovered efficiently.

\noindent\textbf{Behavior Foundation Models.} BFMs currently achieve state-of-the-art zero-shot performance in RL by pretraining task-agnostic representations that generalize across both goal-conditioned and dense-reward tasks~\citep{touati2021learning, touati2023does}. Conceptually rooted in successor representations~\citep{dayan1993improving}, universal value function approximators~\citep{schaul2015universal}, successor features~\citep{barreto2017successor}, and successor measures~\citep{blier2021learning}, modern BFMs are instantiated as either universal successor features~\citep{borsa2018universal, park2024foundation} or forward-backward (FB) representations~\citep{touati2021learning, jeen2024zero, pirotta2024fast}. Recent work identifies key failure modes of standard BFMs under distributional shift: \citet{bobrin2026zeroshot} show that BFMs conflate policies across dynamics regimes and cannot adapt to unobserved dynamics changes, while \citet{jeen2025zero} demonstrate degradation under partial observability. Both propose architectural remedies: belief encoders and memory-augmented models, respectively, but require pretraining data from multiple dynamics regimes. To the best of our knowledge, this is the first BFM approach that achieves robustness to dynamics variation using only a single nominal offline dataset per environment, with no changes to pretraining and no expert data required under perturbed dynamics.

\appsection{Experimental Setup}\label{sec:appendix_exp_setup}

\appsubsection{ExORL Domains}
We evaluate our methods on three environments from the ExORL benchmark \citep{yarats2022don}, which is built on top of the DeepMind Control Suite \citep{tassa2018deepmind}. 

\appsubsubsection{Walker}
This domain features a two-legged agent that must achieve stable locomotion starting from a crouched configuration. The state space is $24$-dimensional and the action space is $6$-dimensional, comprising joint positions, velocities, and torques. The benchmark defines four tasks: \textit{stand}, \textit{walk}, \textit{run}, and \textit{flip}. The \textit{stand} task rewards maintaining an upright torso with extended legs. The \textit{walk} and \textit{run} tasks extend this objective by encouraging forward motion at different speeds, while \textit{flip} promotes rotational motion of the torso after achieving a standing posture. All reward signals are dense.

\appsubsubsection{Quadruped} This domain involves a four-legged robot navigating within a three-dimensional space. The state and action spaces are $78$- and $12$-dimensional, respectively. We consider four tasks. The \textit{quadruped\_stand} task rewards maintaining an upright posture. The \textit{quadruped\_walk} and \textit{quadruped\_run} tasks additionally encourage forward motion through velocity-based rewards, with \textit{quadruped\_walk} also including a term promoting a minimum center-of-mass height. Finally, \textit{quadruped\_jump} rewards achieving sufficient vertical displacement. Rewards are dense across all tasks.

\appsubsubsection{Cheetah} This domain features a planar biped designed for high-speed locomotion. The state is represented by a $17-$dimensional vector capturing joint positions and velocities, while actions are $6-$dimensional. We evaluate four tasks. In \textit{cheetah\_walk} and \textit{cheetah\_run}, rewards scale with forward velocity up to target speeds of approximately $2\,$m/s and $10\,$m/s, respectively. The tasks \textit{walker\_walk\_backward} and \textit{walker\_run\_backward} instead incentivize achieving specified backward velocities.

\appsubsection{ExORL Datasets}
Training is performed using $1$M transitions (unless otherwise mentioned) sampled uniformly from datasets generated by different unsupervised exploration strategies within ExORL. Specifically, we use data collected by RND \citep{burda2018exploration}, APS \citep{liu2021aps}, and PROTO \citep{yarats2021reinforcement}.

\appsubsection{Environment Perturbations}

We evaluate robustness across three environments: Walker, Quadruped, and Cheetah, each subject to three perturbation types chosen to span distinct physical failure modes. All perturbations are applied at evaluation time only; the BFM pretraining is performed exclusively under nominal dynamics. For each environment, we describe the perturbation mechanism, the parameter modified in the MuJoCo physics model, and the rationale for the chosen sweep range.

\appsubsubsection{Walker}
The Walker is a planar bipedal robot with six actuated joints (hip, knee, ankle $\times$ 2 legs) and four tasks: stand, walk, run, and flip.

\noindent \textbf{Gravity} (+0\% to +35\% of nominal -9.81 m/s²). We scale the vertical component of gravitational acceleration from its nominal value up to a $35$ increase in effective gravitational load. This directly increases the ground reaction forces the robot must overcome to maintain posture and generate forward momentum, stressing both the stance and swing phases of the gait cycle. The range is selected to produce monotonic degradation across all four tasks without inducing physics instability; the flip task is most sensitive due to its requirement for sustained aerial rotation against gravity.

\noindent \textbf{Body Mass }(+0\% to +100\% of nominal). We uniformly scale all non-world body masses: torso, thighs, shins, and feet, by up to $100\%$ of their nominal values. Increased inertia raises the torque demand at every joint during acceleration and deceleration phases, directly stressing the actuator budget. At +100\%, the total robot mass doubles, placing the system well outside the inertial regime experienced during expert demonstration.

\noindent \textbf{Joint Friction Loss} (0 to 30 Nm per joint). We add a constant passive resistive torque opposing motion at all six limb joints, drawn from the MuJoCo \texttt{dof\_frictionloss} parameter (nominal: 0 Nm). Unlike viscous damping, which scales with joint velocity, frictionloss imposes a fixed load regardless of motion speed, effectively raising the torque threshold that must be exceeded before any useful joint motion occurs. At $30$ Nm per joint, the six limb joints collectively impose
$180$ Nm of passive resistance, making fast locomotion tasks (run, flip) substantially more costly while moderately degrading slower tasks (walk, stand).

\appsubsubsection{Quadruped}
The Quadruped is a three-dimensional four-legged robot with twelve actuated joints (yaw, lift, extend $\times$ $4$ legs) and four tasks: stand, walk, run, and jump.

\noindent\textbf{Tilted Gravity} (0 to 7.5 m/s² lateral component). We introduce a lateral gravitational component along the x-axis of the global frame while keeping the vertical component fixed at -9.81 m/s², producing an effective gravity vector tilted up to 37.4° from vertical. This creates a persistent lateral torque on the torso that the policy must continuously counteract to maintain uprightness, directly penalizing the upright reward component (r=r\_{\text{upright}} $\times$ r\_{\text{move}}) that governs all four tasks. The range  $[0,7.5]$ m/s² is chosen as the stable regime; values above $\approx 8$ m/s² induce numerical instability (\texttt{mjWARN\_BADQACC}) in a subset of episodes due to the robot falling and entering degenerate contact configurations. Stand and walk are affected through postural destabilization; run and jump additionally suffer degraded forward propulsion as lateral compensation diverts actuator effort.

\noindent\textbf{Ground Contact Stiffness} (solref timeconst: 0.01 to 0.25 s). We increase the MuJoCo \texttt{geom\_solref} timeconst of the floor geometry from its nominal value of
$0.01$ s to $0.25$ s. The timeconst parameter controls the rise time of contact constraint forces: larger values produce slower, softer contact responses, effectively making the ground more compliant and energy-absorbing, analogous to transitioning from rigid pavement to soft terrain such as mud or foam. The quadruped's gait relies on brief, stiff ground contacts to deliver sharp propulsive impulses; increased compliance reduces the effective push-off force per step, degrading locomotion efficiency across all tasks with a stronger effect on dynamic tasks (run, jump) than static ones (stand).

\noindent\textbf{Actuator Control Range} (command bound: $\pm 1.0$ to $\pm 0.3$). We clip the control signal sent to all twelve actuators symmetrically to $[-v, v]$ where v decreases from the nominal $1.0$ to $0.4$. This is mechanistically distinct from gear ratio scaling: the actuator hardware remains at full torque capacity, but the command channel is truncated so that the policy cannot request torques in the upper $[v,1]$ portion of its action space regardless of what the network outputs. At $v=0.3v$, the effective command range is reduced to $30\%$ of nominal, which for the quadruped's gear-1 actuators directly caps the maximum controllable torque at $30\%$ of the physical limit. This models command saturation or control signal degradation in the communication channel between the policy and the actuators.

\appsubsubsection{Cheetah}
The Cheetah is a planar six-joint runner (bthigh, bshin, bfoot, fthigh, fshin, ffoot) with nominal gear ratios of $120/90/60/90/60/30$ Nm/unit and four tasks: run, run\_backward, walk, and walk\_backward.

\noindent \textbf{Actuator Strength} ($0\%$ to $-30\%$ gear ratio reduction). We scale all six actuator gear ratios down from their nominal values by up to $30\%$, reducing the maximum joint torque to $70\%$ of nominal. The gear ratios are large ($120$ Nm/unit for bthigh), so even a $30\%$ reduction represents a substantial absolute torque loss of up to $36$ Nm on the primary propulsive joint. The range is deliberately conservative: exploratory testing revealed a sharp performance cliff at $-40\%$ reduction where the actuators become insufficient to sustain any meaningful locomotion, making deeper perturbations uninformative for robustness comparison. The $[0\%, 30\%]$ range corresponds to the gradual degradation regime where policy quality rather than physical feasibility determines performance.

\noindent \textbf{Joint Friction Loss} ($0$ to $25$ Nm per joint). We add a constant passive resistive torque opposing motion at all six actuated joints via the \texttt{dof\_frictionloss} parameter (nominal: $0$ Nm). Unlike viscous damping, which scales with joint velocity, frictionloss imposes a fixed load regardless of motion speed, effectively raising the torque threshold that must be exceeded before any useful joint motion occurs. At $25$ Nm per joint, the six joints collectively impose $150$ Nm of passive resistance per control cycle, substantial relative to the lighter actuators (ffoot: $30$ Nm/unit nominal gear). The effect is particularly pronounced on the Cheetah due to the high-frequency nature of its gait: fast locomotion tasks (run, run\_backward) require rapid joint reversals on every stride, paying the frictionloss penalty twice per cycle (opposing both flexion and extension). The range $[0,25]$ Nm is chosen to span the regime from negligible degradation to near-collapse of high-speed locomotion while preserving meaningful signal across all four tasks.

\noindent \textbf{Range of Motion} ($100\%$ to $60\%$ of nominal joint range retained). We shrink the mechanical range of motion of all six limb joints symmetrically around their center positions, reducing the allowed angular excursion to a specified fraction of the nominal range. For example, the bfoot joint with nominal range $[-4.01,0.87]$ rad is restricted to $[-2.52,0.37]$ rad at $60\%$ retention. This models physical range restriction arising from joint damage, swelling, or mechanical interference. The Cheetah's fast gait requires large angular excursions, particularly at bfoot and fshin, to generate the leg sweeping motion that drives forward velocity; restricting these excursions forces the policy into a kinematically constrained regime that was not present during expert demonstration. The range $[100\%,60\%]$ is chosen to avoid the sharp feasibility cliff below $60\%$ where joint constraints render normal locomotion physically impossible for all methods.

\begin{table}[t]
\centering
\small
\begin{tabular}{l l}
\toprule
\textbf{Hyperparameter} & \textbf{Value} \\
\midrule
Latent dimension $d$ & 50 \\
$F$ / $\psi$ dimensions & (1024, 1024) \\
$B$ / $\varphi$ dimensions & (256, 256, 256) \\
Preprocessor dimensions & (1024, 1024) \\
Std.\ deviation for policy smoothing $\sigma$ & 0.2 \\
Truncation level for policy smoothing & 0.3 \\
Learning steps & 1,000,000 \\
Batch size & 512 \\
Optimizer & Adam \citep{kingma2014adam} \\
Learning rate & 0.0001 \\
Discount $\gamma$ & 0.98 (0.99 for maze) \\
Activations (unless otherwise stated) & ReLU \\
Target network Polyak smoothing coefficient & 0.01 \\
$z$-inference labels & 10,000 \\
$z$ mixing ratio & 0.5 \\
\bottomrule
\end{tabular}
\caption{Hyperparameters for BFM methods.}
\label{tab:bfm_pretraining_hyperparameters}
\end{table}

\begin{table}[t]
\centering
\small
\begin{tabular}{l l}
\toprule
\textbf{Hyperparameter} & \textbf{Value} \\
\midrule
Optimization steps & $3{,}000$ \\
Batch size & $512$ \\
Learning rate & $0.001$ \\
Optimizer & Adam~\citep{kingma2014adam} \\
Loss & MSE \\
Policy smoothing std.\ deviation $\sigma$ & $0.1$ \\
\bottomrule
\end{tabular}
\caption{Hyperparameters for FB-IL.}
\label{tab:fbil_hyperparameters}
\end{table}

\begin{table}[t]
\centering
\small
\begin{tabular}{l l}
\toprule
\textbf{Hyperparameter} & \textbf{Value} \\
\midrule
Robustness parameter $\varepsilon_l$ & $0.8$ \\
Optimization steps & $5{,}000$ \\
Batch size & $512$ \\
Learning rate & $0.0005$ \\
Optimizer & Adam~\citep{kingma2014adam} \\
Policy smoothing std.\ deviation $\sigma$ & $0.1$ \\
(Task Vector $:$ Dual Update) Frequencies & $1:1$\\
\bottomrule
\end{tabular}
\caption{Hyperparameters for RBFM-Light.}
\label{tab:rbfm_light_hyperparameters}
\end{table}

\appsubsection{Evaluation Protocol}

Our pipeline consists of two stages: (i) pre-training of the BFM and (ii) downstream task inference.

\appsubsubsection{Pre-training Evaluation}
Following \citep{jeen2024zero}, during BFM pre-training, we assess performance by measuring the cumulative reward (score) achieved on each task. Training is run for $10^6$ steps, and evaluation is performed every $20,000$ steps. At each evaluation point, we execute $10$ rollouts per task and compute the interquartile mean (IQM) of the resulting scores to obtain a robust estimate of performance. We select the checkpoint corresponding to the highest IQM aggregated across all tasks. This checkpoint is used for all subsequent downstream experiments.

\appsubsubsection{Downstream Task Inference}
Using the selected pre-trained checkpoint, we train policies for downstream imitation learning tasks across $5$ random seeds. Performance is reported in terms of cumulative return, averaged across $100$ episodes for a given seed. Final results are presented with $95\%$ confidence intervals computed across seeds.

\appsubsection{Computational Resources}
\label{subsec:app_compute}
All experiments were conducted on a single NVIDIA Quadro P$5000$ GPU with $16$GB memory. Pretraining of BFM took approximately $14$ hours for $1M$ steps.

\appsection{Implementation Details}\label{sec:appendix_implementation_details}

\appsubsection{Forward-Backward Representations}
\appsubsubsection{Architecture}

Our implementation of the forward-backward framework is based on the design introduced by \citep{jeen2024zero}, with specific hyperparameter choices summarized in Table \ref{tab:bfm_pretraining_hyperparameters}.

\appsubsubsection{Forward Representation \texorpdfstring{$F(s,a,z)$}{F(s,a,z)}}

The forward network operates on preprocessed inputs. In particular, the state-action pair $(s,a)$ and the state-task pair $(s,z)$ are each passed through separate MLPs, which project them into a shared embedding space of dimension $512$. These two embeddings are then concatenated and fed into an additional MLP, which outputs a embedding vector of dimension $d$.

\appsubsubsection{Backward Representation \texorpdfstring{$B(s)$}{B(s)}}

The backward network is implemented as an MLP that takes the state $s$ as input and produces a $d$-dimensional embedding vector.

\appsubsubsection{Actor \texorpdfstring{$\pi_z(s)$}{pi\_z(s)}}

The policy network adopts a similar design to the forward model. The inputs $(s,a)$ and $(s,z)$ are first encoded independently using separate MLPs into $512$-dimensional embeddings. These embeddings are concatenated and passed through a final MLP that generates an action vector of dimension $a$, corresponding to the action space. A Tanh activation is applied at the output layer to ensure bounded actions. During training, Gaussian noise with standard deviation $\sigma$ is injected into the actions for smoothing, following \citep{fujimoto2019off}.

\appsubsubsection{Additional Details.}

To improve training stability, layer normalization \citep{ba2016layer} is applied, and Tanh activations are used in the first layer of each MLP.

\appsubsection{Task Sampling Distribution \texorpdfstring{$\mathcal{Z}$}{Z}}

FB representations require a strategy for generating the task vector $z$ during training. Following \citep{touati2022does}, we adopt a combination of two sampling procedures. First, $z$ is drawn uniformly from the surface of a hypersphere with radius $\sqrt{d}$ in $\mathbb{R}^d$. Second, $z$ is obtained by encoding states $s \sim \mathcal{D}$ through the backward model, i.e., $z = B(s)$. Due to the $\ell_2$ normalization applied in the backward network, these vectors also lie on the hypersphere, although their distribution is not uniform. At each training step, we sample $z$ using these two approaches with equal probability.

\appsubsection{Discussion on usage of \texorpdfstring{$f$}{f}-divergence in RBFM-Heavy} \label{subsec:app_f_div}

Lemma \ref{lemma:D_TV_mismatch_s_a_s_prime} provides a total variation (TV) bound on the mismatch between
occupancy measures under nominal and perturbed dynamics. We extend this to
general $f$-divergences by restricting to generators bounded by the TV
generator.

\begin{lemma}
Let $f:[0,\infty)\to\mathbb{R}$ satisfy $f(1)=0$ and
$f(t)\leq \alpha f_{\mathrm{TV}}(t)$ for all $t\geq 0$, where
$f_{\mathrm{TV}}(t)=\frac{1}{2}|t-1|$. Then, for any policy $\pi$ and
$T\in\mathcal{T}(\varepsilon')$,
\begin{equation}
D_f\!\left(\rho_T^\pi \,\|\, \rho_{T_o}^\pi\right)
\leq \alpha D_{\mathrm{TV}}\!\left(\rho_T^\pi, \rho_{T_o}^\pi\right)
\leq \alpha \frac{\varepsilon'}{1-\gamma}.
\end{equation}
\label{lemma:app_f_div_result}
\end{lemma}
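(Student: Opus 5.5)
We will prove the two inequalities separately; the first comes from pointwise domination of generators and the second from Lemma~\ref{lemma:D_TV_mismatch_s_a_s_prime}. Throughout, $\rho_T^\pi$ and $\rho_{T_o}^\pi$ denote the joint occupancy measures over $(s,a,s')$, as in Lemma~\ref{lemma:D_TV_mismatch_s_a_s_prime}.

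\textbf{Absolute continuity.} The $f$-divergence is $D_f(P\,\|\,Q)=\sum_x Q(x)\, f\!\left(P(x)/Q(x)\right)$. This is well defined with no singular part if $\rho_T^\pi \ll \rho_{T_o}^\pi$. In the setting of interest this holds when $T_{s,a}\ll T^o_{s,a}$, since then every $(s,a,s')$ reachable under $T$ is reachable under $T^o$. Otherwise one uses the standard convention that a singular part contributes $f'(\infty)\,P(\{Q=0\})$, with $f'(\infty):=\lim_{t\to\infty} f(t)/t$. The hypothesis $f\le \alpha f_{\mathrm{TV}}$ gives $f'(\infty)\le \alpha/2$. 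This matches the TV generator's singular contribution exactly, so the singular case reduces to the same comparison below.

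\textbf{First inequality.} Write $w(x)=\rho_T^\pi(x)/\rho_{T_o}^\pi(x)$ on the support of $\rho_{T_o}^\pi$. Since $\rho_{T_o}^\pi(x)\ge 0$, multiplying the pointwise bound $f(w(x))\le \alpha f_{\mathrm{TV}}(w(x))$ by $\rho_{T_o}^\pi(x)$ and summing gives
\[
D_f(\rho_T^\pi\,\|\,\rho_{T_o}^\pi)\le \alpha\, D_{f_{\mathrm{TV}}}(\rho_T^\pi\,\|\,\rho_{T_o}^\pi).
\]
It remains to identify the right-hand side with TV:
\[
\sum_x \rho_{T_o}^\pi(x)\,\tfrac12\,\lvert w(x)-1\rvert=\tfrac12\sum_x\lvert\rho_T^\pi(x)-\rho_{T_o}^\pi(x)\rvert=D_{\mathrm{TV}}(\rho_T^\pi,\rho_{T_o}^\pi).
\]

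\textbf{Second inequality.} Lemma~\ref{lemma:D_TV_mismatch_s_a_s_prime} gives $D_{\mathrm{TV}}(\rho_T^\pi,\rho_{T_o}^\pi)\le \varepsilon'/(1-\gamma)$. Since $\alpha\ge 0$, multiplying by $\alpha$ finishes the proof. We need $\alpha\ge 0$ for this step. We obtain it by evaluating the hypothesis at any $t\ne 1$ where $f(t)>0$, which exists for any nontrivial convex $f$ with $f(1)=0$. If no such $t$ exists, then $f\le 0$, so $D_f\le 0$ and the claim is trivial.

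\textbf{Main obstacle.} The only delicate point is the support and singularity issue in the first inequality, that is, making sure the $f$-divergence has no extra mass term that the pointwise comparison misses. We plan to handle it as described above, either by assuming absolute continuity or by checking that the limiting slopes compare correctly.
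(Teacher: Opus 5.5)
Your proof is correct. The paper gives no argument of its own for this lemma; it simply defers to Lemma~2 of \citep{agrawal2025balance}. You instead make it self-contained: you integrate the pointwise bound $f\le\alpha f_{\mathrm{TV}}$ against $\rho^\pi_{T_o}$, then invoke this paper's $(s,a,s')$ total-variation bound, Lemma~\ref{lemma:D_TV_mismatch_s_a_s_prime}. This is presumably the argument behind that citation, and writing it out exposes hypotheses the statement leaves implicit.

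In particular, your insistence on $\alpha\ge 0$ is warranted. Without convexity of $f$ the second inequality can fail: take $f(t)=-\lvert t-1\rvert$, $\alpha=-2$, $T=T^o$ and $\varepsilon'>0$.

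One small tightening concerns the case where no $t$ with $f(t)>0$ exists. There, \emph{the claim is trivial} is not the right justification, because you still need $\alpha\ge 0$ for the second inequality. The cleanest fix is midpoint convexity: for $h\in(0,1]$,
\[
0=f(1)\le\tfrac12\bigl(f(1-h)+f(1+h)\bigr)\le\tfrac{\alpha h}{2},
\]
which gives $\alpha\ge 0$ in every case at once.

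Your treatment of the singular part via $f'(\infty)\le\alpha/2$ is also correct. It goes beyond the paper, which implicitly assumes $\rho^\pi_T\ll\rho^\pi_{T_o}$ when it introduces the ratio $w$.
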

\begin{proof}
    See Lemma $2$ in \citep{agrawal2025balance} for details. 
\end{proof}

Our formulation further requires $f$ to be differentiable with invertible
$f'$. Since $f_{\mathrm{TV}}$ is non-differentiable at $t=1$, we use the
smooth SoftTV generator:
\begin{equation}
f_{\mathrm{SoftTV}}(x)=\frac{1}{2}\log\bigl(\cosh(x-1)\bigr), \quad
(f'_{\mathrm{SoftTV}})^{-1}(y)=\tanh^{-1}(2y)+1.
\end{equation}

As $f_{\mathrm{SoftTV}}(x)\leq f_{\mathrm{TV}}(x)$ for all $x$, setting
$\alpha=1$ yields
\begin{equation}
D_{f_{\mathrm{SoftTV}}}\!\left(\rho_T^\pi \,\|\, \rho_{T_o}^\pi\right)
\leq \frac{\varepsilon'}{1-\gamma}.
\end{equation}

We use SoftTV in practice, though any generator satisfying the conditions in Lemma \ref{lemma:app_f_div_result} is admissible.

\appsubsection{BFM-based Hyperparameters}

\appsubsubsection{Pretraining}
We use the publicly available code at \citep{zheng2026breeze} for BFM and RBFM-based approaches. The hyperparameters used for pretraining BFM model are provided in Table \ref{tab:bfm_pretraining_hyperparameters}. Since, FB-IL, RBFM-Light, and RBFM-Heavy share the same pretrained BFM, these hyperparameters are common to all these approaches.

\appsubsubsection{Task-Inference}

\noindent\textbf{FB-IL.}
The hyperparameters used for inferring task vector in FB-IL are provided in Table \ref{tab:fbil_hyperparameters}.

\noindent\textbf{RBFM-Light.}
The hyperparameters used for inferring task vector in RBFM-Light are provided in Table \ref{tab:rbfm_light_hyperparameters}.

\noindent\textbf{RBFM-Heavy.}
\noindent\textbf{$Q$-network architecture.} The $Q$-function in RBFM-Heavy is parameterized as a two-layer MLP with hidden dimension $64$ and ReLU activations, operating on the frozen forward model features $\bar{F}(s, a, z_w)$ evaluated at a fixed warm-start task embedding $z_w$. Specifically, given the pretrained forward model $F$, we compute $\bar{F}(s, a, z_w) \in \mathbb{R}^d$ and pass it through the MLP to produce a scalar $Q$-value estimate. Critically, a key motivation for building on BFMs is their ability to infer task-specific policies within minutes; using a large  $Q-$network over raw observations and actions would render inference computationally prohibitive, defeating this purpose entirely. By operating on the pretrained FB model instead, the $Q$-network remains lightweight while still leveraging the rich representational structure already encoded during pretraining. The hyperparameters used for task vector inference in RBFM-Heavy are provided in Table~\ref{tab:rbfm_heavy_hyperparameters}.

\begin{table}[t]
\centering
\small
\begin{tabular}{l l}
\toprule
\textbf{Hyperparameter} & \textbf{Value} \\
\midrule
Robustness parameter $\varepsilon$ & 0.8 \\
Optimization steps & $5{,}000$ \\
Batch size & $512$ \\
$\gamma$ & $0.99$ \\
Learning rate & $0.0003$ \\
Optimizer & Adam~\citep{kingma2014adam} \\
Policy smoothing std.\ deviation $\sigma$ & $0.1$ \\
$Q$-network architecture & MLP \\
$Q$-network hidden dimension & 64 \\
$Q$-network hidden layers & 2 \\
$Q$-network activation & ReLU \\
(Task Vector $:$ Dual Update) Frequencies & $1:1$\\
Dual variables learning rate & $0.0003$ \\
\bottomrule
\end{tabular}
\caption{Hyperparameters for RBFM-Heavy.}
\label{tab:rbfm_heavy_hyperparameters}
\end{table}

\appsubsection{Single Task Robust Offline IL baselines}

We compare against the following Robust Offline IL baselines in our experiments.

\appsubsubsection{DRBC}
We implement DRBC \citep{panaganti2023distributionally} in PyTorch using a policy network with two hidden layers of 256 units each and Tanh activations. Models are trained for $2 \times 10^6$ gradient steps with a batch size of 256, optimized using Adam. The learning rate follows the schedule in \citep{panaganti2023distributionally}, with an initial value of $1 \times 10^{-4}$ for all environments except HalfCheetah, where $1 \times 10^{-5}$ is used. The learning rate is decayed exponentially every $10000$ steps with a factor between 0.9 and 0.95. The robustness parameter $\varepsilon_l$ is set to $\{0.2, 0.3, 0.5, 0.6\}$ for Cheetah, Walker, and Quadruped, respectively (see Table \ref{tab:drbc_hyperparameters}).

\begin{table}[t]
\centering
\small
\resizebox{\textwidth}{!}{
\begin{tabular}{l c c c c c c c c}
\toprule
\textbf{Environment} & \textbf{Robustness param $\varepsilon_l$} & \textbf{Policy layers} & \textbf{Activation} & \textbf{Max steps} & \textbf{Learning rate} & \textbf{LR decay} & \textbf{Decay rate} & \textbf{Decay freq.} \\
\midrule
Cheetah & 0.3 & (256, 256) & Tanh & 2M & $1 \times 10^{-5}$ & True & 0.9  & 10k \\
Walker & 0.5 & (256, 256) & Tanh & 2M & $1 \times 10^{-4}$ & True & 0.95 & 10k \\
Quadruped & 0.6 & (256, 256) & Tanh & 2M & $1 \times 10^{-4}$ & True & 0.9  & 10k \\
\bottomrule
\end{tabular}
}
\caption{Hyperparameter configuration for DRBC}
\label{tab:drbc_hyperparameters}
\end{table}

\appsubsubsection{BE-DROIL}
BE-DROIL \citep{agrawal2025balance} method is also implemented in PyTorch. The policy is modeled as a TanhGaussian network with two fully connected layers of $256$ units, producing a Gaussian distribution whose outputs are squashed via a Tanh transformation to respect action bounds. Both the policy and Q-function are trained using Adam with a learning rate of $5 \times 10^{-5}$ and a batch size of 512. The Q-function is parameterized as a two-layer MLP with 256 hidden units and ReLU activations, and we use a discount factor of $\gamma = 0.99$. The policy objective minimizes the mean squared error between actions sampled from the learned policy and those from the expert i.e. $
\mathcal{L}_{\pi}(s) = \mathrm{MSE}(a, a_D), \quad a \sim \pi(\cdot \mid s), \; a_D \sim \pi_D(\cdot \mid s)$, where $\pi$ and $\pi_D$ denote the learned and expert policies, respectively. Training is performed for $10^6$ gradient steps, alternating between a policy update and a joint update of the Q-function and $\tau$ network. See Table \ref{tab:be_droil_hyperparameters} for additional details.

\begin{table}[t]
\centering
\small
\resizebox{\textwidth}{!}{
\begin{tabular}{l l c c c c c c c}
\toprule
\textbf{Component} & \textbf{Architecture} & \textbf{Hidden units} & \textbf{Activation} & \textbf{Batch size} & \textbf{Learning rate} & \textbf{Steps} & $\gamma$ & \textbf{Update ratio (policy : Q, $\tau$)} \\
\midrule
Policy $\pi$   & TanhGaussian MLP & (256, 256) & ReLU & 512 & $5 \times 10^{-5}$ & 1M & 0.99 & 1:1 \\
Q-function $Q$ & MLP             & (256, 256) & ReLU & 512 & $5 \times 10^{-5}$ & 1M & 0.99 & 1:1 \\
\bottomrule
\end{tabular}
}
\caption{Hyperparameter configuration for BE-DROIL.}
\label{tab:be_droil_hyperparameters}
\end{table}

\appsection{Algorithm}
Pseudo-code for pre-training of BFM is provided in Algorithm \ref{alg:bfm_pretraining_alg}. This mimics the Algorithm $1$ of \citep{jeen2024zero}, with conservative part removed. Pseudo-code to extract the task vector for our RBFM-Light method is provided in Algorithm \ref{alg:rbfm_light_alg}. Finally, Algorithm \ref{alg:rbfm_heavy_alg} provides the pseudo-code for our RBFM-Heavy algorithm.

\begin{algorithm}[!htbp]
\caption{Unsupervised Pre-training of Forward-Backward Representations}
\label{alg:bfm_pretraining_alg}
\begin{algorithmic}[1]
\REQUIRE Exploratory Dataset $\mathcal{D}$ of trajectories
\STATE Initialize $F_{\theta_F}$, $B_{\theta_B}$, and policy $\pi_z$
\STATE Set number of steps $N$, $z$-distribution $\mathcal{Z}$, Polyak factor $\nu$, batch size $b$

\FOR{$n = 1$ to $N$}
    \STATE Sample mini-batch $\{(s_i, a_i, s_{i+1})\}_{i=1}^b \sim \mathcal{D}$
    \STATE Sample $\{z_i\}_{i=1}^b \sim \mathcal{Z}$

    \STATE \textbf{// FB Update}
    \STATE Sample $a_{i+1} \sim \pi(s_{i+1}, z_i)$
    \STATE Update $F, B$ using $\{(s_i, a_i, s_{i+1}, a_{i+1}, z_i)\}$ using Eq. \ref{eqn:bfm_pretraining})
    
    \STATE \textbf{// Actor Update}
    \STATE Sample $a_i \sim \pi(s_i, z_i)$
    \STATE Update actor to maximize Eq. \eqref{eqn:actor_pretraining}, i.e.,
    \[
        \mathbb{E}\big[F(s_i, a_i, z_i)^\top z_i\big]
    \]

    \STATE \textbf{// Target network updates (Polyak averaging)}
    \STATE $\theta_F^{-} \leftarrow (1 - \nu) \theta_F^{-} +\nu \theta_F$
    \STATE $\theta_B^{-} \leftarrow (1 - \nu)\theta_B^{-} + \nu\theta_B$
\ENDFOR
\end{algorithmic}
\end{algorithm}

\begin{algorithm}[!htbp]
\caption{RBFM-Light}
\label{alg:rbfm_light_alg}
\begin{algorithmic}[1]
\REQUIRE Expert dataset $\mathcal{D}_e = \{(s_j, a_j)\}_{j=1}^N$, uncertainty radius $\epsilon_l$
\STATE Warm-start  $z \leftarrow z_w$ as defined in Implementation details in Section \ref{sec:experiments} 
\FOR{iteration $k = 1, \dots, K$}
\STATE Sample mini-batch $\{(s_i, a_i)\}_{i=1}^b \sim \mathcal{D}_e$
\STATE \textbf{//RBFM-Light objective}
\begin{align*}
L &:= \max_{i \in [b]} \, \mathcal{L}(\pi_z(\cdot \mid s_i), a_i), \\[4pt]
\mathcal{L}_{\text{RBFM-Light}}(z)
&= \min_{\lambda \in [0, (1+\epsilon_l)L]}
\Bigg[
\frac{1}{b} \sum_{i=1}^{b}
\big(\mathcal{L}(\pi_z(\cdot \mid s_i), a_i) - \lambda \big)_+ \\
&\qquad\qquad\qquad\qquad\qquad\quad + \epsilon_l
\big(\max_{i \in [b]} \mathcal{L}(\pi_z(\cdot \mid s_i), a_i) - \lambda \big)_+
+ \lambda
\Bigg].
\end{align*}

\STATE \textbf{// Dual update}
\STATE $\lambda \leftarrow \arg\min_{{\lambda \in [0, (1+\epsilon_l)L]}} \; \mathcal{L}_{\text{RBFM-Light}}(z, \lambda)$
\STATE \textbf{// Policy update}
\STATE $z \leftarrow \arg\min_z \; \mathcal{L}_{\text{RBFM-Light}}(z, \lambda)$
\ENDFOR
\STATE \textbf{return} $z$
\end{algorithmic}
\end{algorithm}

\begin{algorithm}[!htbp]
\caption{RBFM-Heavy}
\label{alg:rbfm_heavy_alg}
\begin{algorithmic}[1]
\REQUIRE Expert Dataset $\mathcal{D}_e = \{(s_j, a_j, s_j', a_j')\}_{j=1}^N$, initial state distribution $\mu$, 
discount $\gamma$, robustness parameters $\varepsilon$

\STATE Initialize $Q_\theta$, scalar $\tau \ge 0$, and warm-start  $z \leftarrow z_w$ as defined in Implementation details in Section \ref{sec:experiments} 

\FOR{iteration $k = 1, \dots, K$}

    \STATE Sample mini-batch $\{(s_i,a_i,s_i',a_i')\}_{i=1}^{b} \sim \mathcal{D}_e$
    \STATE Sample $\{(s_0^{(i)}, a_0^{(i)})\}_{i=1}^{b} \sim \mathcal{D}_e$ (See Implementation details in Experiments section \ref{sec:experiments})
    \STATE \textbf{// Step 1: Compute optimal density ratio $w^\star$}
    \STATE Compute $\{c_{Q, \pi_z}(s_i,a_i,s_i')\}_{i=1}^{b}$ as defined in Eq. \ref{eqn:final_lagrangian_constrained_bc}

    \STATE Compute $\{w^\star_{Q,\tau,\pi_z}(s,a,s')\}_{i=1}^{b}$ via closed-form in Proposition \ref{proposition:optimal_importance_weight}

    \STATE \textbf{// Step 2: Update critic $(Q,\tau)$}
    \STATE Estimate gradient of:
    \[
    \begin{aligned}
    \mathcal{L}_{\theta, \tau} = 
    &\frac{(1-\gamma)}{b}\sum_{i=1}^{b} Q_{\theta}(s_0^{i}, a_0^{i})  + \epsilon \tau \\
    &-\frac{\varepsilon}{b}\left[\sum_{i=1}^{b}\left(f(w^\star_{Q_{\theta},\tau,\pi_z}(s_i,a_i,s_i') + w^\star_{Q_{\theta},\tau,\pi_z}(s_i,a_i,s_i') c_{Q_{\theta},\pi_z}(s_i,a_i,s_i')\right)\right]
    \end{aligned}
    \]
    \STATE Update $\theta \leftarrow \theta - \eta_Q \nabla_\theta \mathcal{L}_{Q_{\theta},\tau}$
    \STATE Update $\tau \leftarrow \max(0, \tau - \eta_\tau \nabla_\tau \mathcal{L}_{Q_{\theta},\tau})$

    \STATE \textbf{// Step 3: Policy update (actor)}
    \STATE Estimate:
    \[
    \mathcal{L}_{\pi_z} = \sum_{i=1}^{b} \frac{w^\star_{Q_{\theta},\tau,\pi_z}(s_i,a_i,s_i')L_{\pi_z}(s_i)}{b} 
    \]
    \STATE Update $z \leftarrow z - \eta_\pi \nabla_z \mathcal{L}_{\pi_z}$

\ENDFOR

\STATE \textbf{return} $(Q_\theta, \tau, z)$

\end{algorithmic}
\end{algorithm}

\appsection{Extended Results}

\begin{figure}[t]
\centering
\setlength{\tabcolsep}{4pt}
\renewcommand{\arraystretch}{0.0}

\begin{tabular}{>{\centering\arraybackslash}m{0.5cm}
                >{\centering\arraybackslash}m{0.30\textwidth}
                >{\centering\arraybackslash}m{0.30\textwidth}
                >{\centering\arraybackslash}m{0.30\textwidth}}

&
\textbf{\hspace{0.2cm}Gravity} &
\textbf{\hspace{0.2cm}Mass} &
\textbf{\hspace{0.4cm}Joint Friction Loss} \\[2pt]

\cellcolor{runblue}\rotatebox{90}{\small\textbf{Run}} &
\cellcolor{runblue}\includegraphics[width=\linewidth]{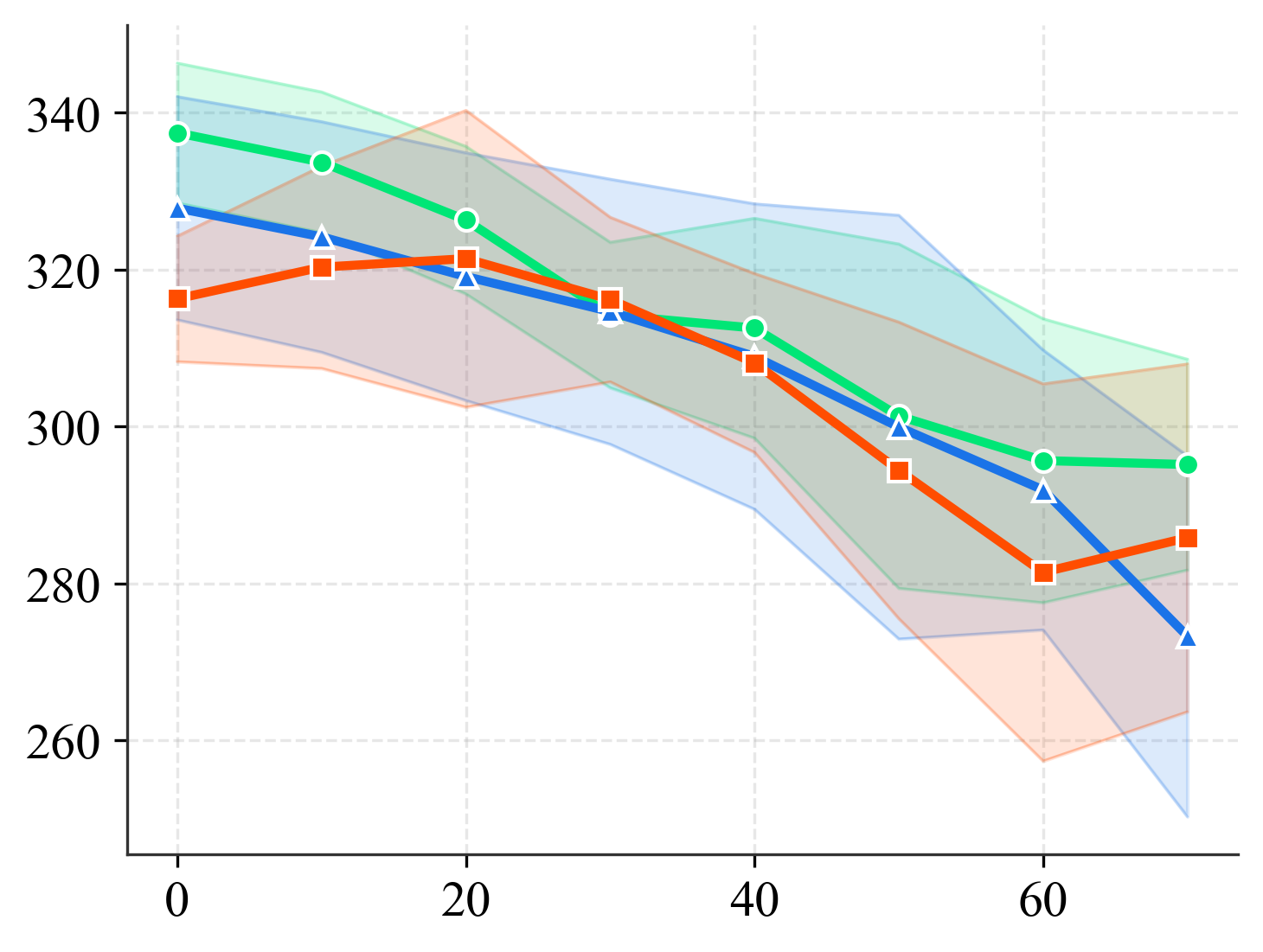} &
\cellcolor{runblue}\includegraphics[width=\linewidth]{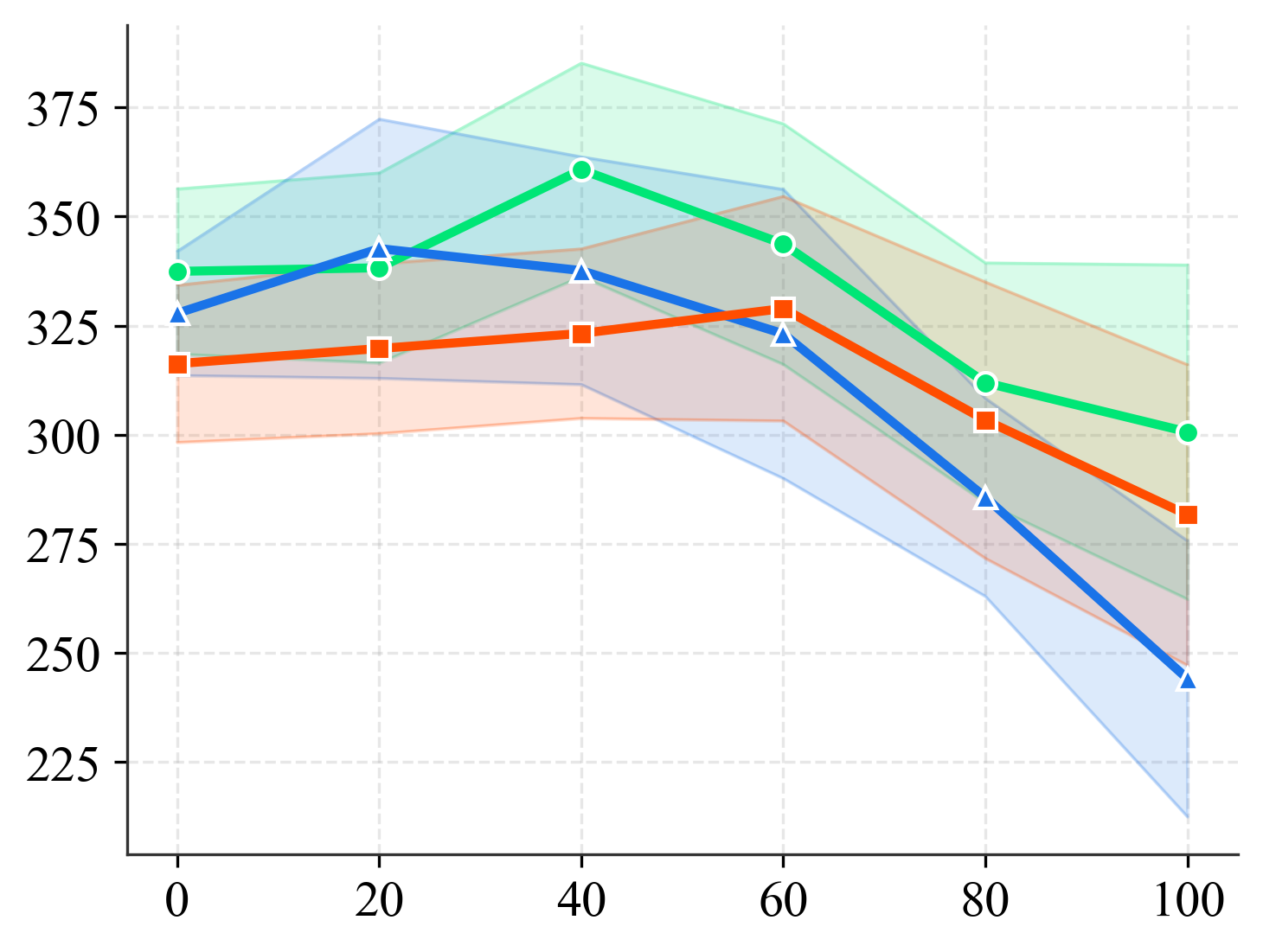} &
\cellcolor{runblue}\includegraphics[width=\linewidth]{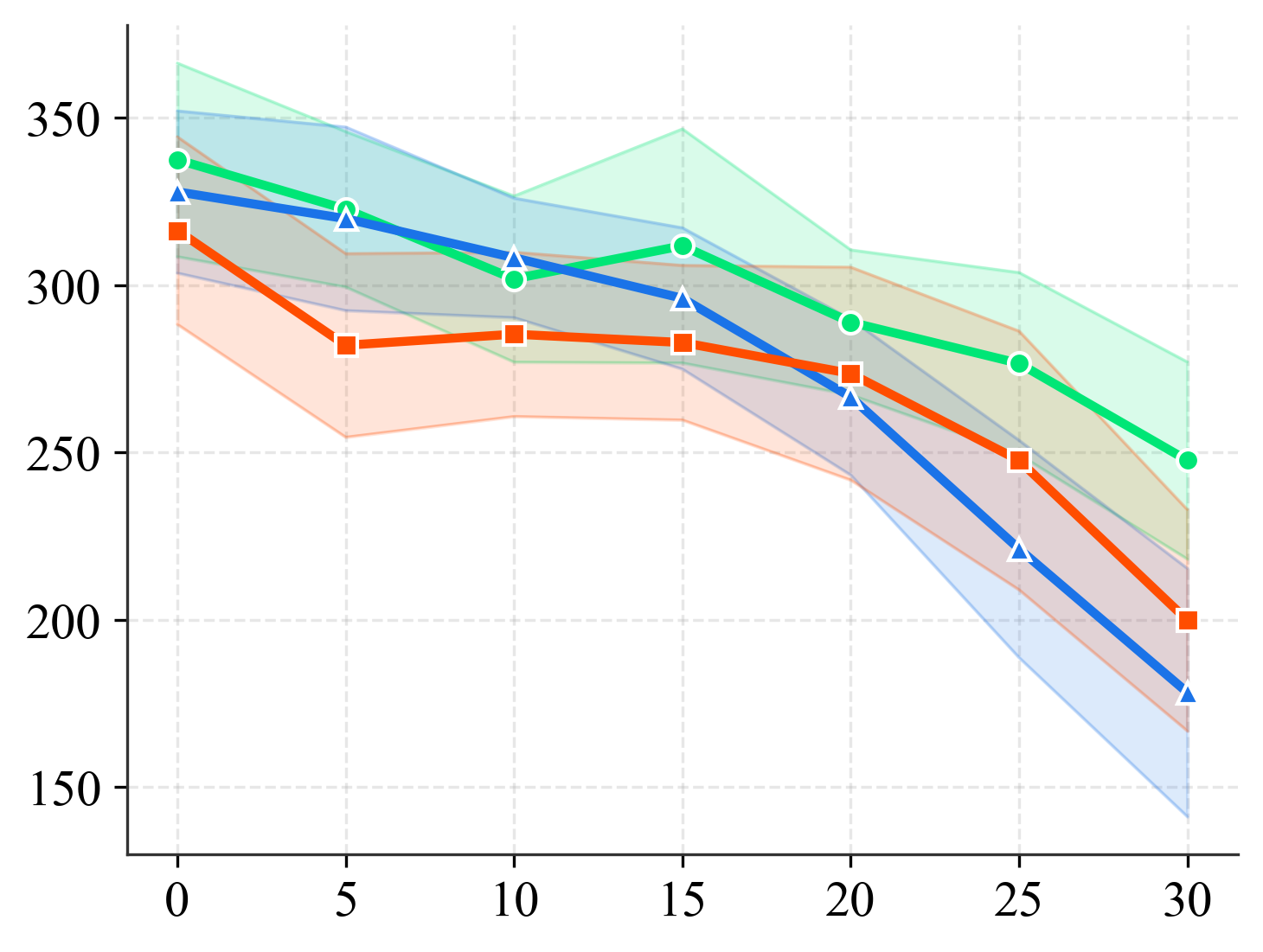} \\[1pt]

\cellcolor{walkgreen}\rotatebox{90}{\small\textbf{Walk}} &
\cellcolor{walkgreen}\includegraphics[width=\linewidth]{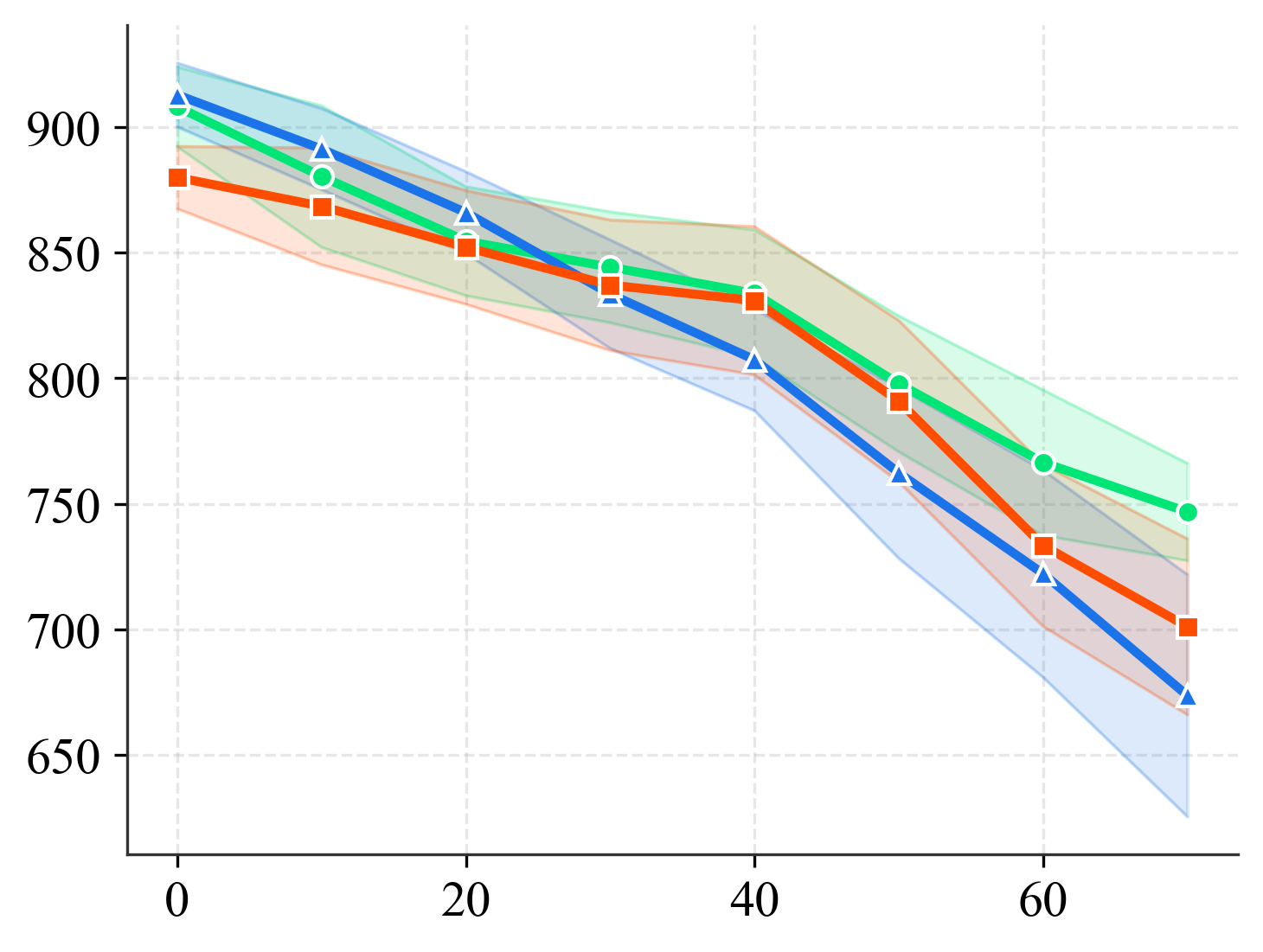} &
\cellcolor{walkgreen}\includegraphics[width=\linewidth]{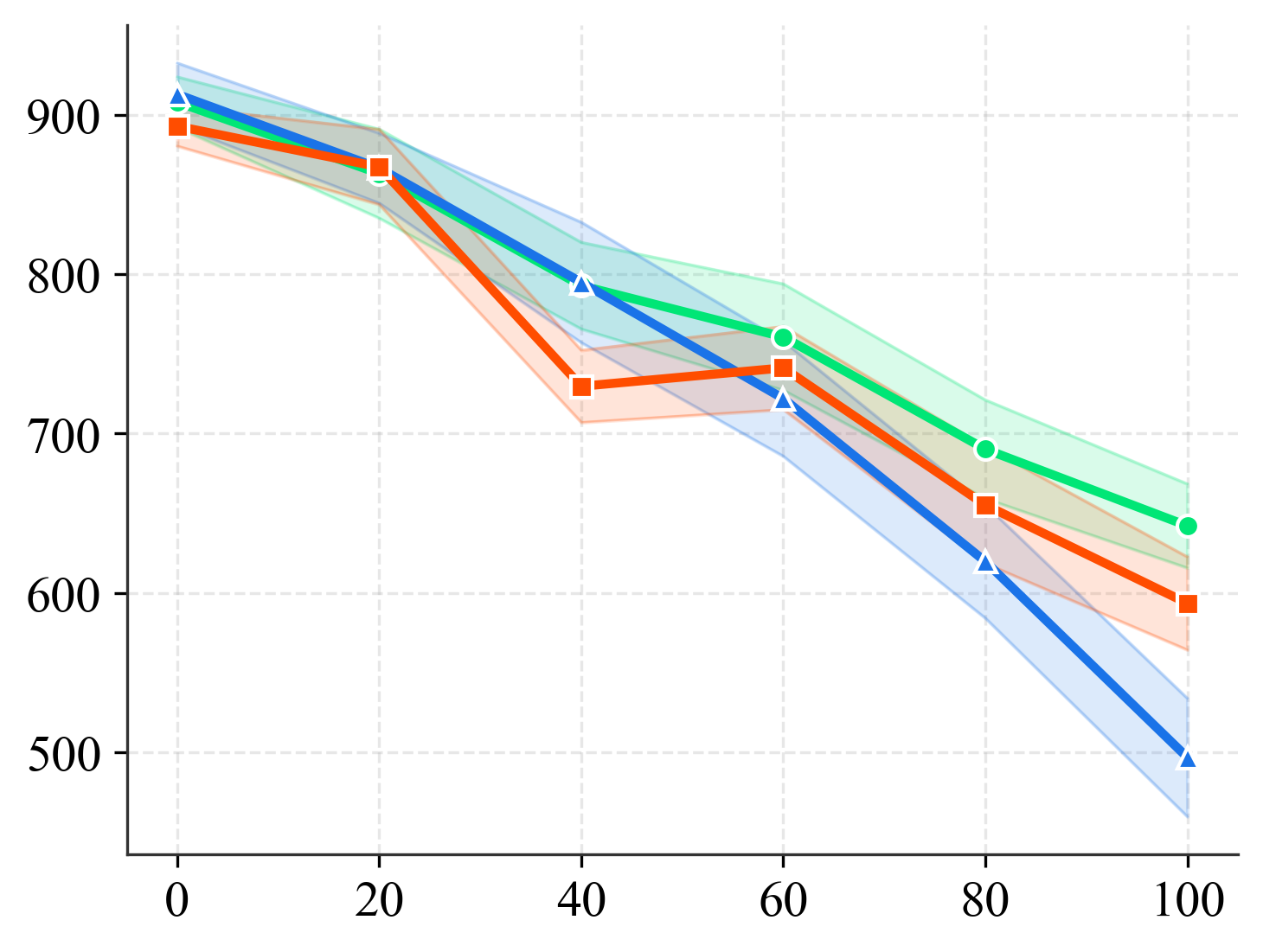} &
\cellcolor{walkgreen}\includegraphics[width=\linewidth]{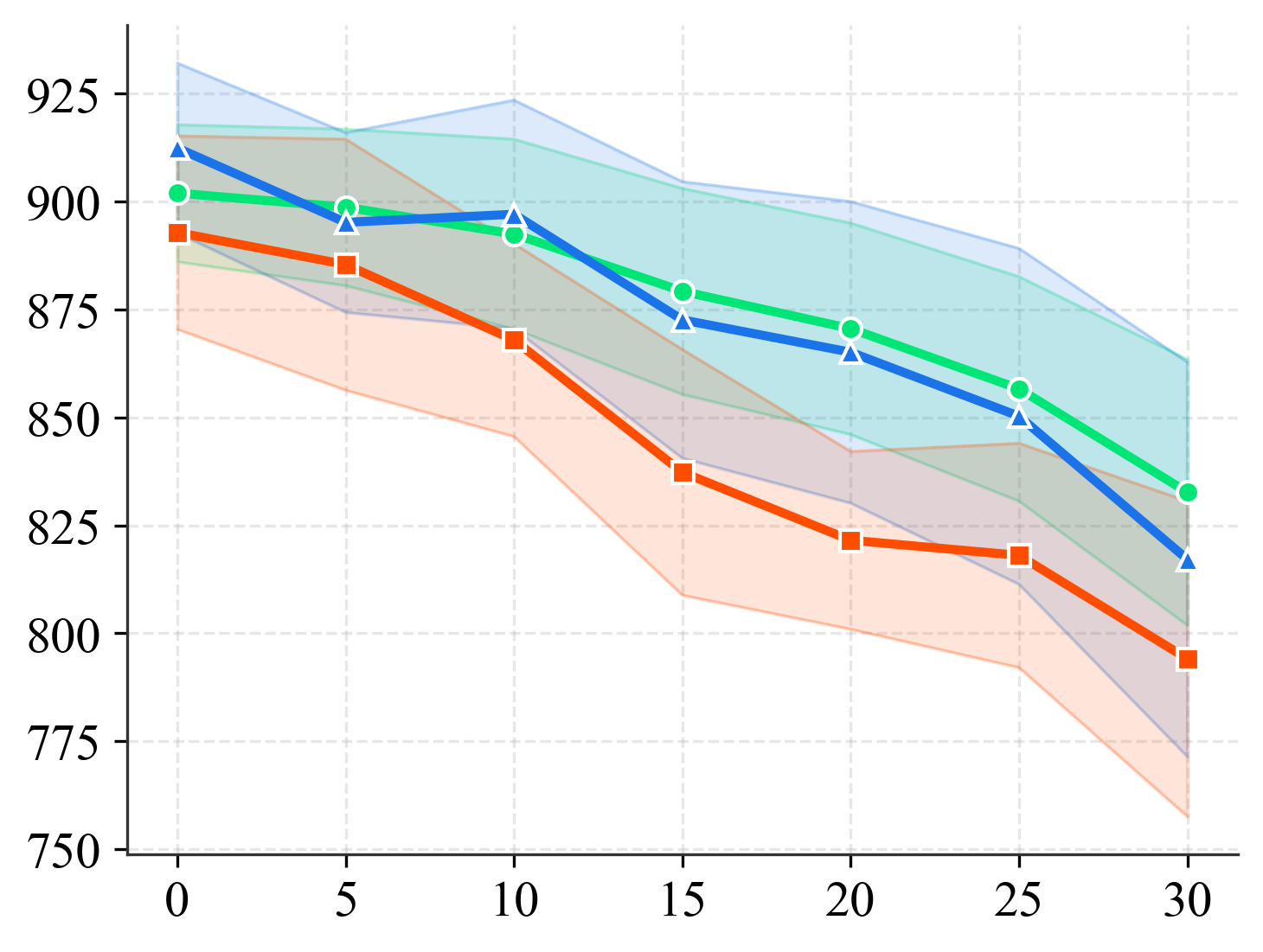} \\[1pt]

\cellcolor{fliporange}\rotatebox{90}{\small\textbf{Flip}} &
\cellcolor{fliporange}\includegraphics[width=\linewidth]{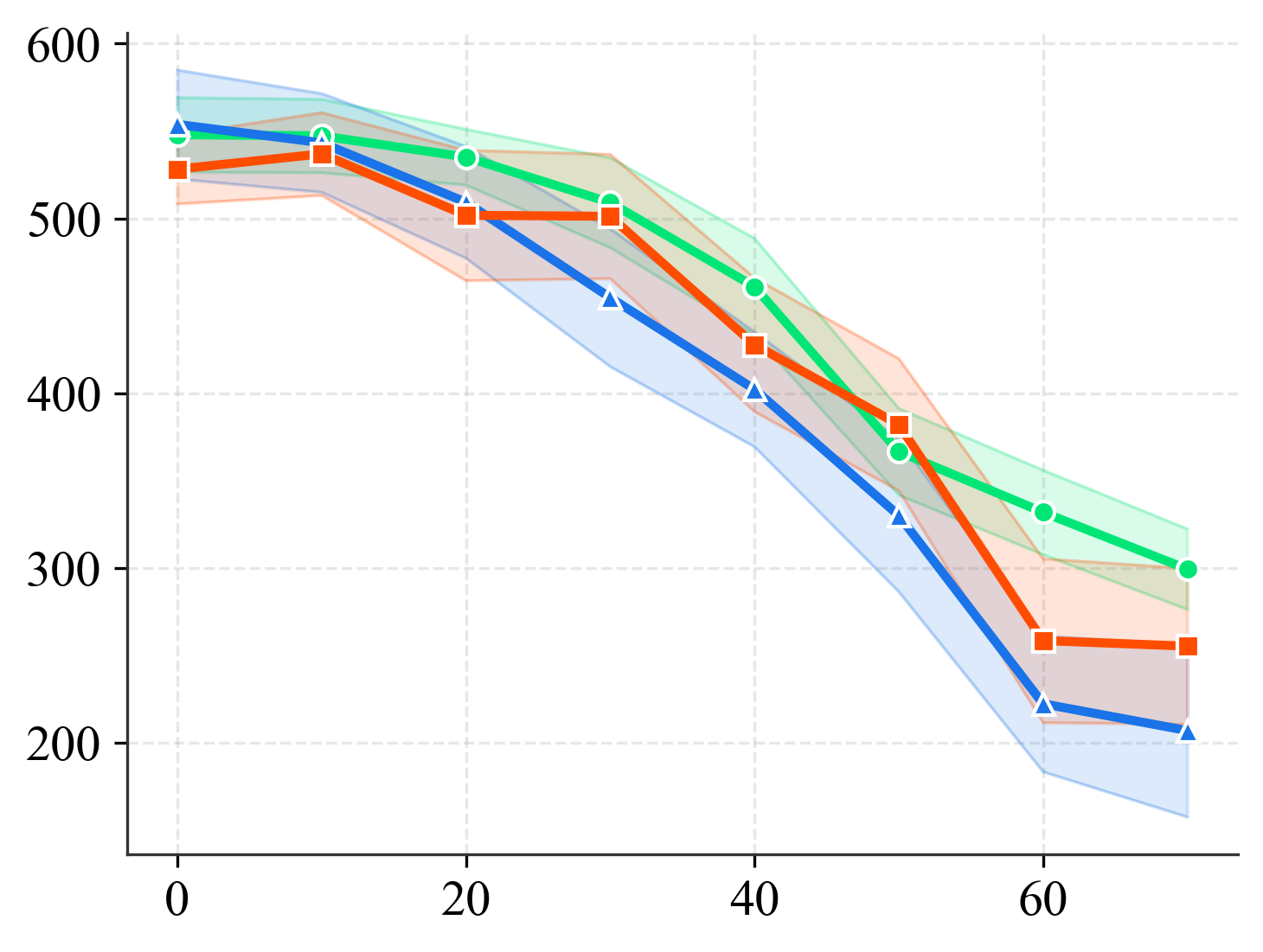} &
\cellcolor{fliporange}\includegraphics[width=\linewidth]{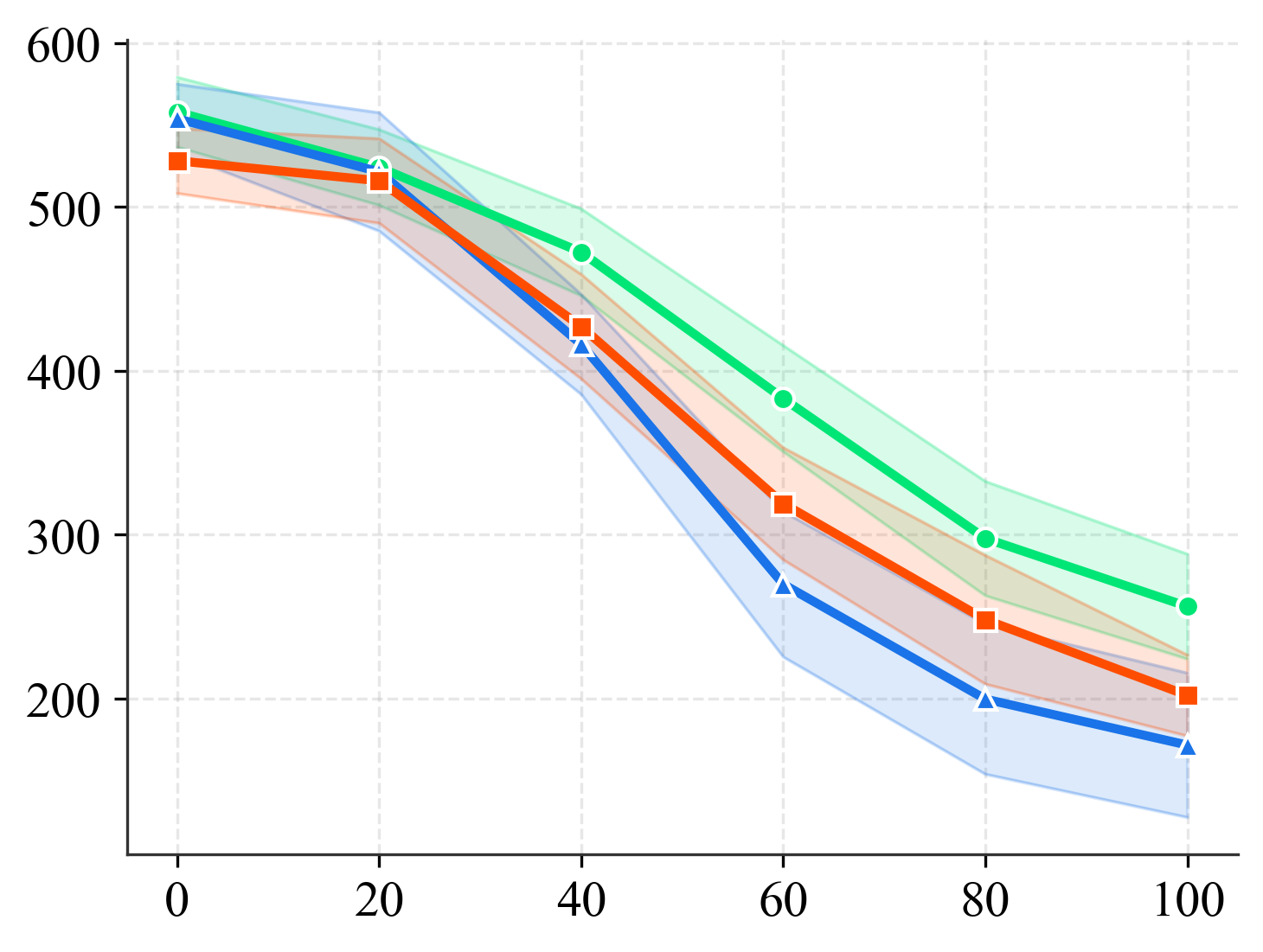} &
\cellcolor{fliporange}\includegraphics[width=\linewidth]{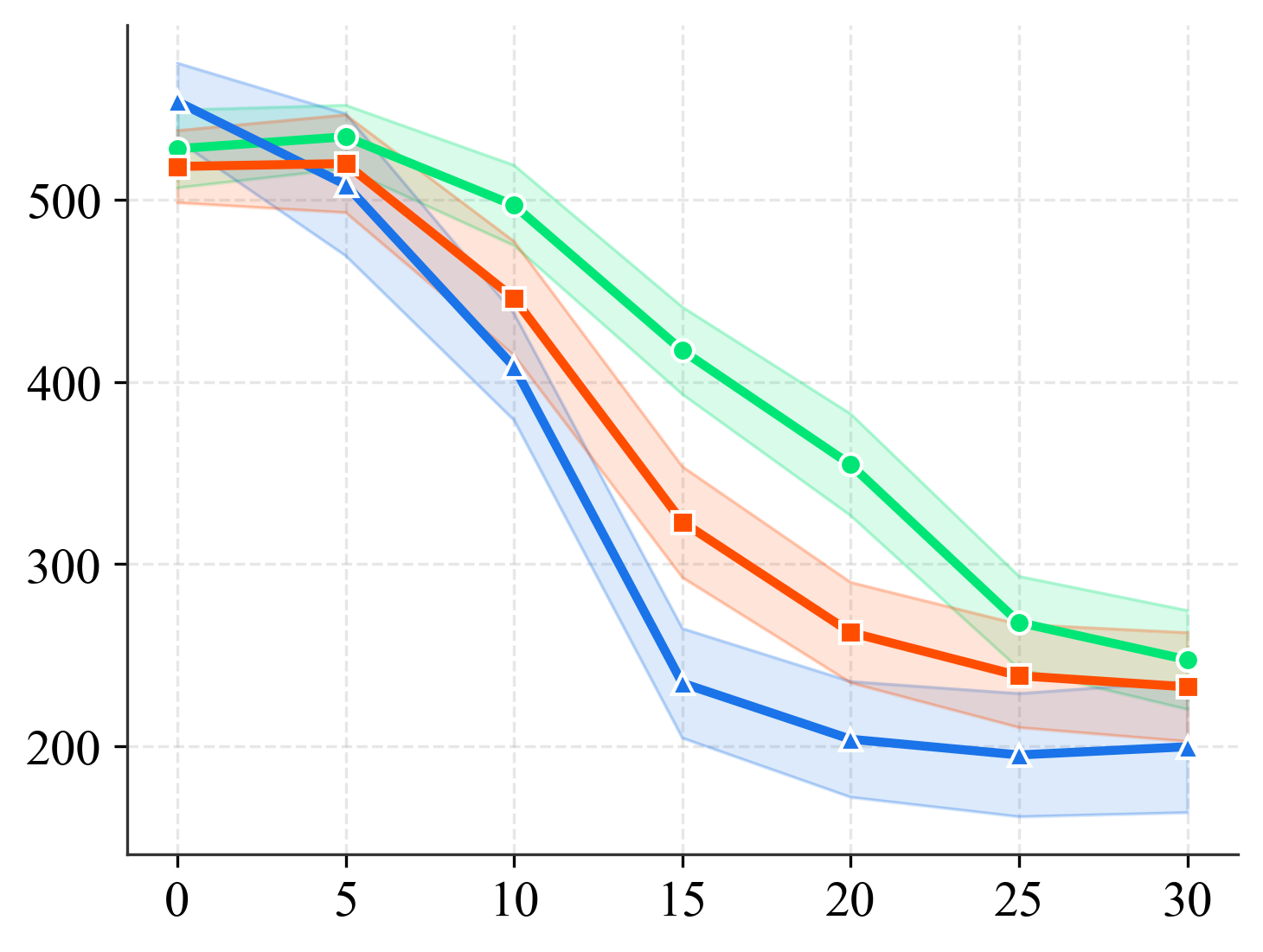} \\[1pt]

\cellcolor{standpurple}\rotatebox{90}{\small\textbf{Stand}} &
\cellcolor{standpurple}\includegraphics[width=\linewidth]{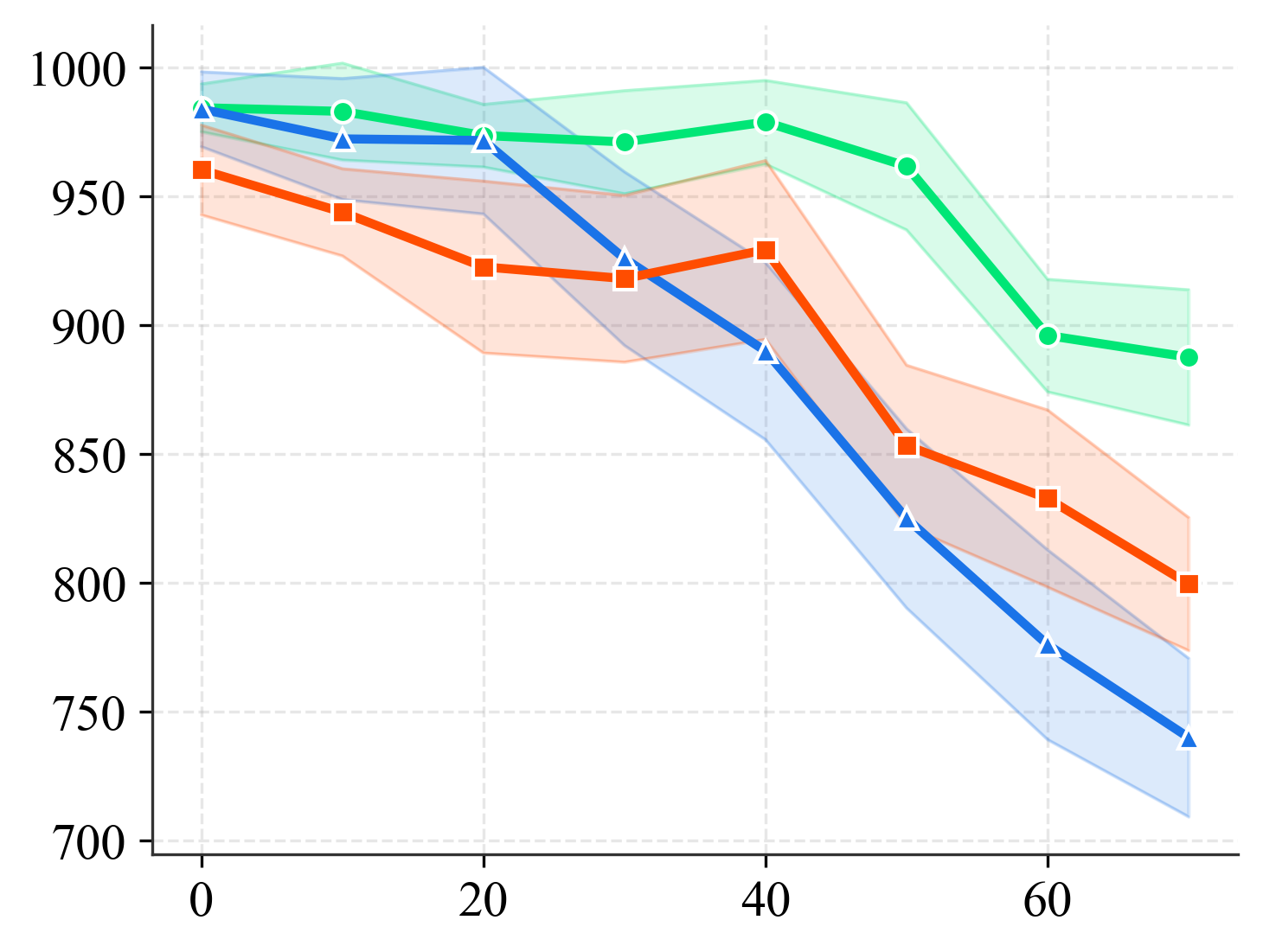} &
\cellcolor{standpurple}\includegraphics[width=\linewidth]{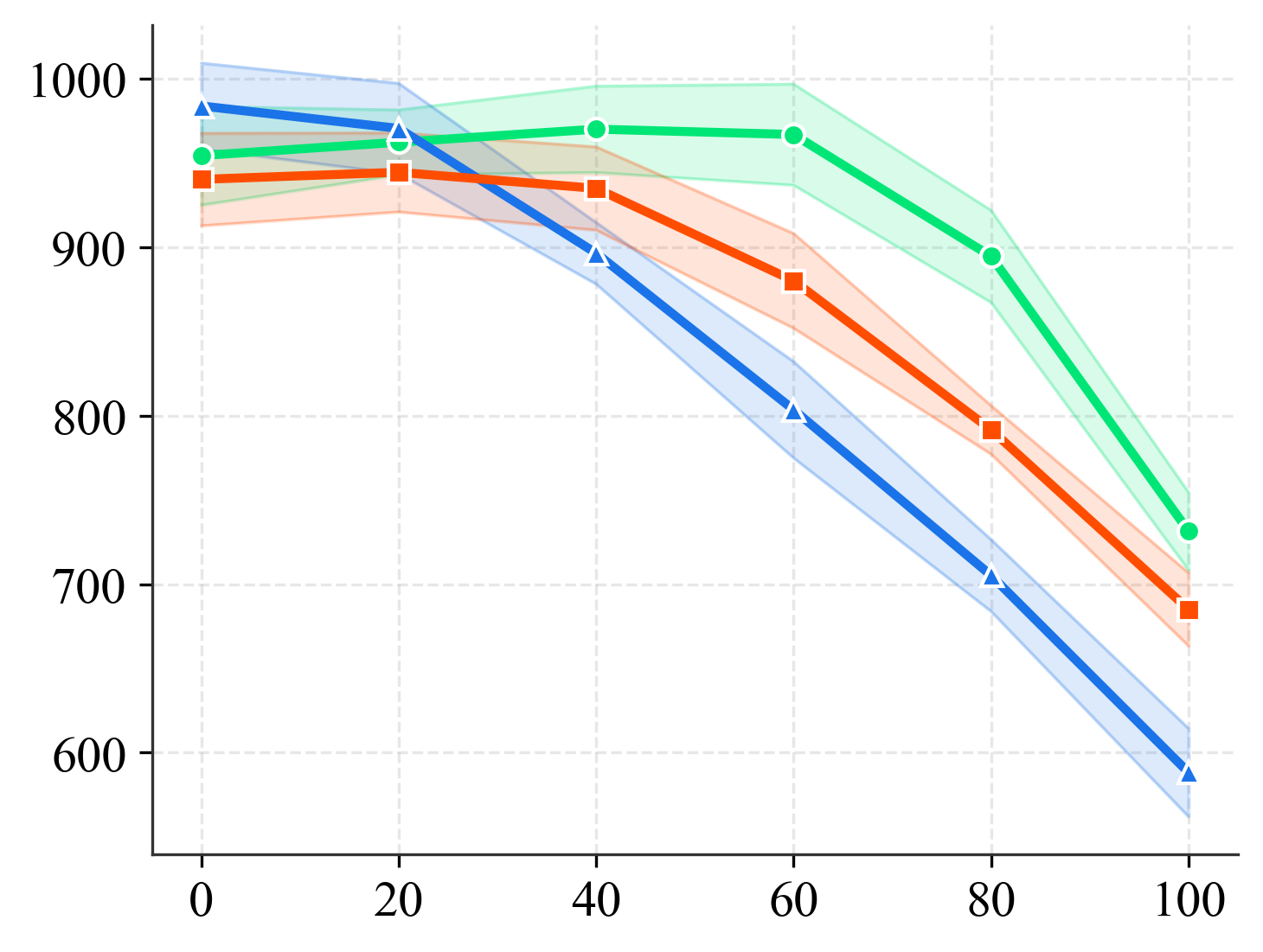} &
\cellcolor{standpurple}\includegraphics[width=\linewidth]{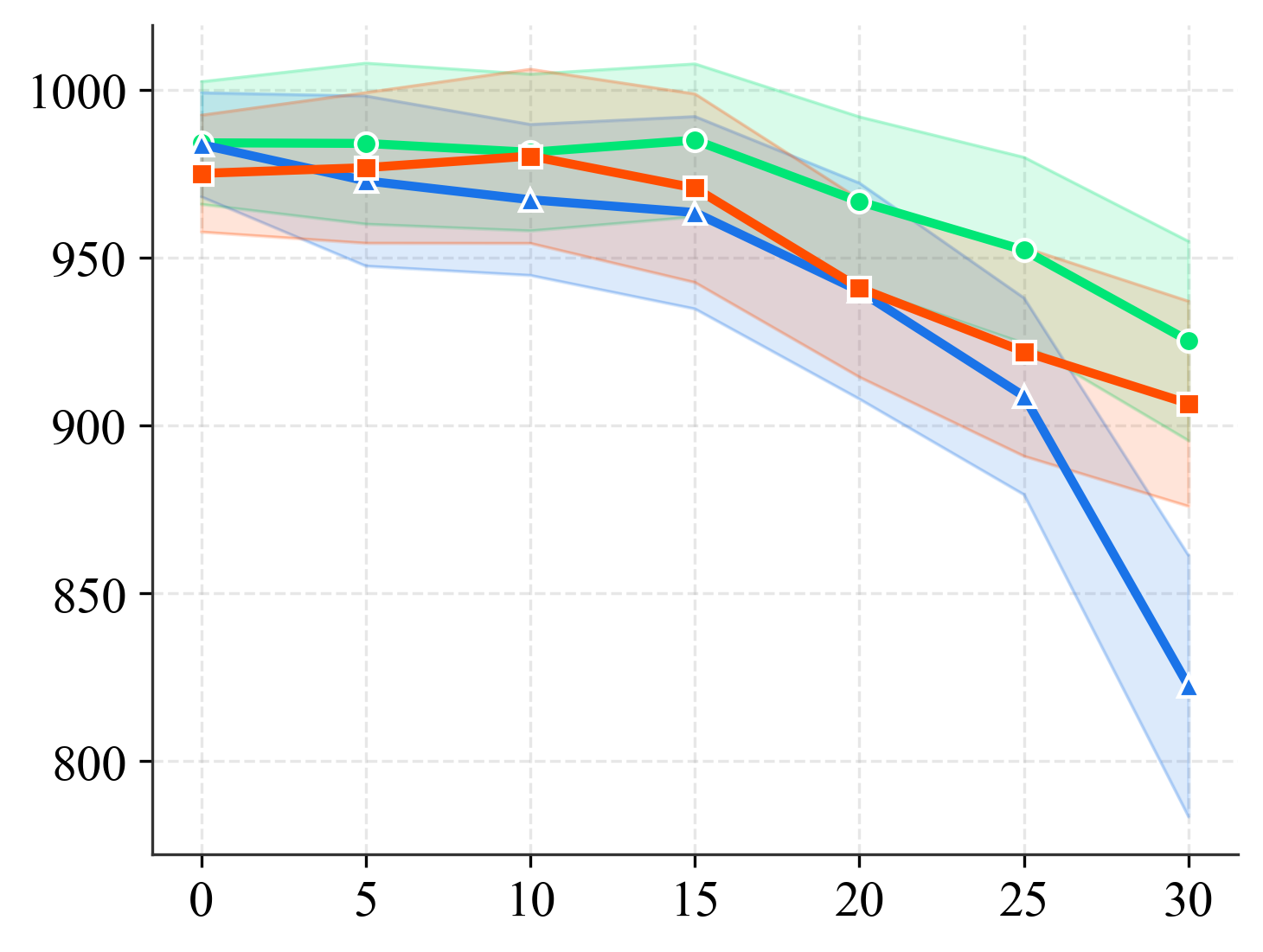} \\

\end{tabular}

\vspace{0.2cm}
\begin{minipage}{0.95\textwidth}
\centering
\footnotesize
\legendcircle{rbfmheavy}{RBFM-Heavy}\hspace{3.0em}
\legendtriangle{fbil}{FB-IL}\hspace{3.0em}
\legendsquare{rbfmlight}{RBFM-Light}
\end{minipage}

\caption{Walker performance with $95\%$ confidence interval across four tasks (rows) and three perturbation types (columns): gravity and body mass increase the physical load on the robot (\% change from nominal); joint friction loss adds passive resistive torque at each joint, simulating mechanical wear (absolute Nm per joint). Pretrained on RND data.}
\label{fig:walker}
\end{figure}

\begin{figure}[t]
\centering
\setlength{\tabcolsep}{4pt}
\renewcommand{\arraystretch}{0.0}

\begin{tabular}{>{\centering\arraybackslash}m{0.5cm}
                >{\centering\arraybackslash}m{0.30\textwidth}
                >{\centering\arraybackslash}m{0.30\textwidth}
                >{\centering\arraybackslash}m{0.30\textwidth}}

&
\textbf{\hspace{0.2cm}Range of Motion} &
\textbf{\hspace{0.2cm}Actuator Strength} &
\textbf{\hspace{0.4cm}Joint Friction Loss} \\[2pt]

\cellcolor{runblue}\rotatebox{90}{\small\textbf{Run}} &
\cellcolor{runblue}\includegraphics[width=\linewidth]{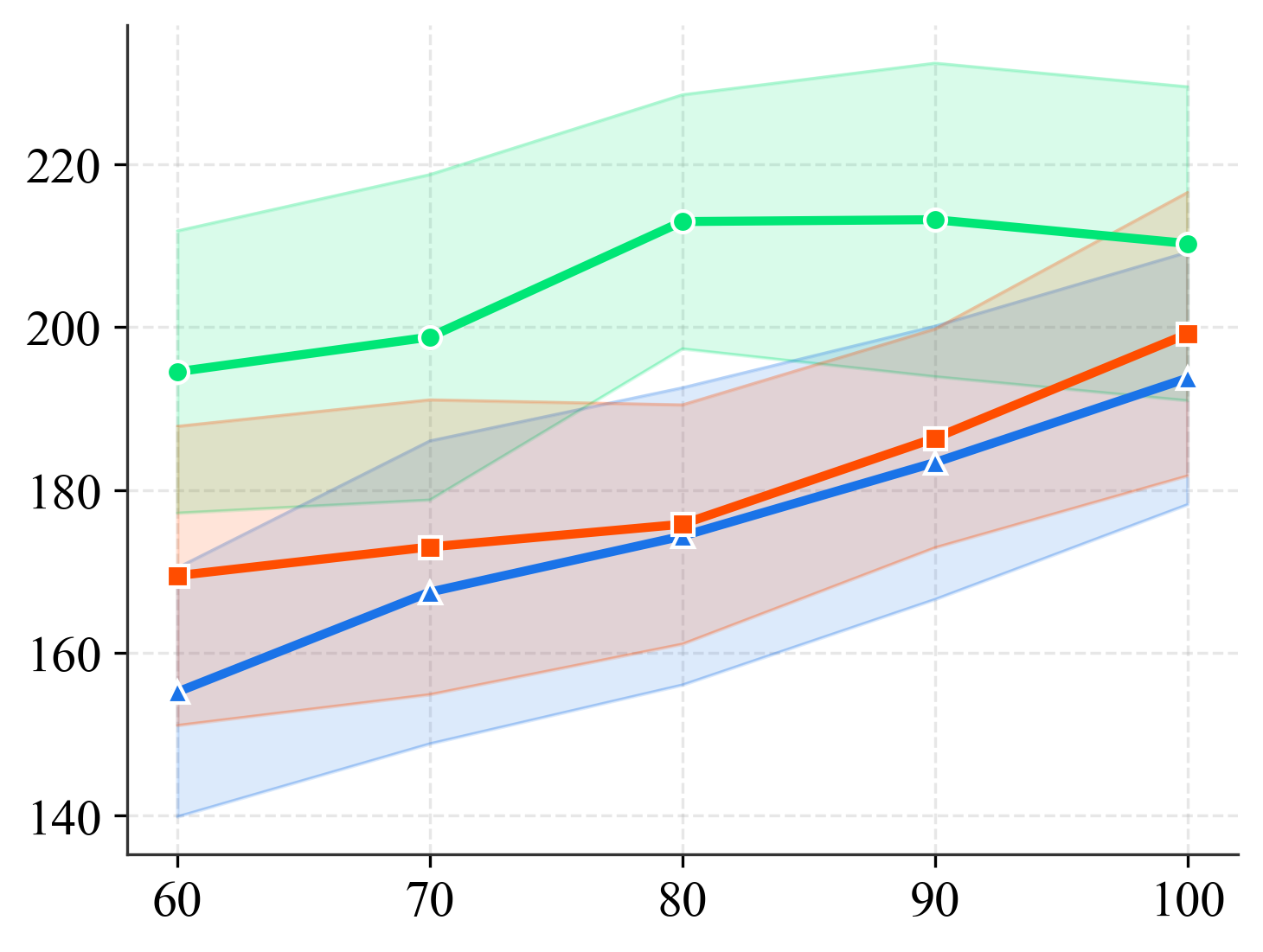} &
\cellcolor{runblue}\includegraphics[width=\linewidth]{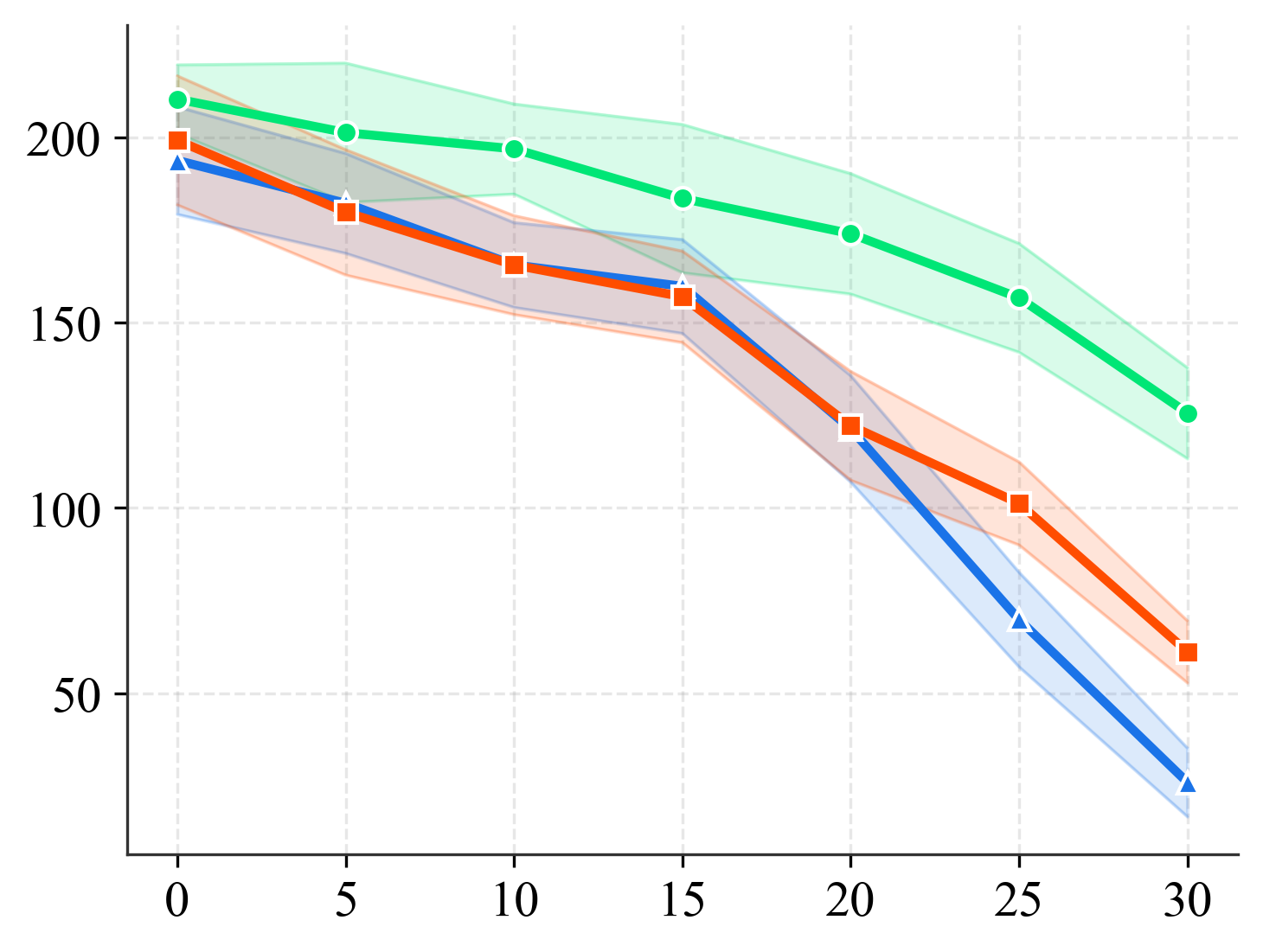} &
\cellcolor{runblue}\includegraphics[width=\linewidth]{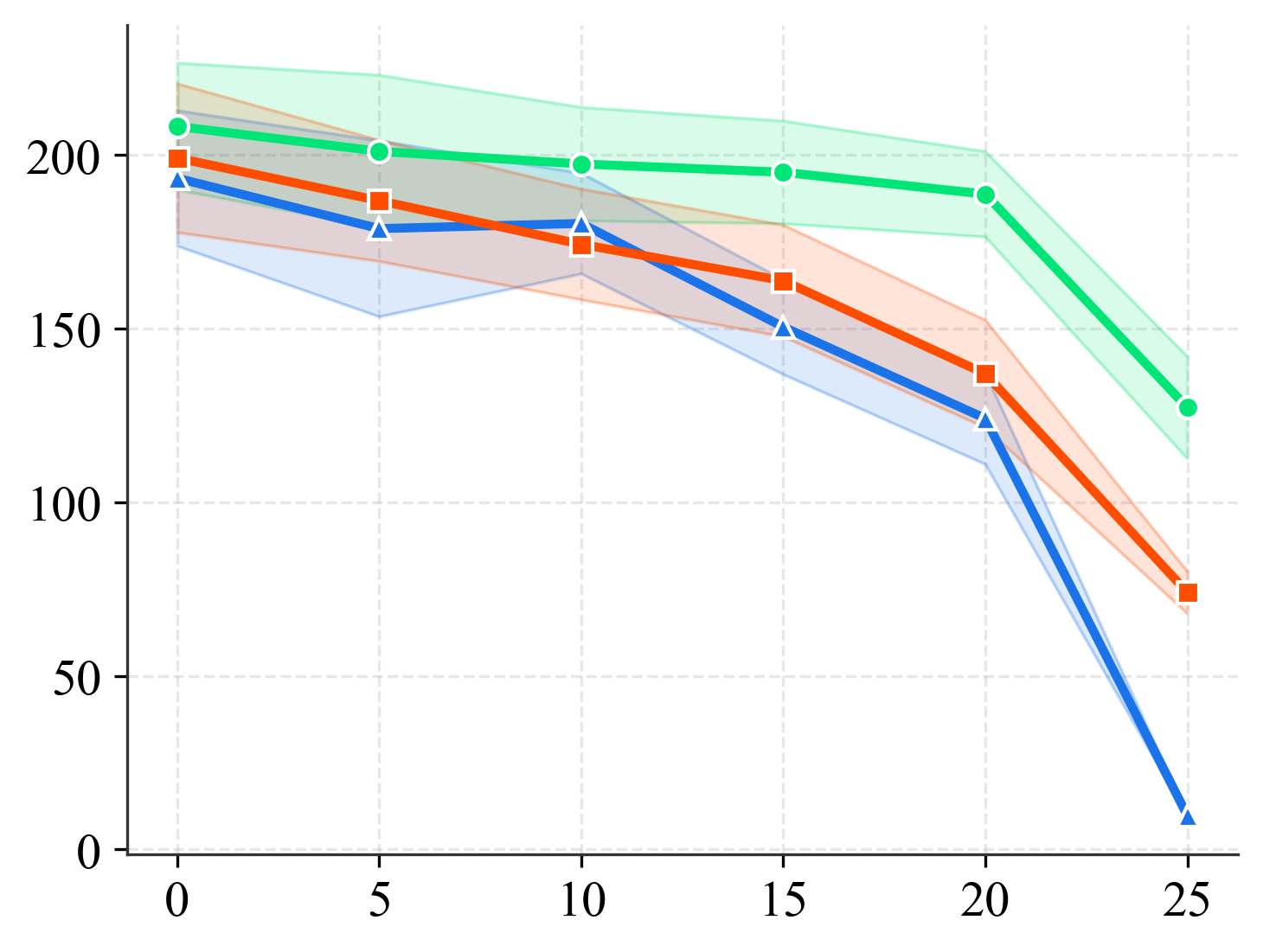} \\[1pt]

\cellcolor{walkgreen}\rotatebox{90}{\small\textbf{Run Backward}} &
\cellcolor{walkgreen}\includegraphics[width=\linewidth]{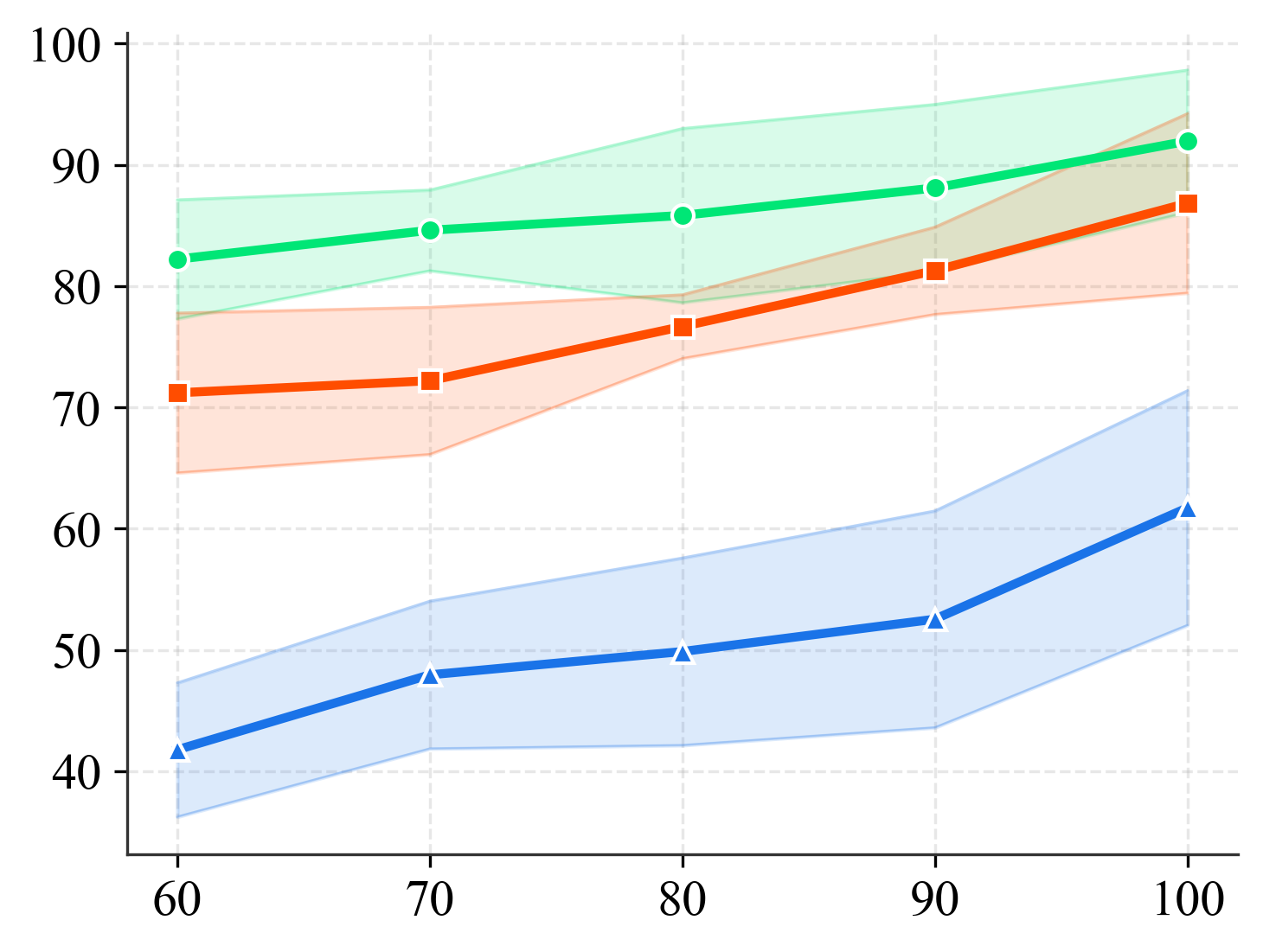} &
\cellcolor{walkgreen}\includegraphics[width=\linewidth]{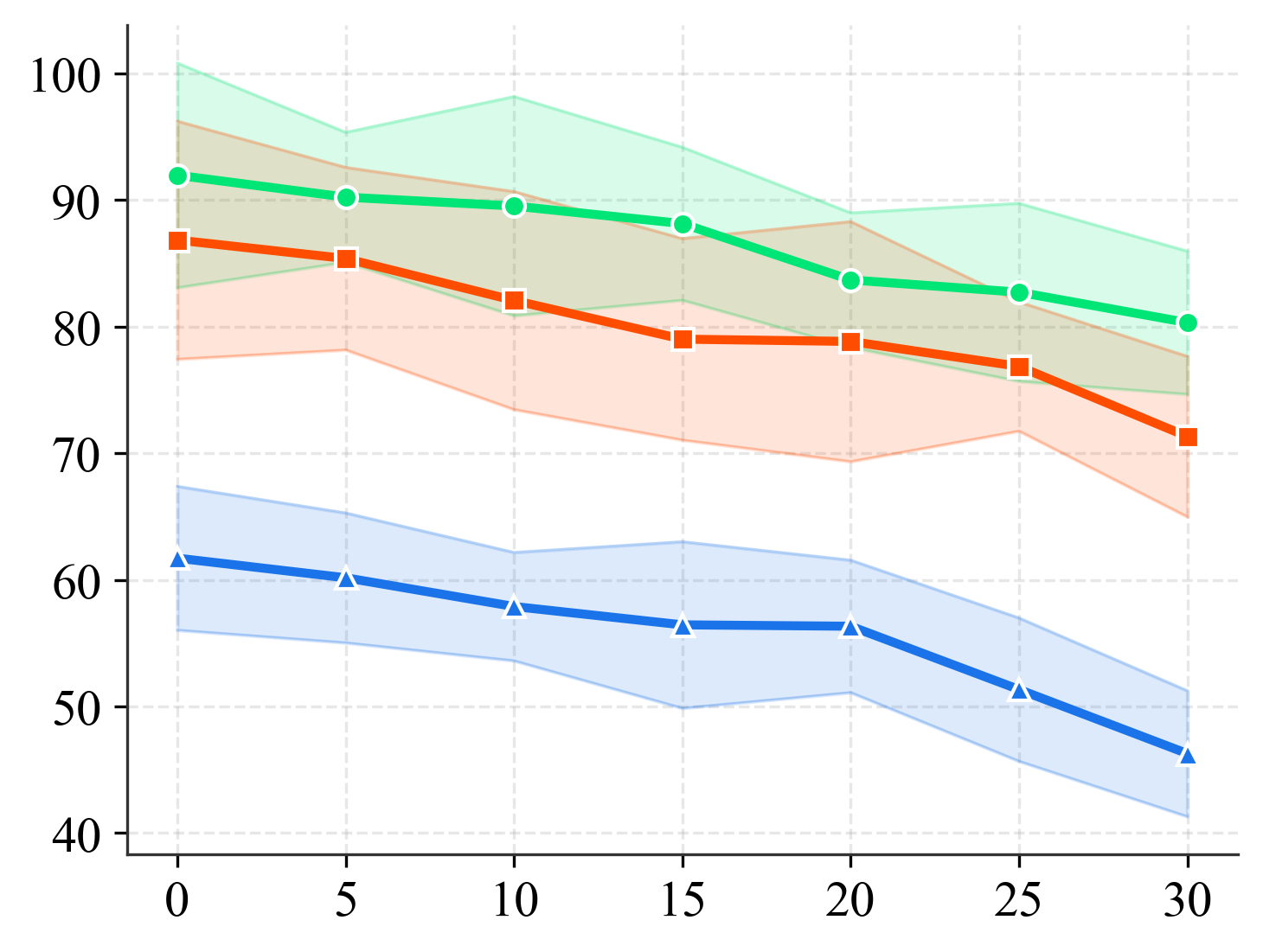} &
\cellcolor{walkgreen}\includegraphics[width=\linewidth]{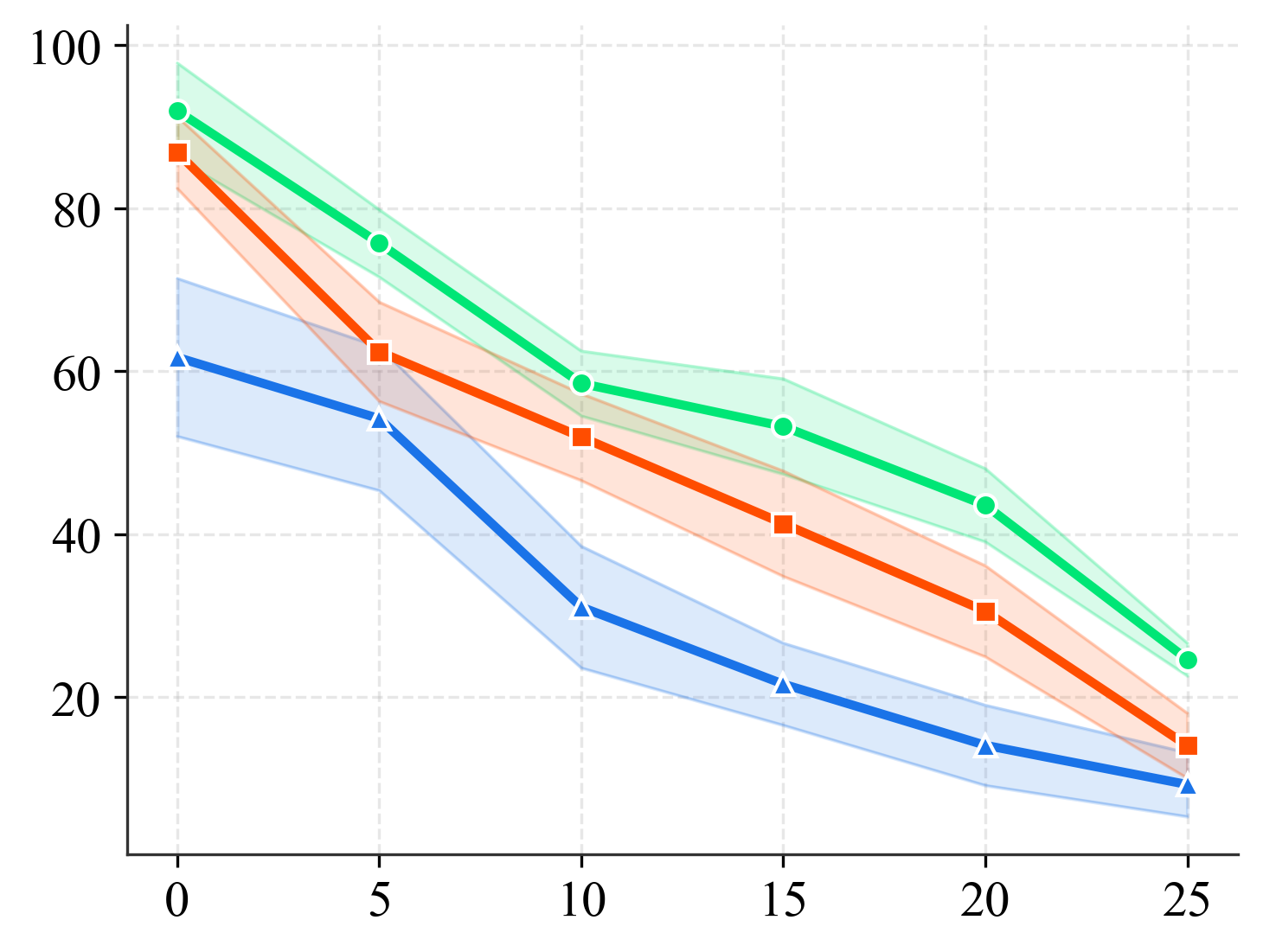} \\[1pt]

\cellcolor{fliporange}\rotatebox{90}{\small\textbf{Walk}} &
\cellcolor{fliporange}\includegraphics[width=\linewidth]{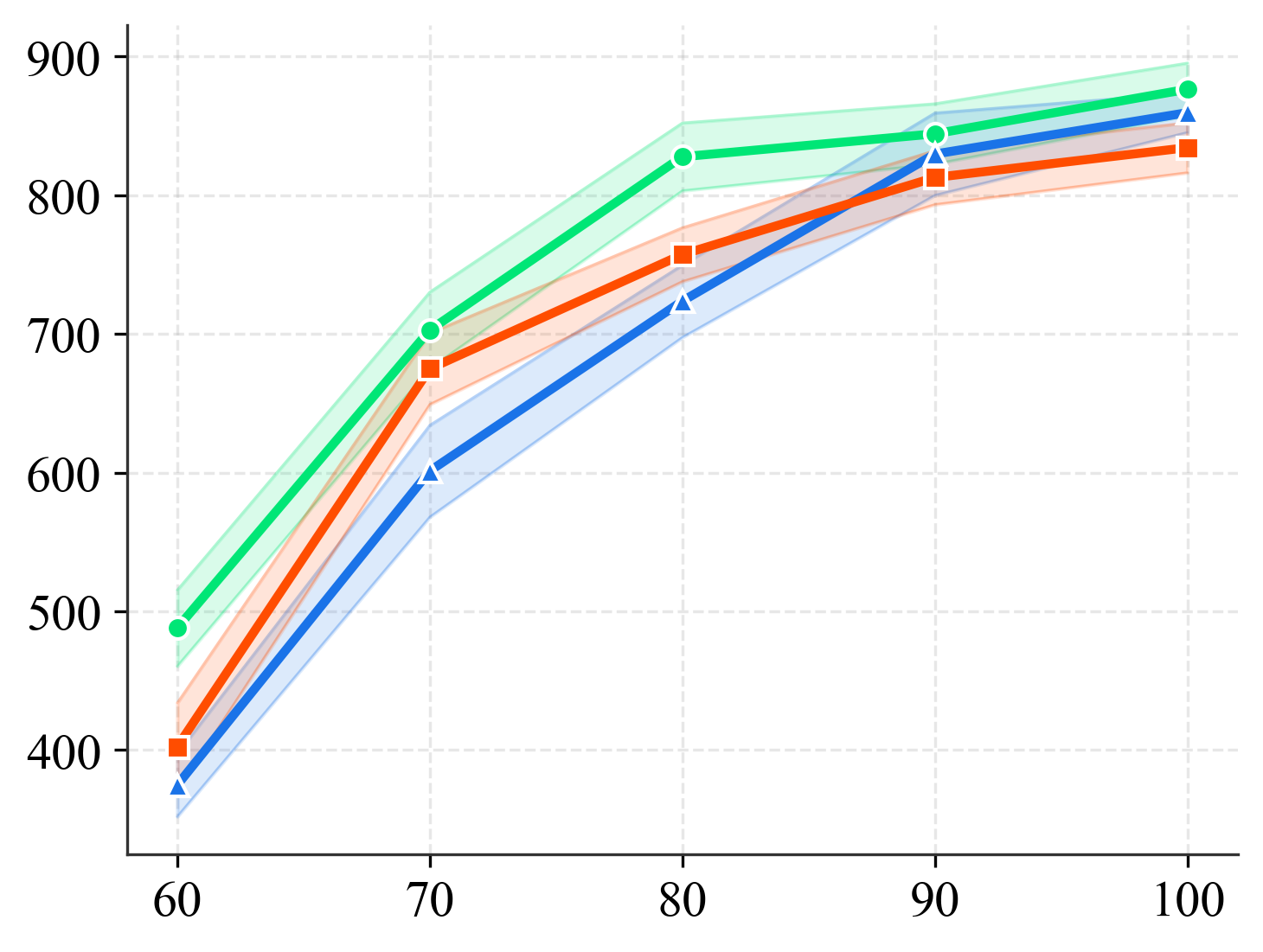} &
\cellcolor{fliporange}\includegraphics[width=\linewidth]{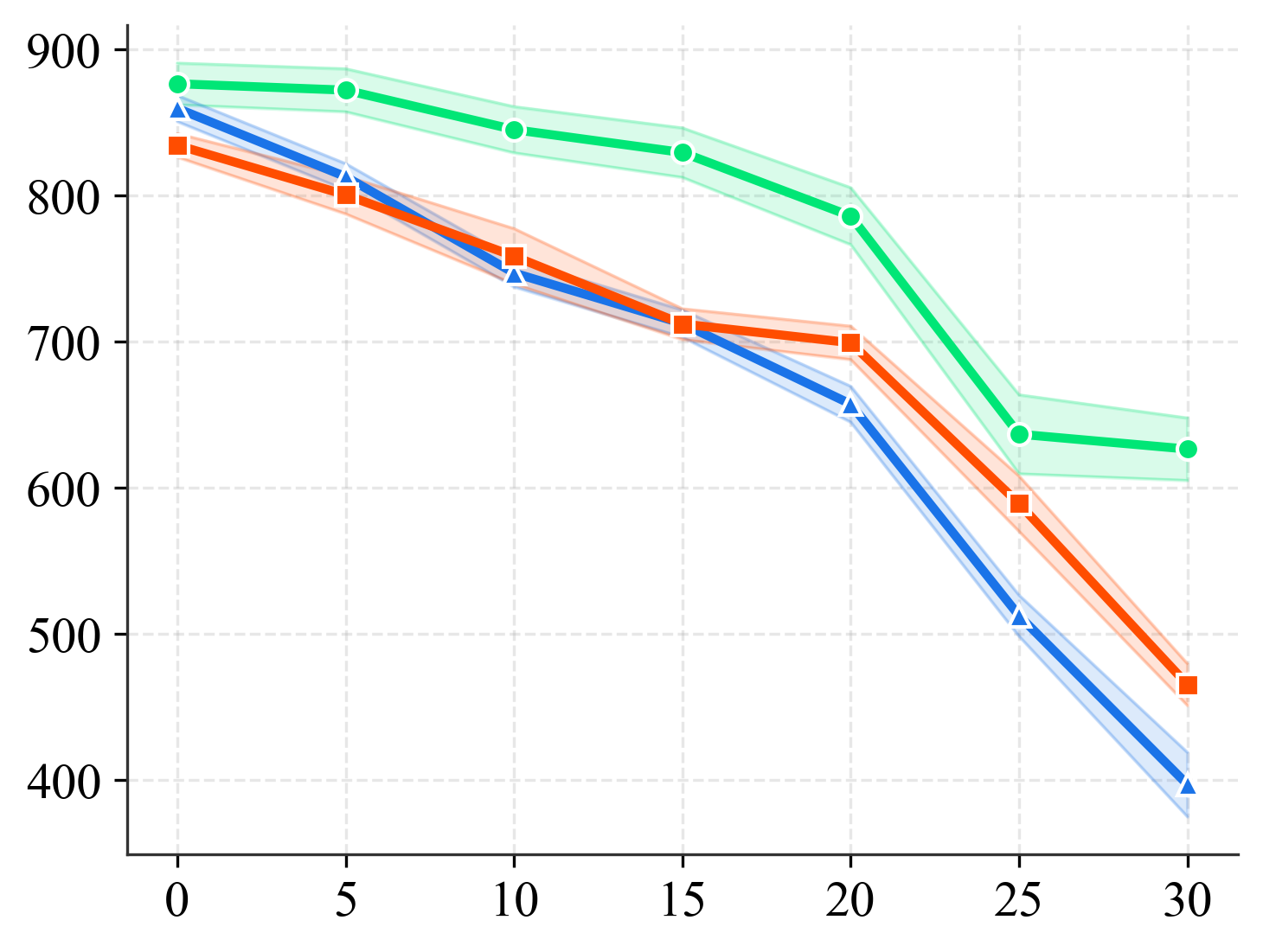} &
\cellcolor{fliporange}\includegraphics[width=\linewidth]{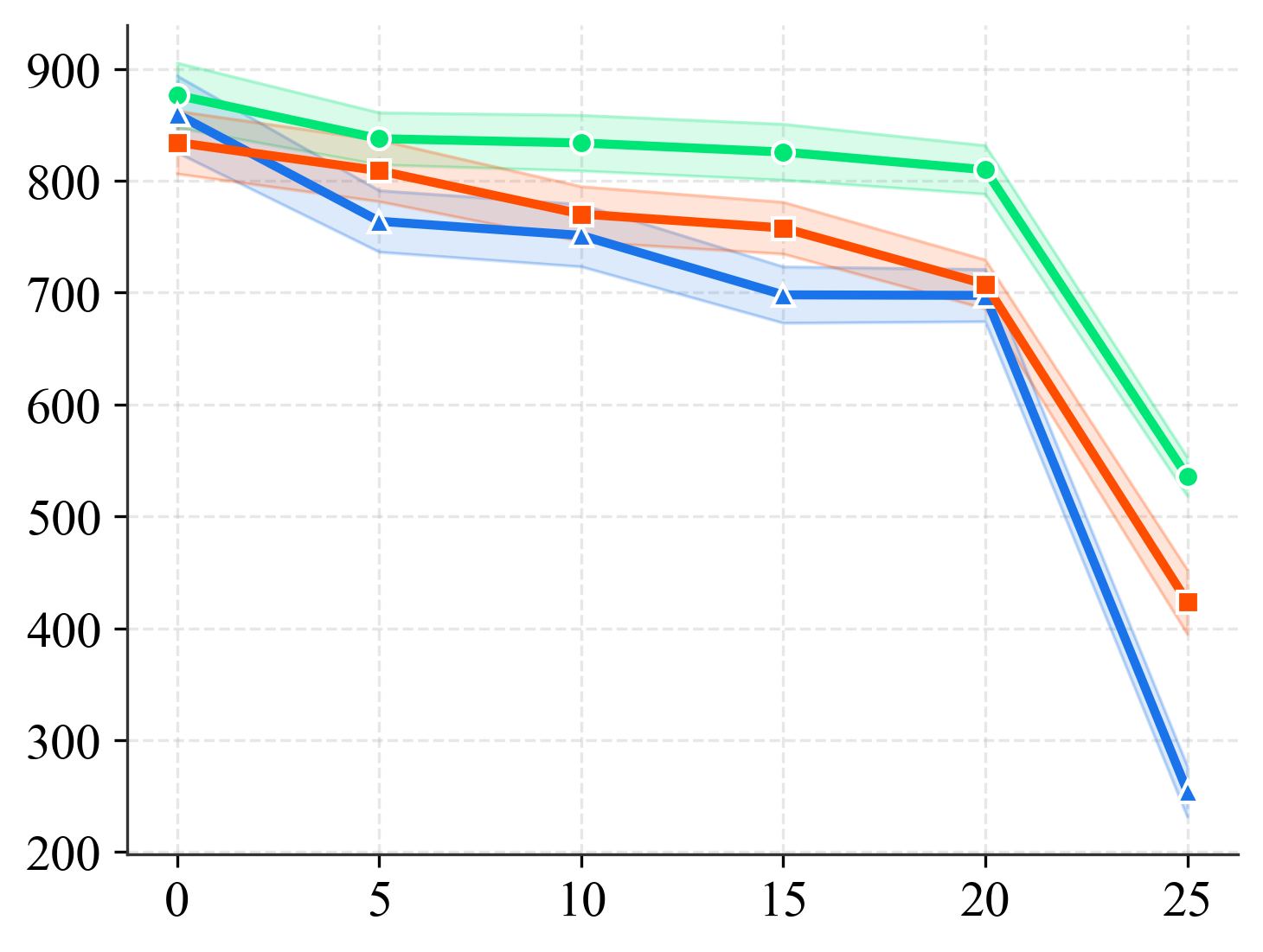} \\[1pt]

\cellcolor{standpurple}\rotatebox{90}{\small\textbf{Walk Backward}} &
\cellcolor{standpurple}\includegraphics[width=\linewidth]{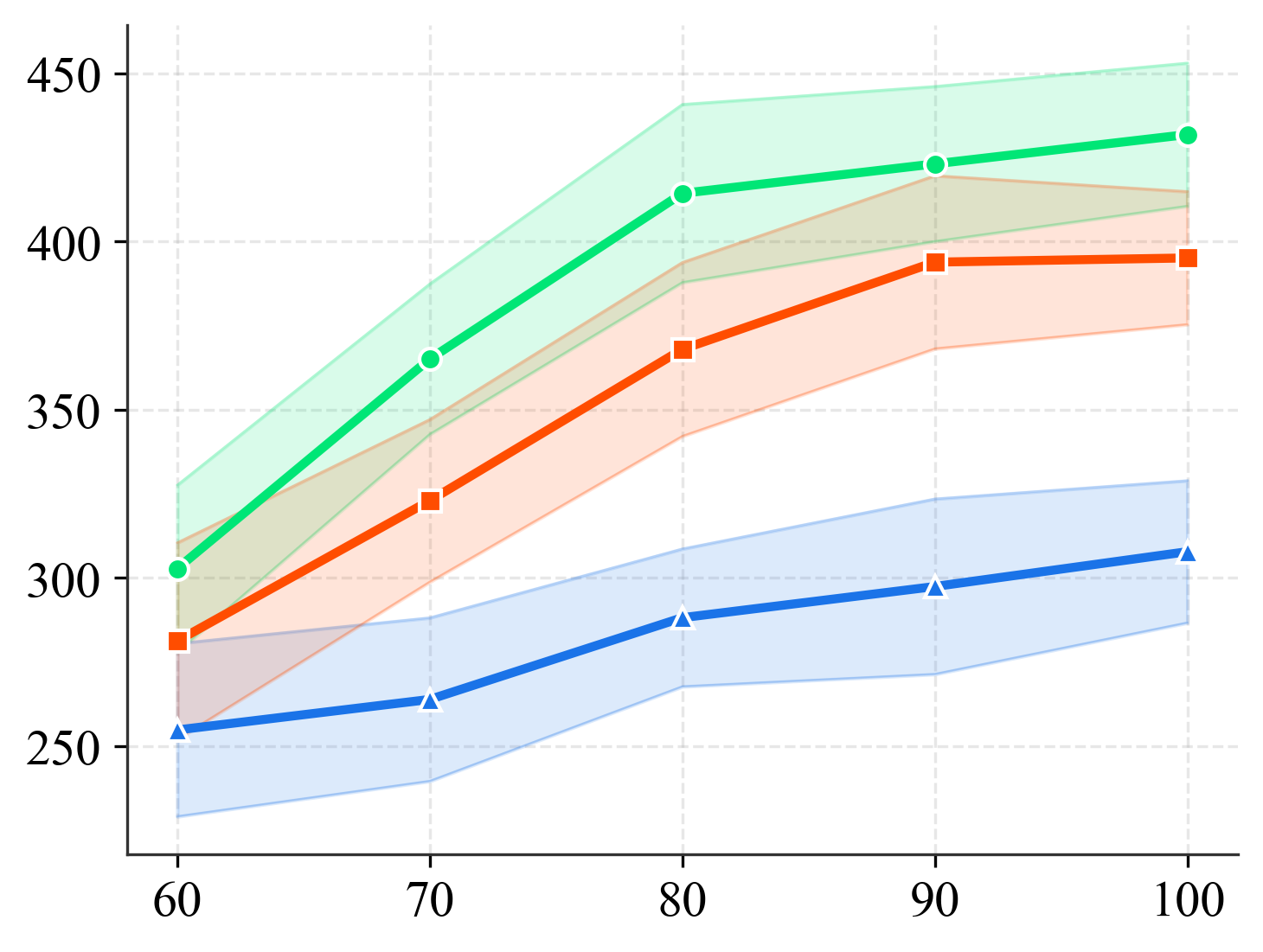} &
\cellcolor{standpurple}\includegraphics[width=\linewidth]{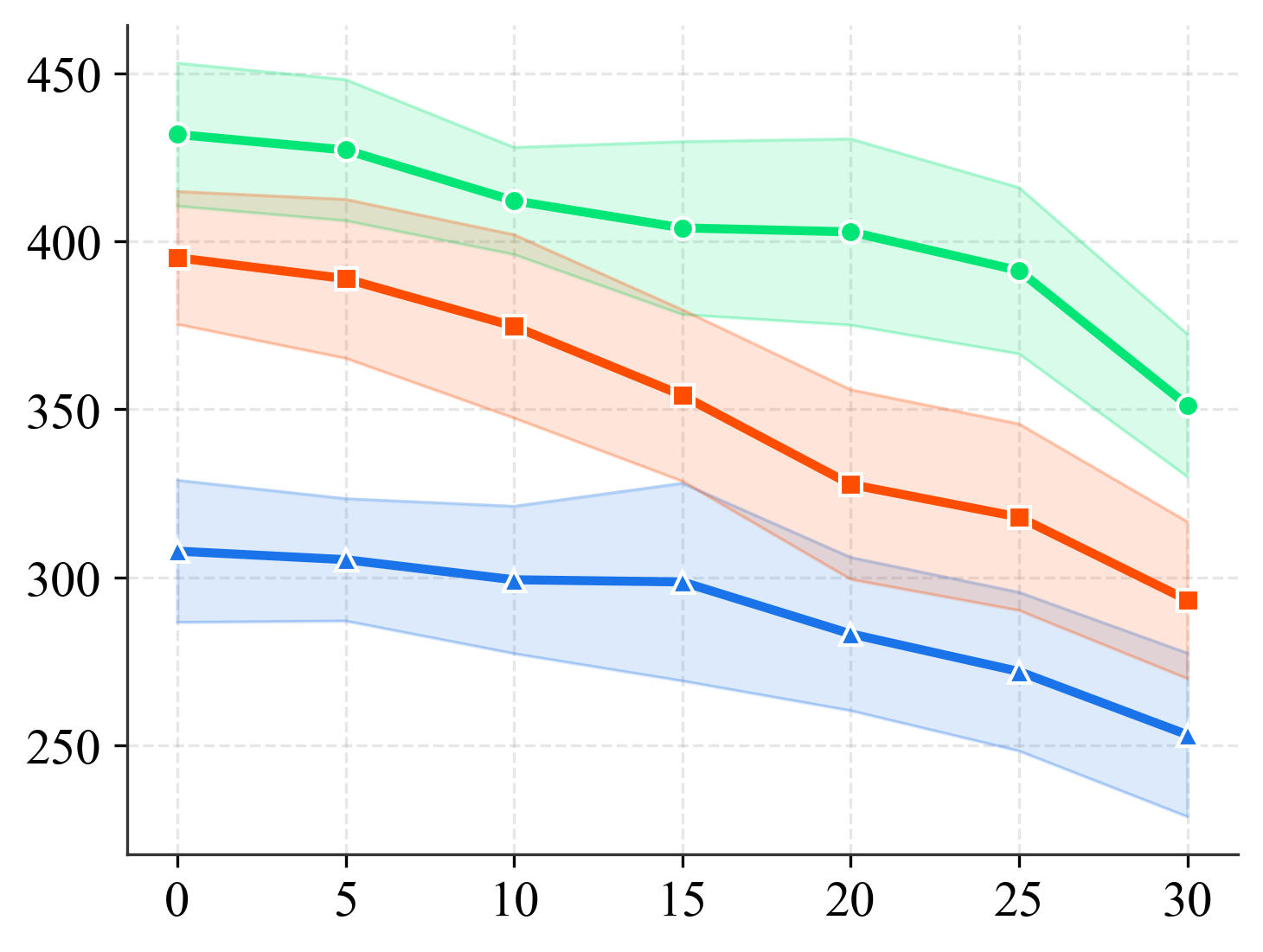} &
\cellcolor{standpurple}\includegraphics[width=\linewidth]{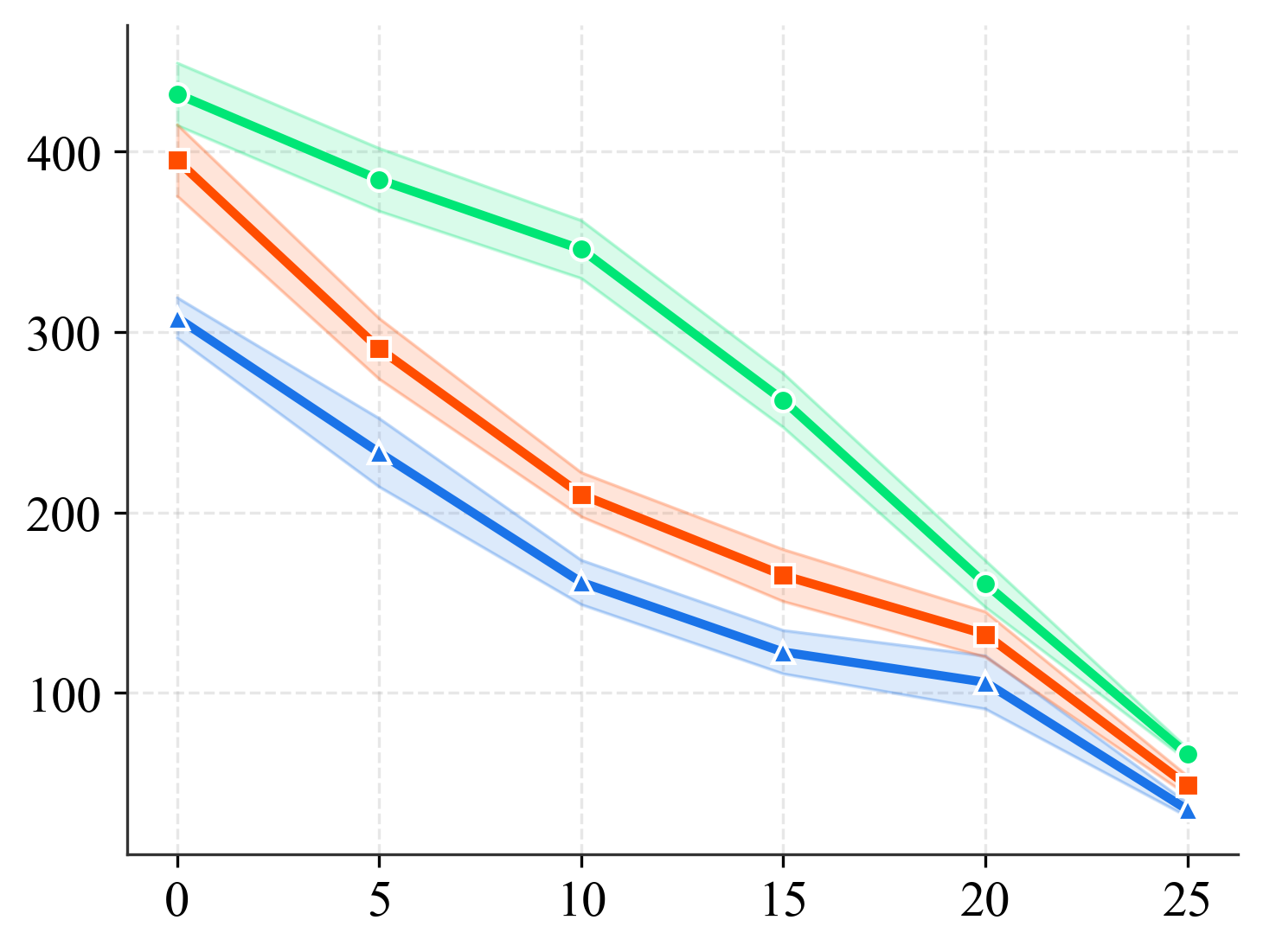} \\

\end{tabular}

\vspace{0.2cm}
\begin{minipage}{0.95\textwidth}
\centering
\footnotesize
\legendcircle{rbfmheavy}{RBFM-Heavy}\hspace{3.0em}
\legendtriangle{fbil}{FB-IL}\hspace{3.0em}
\legendsquare{rbfmlight}{RBFM-Light}
\end{minipage}

\caption{Cheetah performance with $95\%$ confidence interval across four tasks (rows) and three perturbation types (columns): range of motion restricts how far each joint can move, simulating mechanical damage (\% of nominal retained); actuator strength reduces peak joint torque, simulating motor degradation (\% reduction); joint friction loss adds passive resistive torque at each joint, simulating mechanical wear (absolute Nm per joint). Pretrained on RND data.}
\label{fig:cheetah}
\end{figure}

\appsubsection{Further discussion of (Q1)} \label{subsec:app_q1_discussion}
\noindent \textbf{Walker.} On the walker, Figure \ref{fig:walker} shows that the robustness advantage of RBFM variants over FB-IL is most pronounced under gravity and body mass perturbations, with the performance gap reaching its maximum on the flip task, a dynamically demanding task requiring full-body coordinated rotation where perturbation-induced transition distribution shifts are largest in magnitude. Under joint friction loss, all three methods degrade monotonically with increasing resistive torque, but RBFM-Heavy maintains the highest return throughout the sweep, with RBFM-Light consistently intermediate. Notably, FB-IL exhibits the highest variance at severe perturbation levels, indicating that its inferred $z$ becomes behaviorally unstable when the transition dynamics deviate significantly from the nominal expert distribution. The walker results most cleanly validate the core hypothesis: DRO-based task inference produces embeddings that remain behaviorally viable under transition distributions overlapping with the worst-case support identified during inference, whereas the FB-IL embedding is brittle outside the nominal support.

\noindent \textbf{Quadruped.} The quadruped presents a more challenging robustness benchmark due to its redundant four-leg morphology, which enables passive mechanical compensation for many single-parameter perturbations. Despite this, Figure \ref{fig:quadruped} shows that under tilted gravity, which introduces a persistent lateral torque component that directly penalizes the upright reward term and ground contact stiffness, which degrades push-off efficiency by softening contact impulse delivery; RBFM-Heavy maintains the highest return across all four tasks, followed by RBFM-Light, with FB-IL degrading most steeply. The advantage is most pronounced on dynamic tasks (run, jump) where perturbation-induced distributional shifts are larger, and relatively attenuated on static tasks (stand, walk) where the quadruped's passive stability partially compensates for dynamics mismatch. Under actuator ctrlrange clipping, the performance advantage of RBFM-Heavy is most visible at intermediate perturbation levels; beyond a critical clipping threshold all methods collapse, suggesting that DRO-based robustness operates within the recoverable dynamics regime but cannot compensate for command truncation that renders the task fundamentally infeasible under any task embedding.

\noindent \textbf{Cheetah.} On the cheetah, Figure \ref{fig:cheetah} shows that the three-way performance separation is most sharply differentiated across all four tasks and all three perturbation types. Under joint friction loss — which imposes a fixed passive resistive torque at every joint on every control cycle, compounding across the six actuated degrees of freedom, FB-IL collapses to near-zero return at moderate perturbation levels while RBFM-Heavy maintains substantial locomotion performance, with RBFM-Light intermediate. Under actuator strength reduction, the performance separation is consistent from the first perturbation level, indicating that even marginal reductions in gear ratio expose the brittleness of the FB-IL task embedding. Under range of motion restriction, the only kinematic perturbation across all experiments, constraining the feasible configuration space rather than the force generation or dissipation properties, RBFM-Heavy maintains the highest return throughout, while FB-IL exhibits notably higher variance, suggesting greater sensitivity to the hard feasibility constraints imposed by reduced joint limits. The consistently sharp three-way separation on the cheetah, relative to the walker and quadruped, likely reflects the high sensitivity of fast asymmetric locomotion to dynamics mismatch: the cheetah's gait involves large joint excursions and brief ground contact phases, concentrating the nominal transition distribution in a narrow dynamical regime that is more susceptible to perturbation-induced shifts falling outside the expert support.


\appsubsection{Q4: Different Sources of Pretraining Data} \label{subsec:app_different_pre_source}
Figures \ref{fig:walker_aps} and \ref{fig:walker_proto} compares the performance of RBFM-Heavy and RBFM-Light against FB-IL when pretrained on Walker-APS and Walker-Proto dataset respectively. We observe similar trend as that when Walker-RND was used for pre-training. We observe RBFM-Heavy to have the strongest robust performance, followed by RBFM-Light, and finally FB-IL.

\begin{figure}[!htbp]
\centering
\setlength{\tabcolsep}{4pt}
\renewcommand{\arraystretch}{0.0}

\begin{tabular}{>{\centering\arraybackslash}m{0.5cm}
                >{\centering\arraybackslash}m{0.30\textwidth}
                >{\centering\arraybackslash}m{0.30\textwidth}
                >{\centering\arraybackslash}m{0.30\textwidth}}

&
\textbf{\hspace{0.2cm}Gravity} &
\textbf{\hspace{0.2cm}Mass} &
\textbf{\hspace{0.4cm}Joint Friction Loss} \\[2pt]

\cellcolor{runblue}\rotatebox{90}{\small\textbf{Run}} &
\cellcolor{runblue}\includegraphics[width=\linewidth]{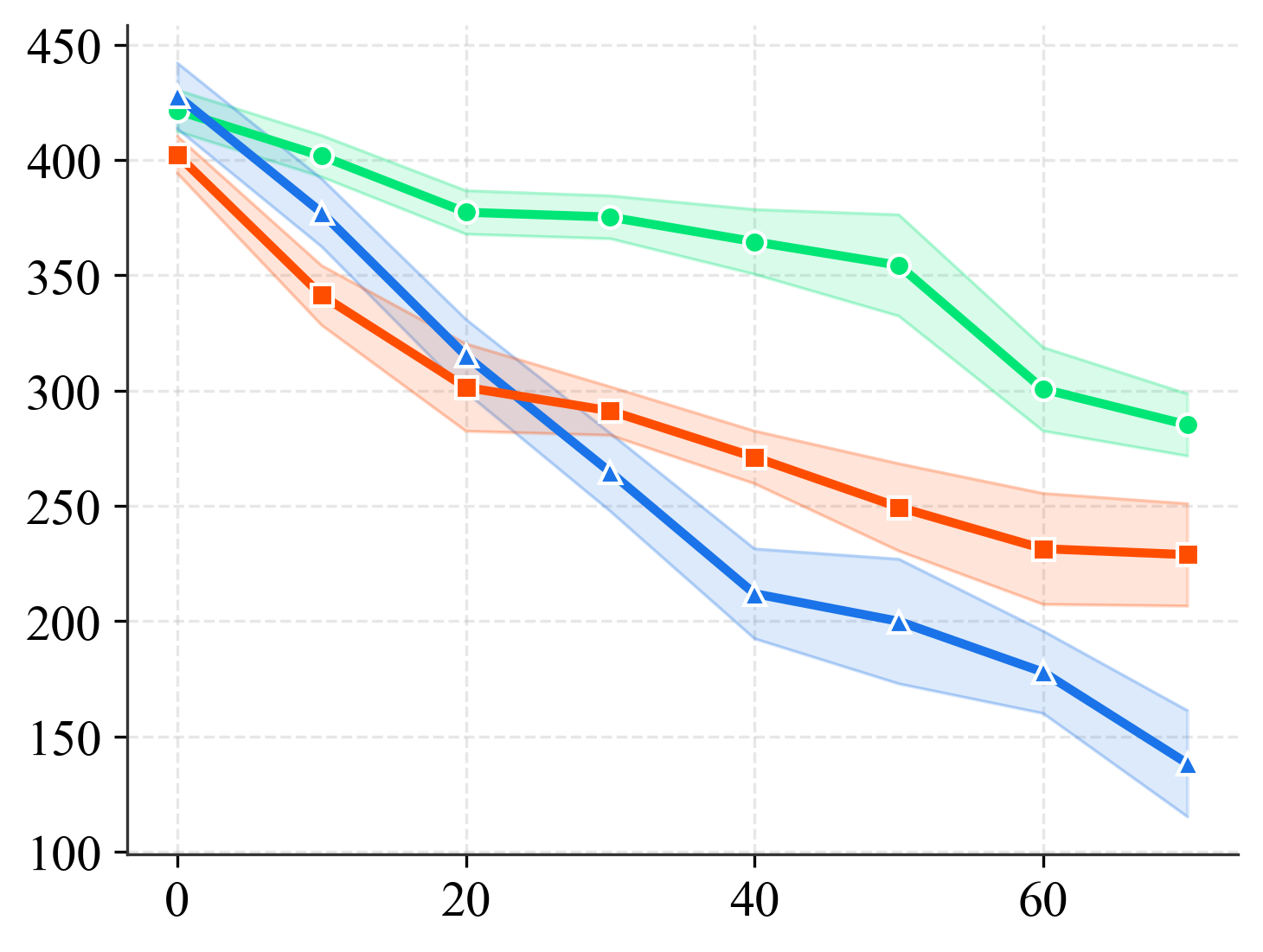} &
\cellcolor{runblue}\includegraphics[width=\linewidth]{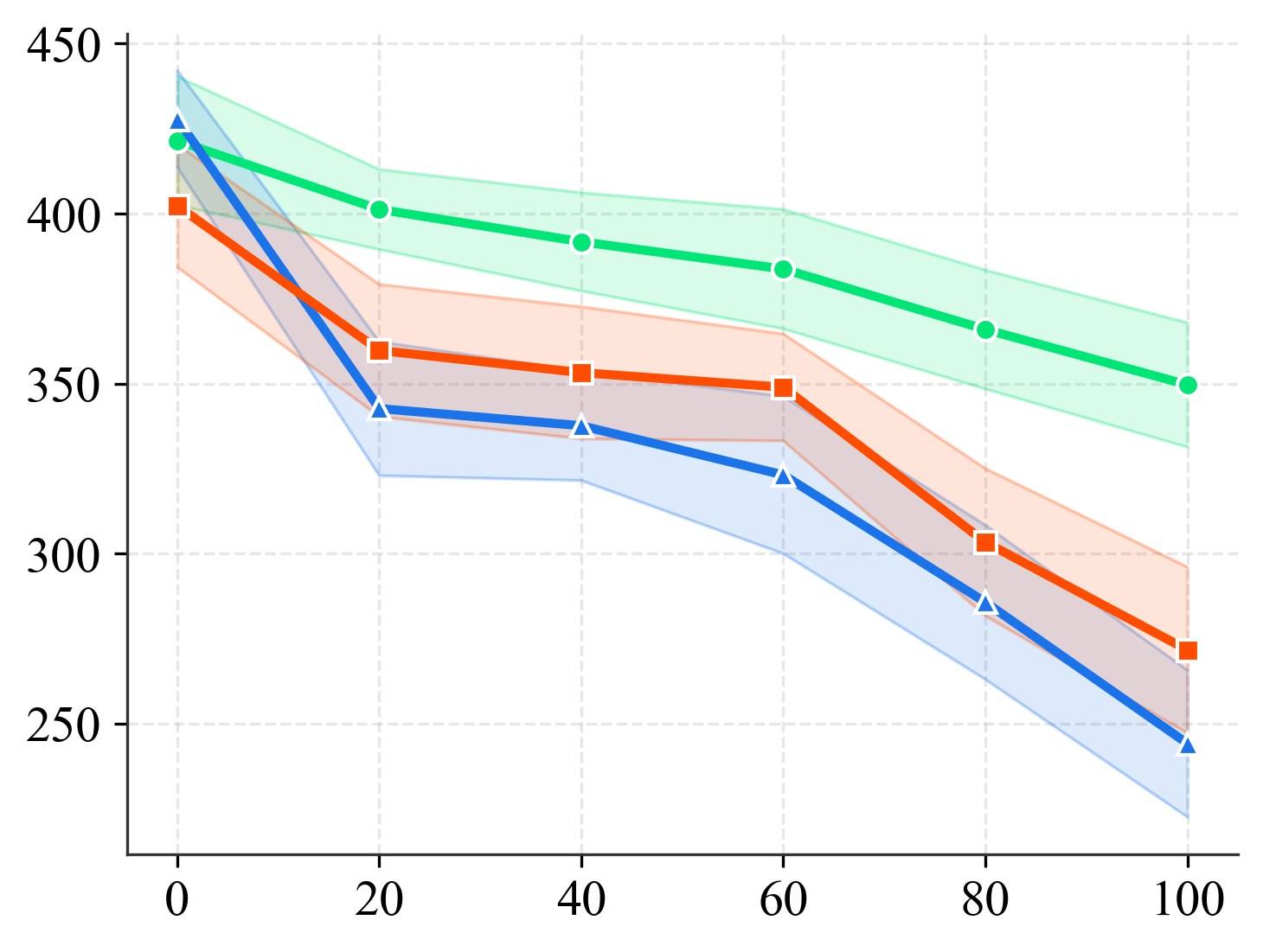} &
\cellcolor{runblue}\includegraphics[width=\linewidth]{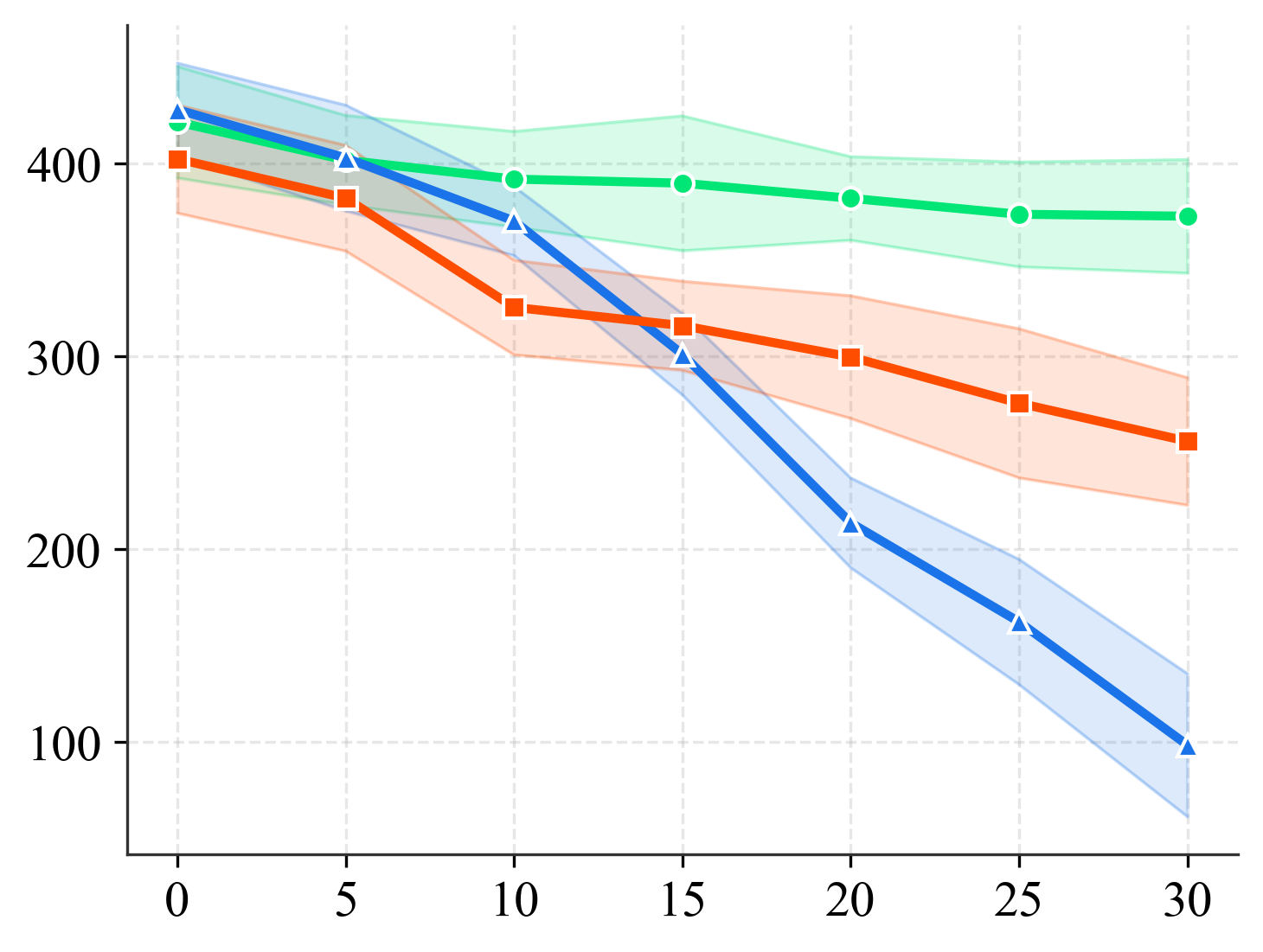} \\[1pt]

\cellcolor{walkgreen}\rotatebox{90}{\small\textbf{Walk}} &
\cellcolor{walkgreen}\includegraphics[width=\linewidth]{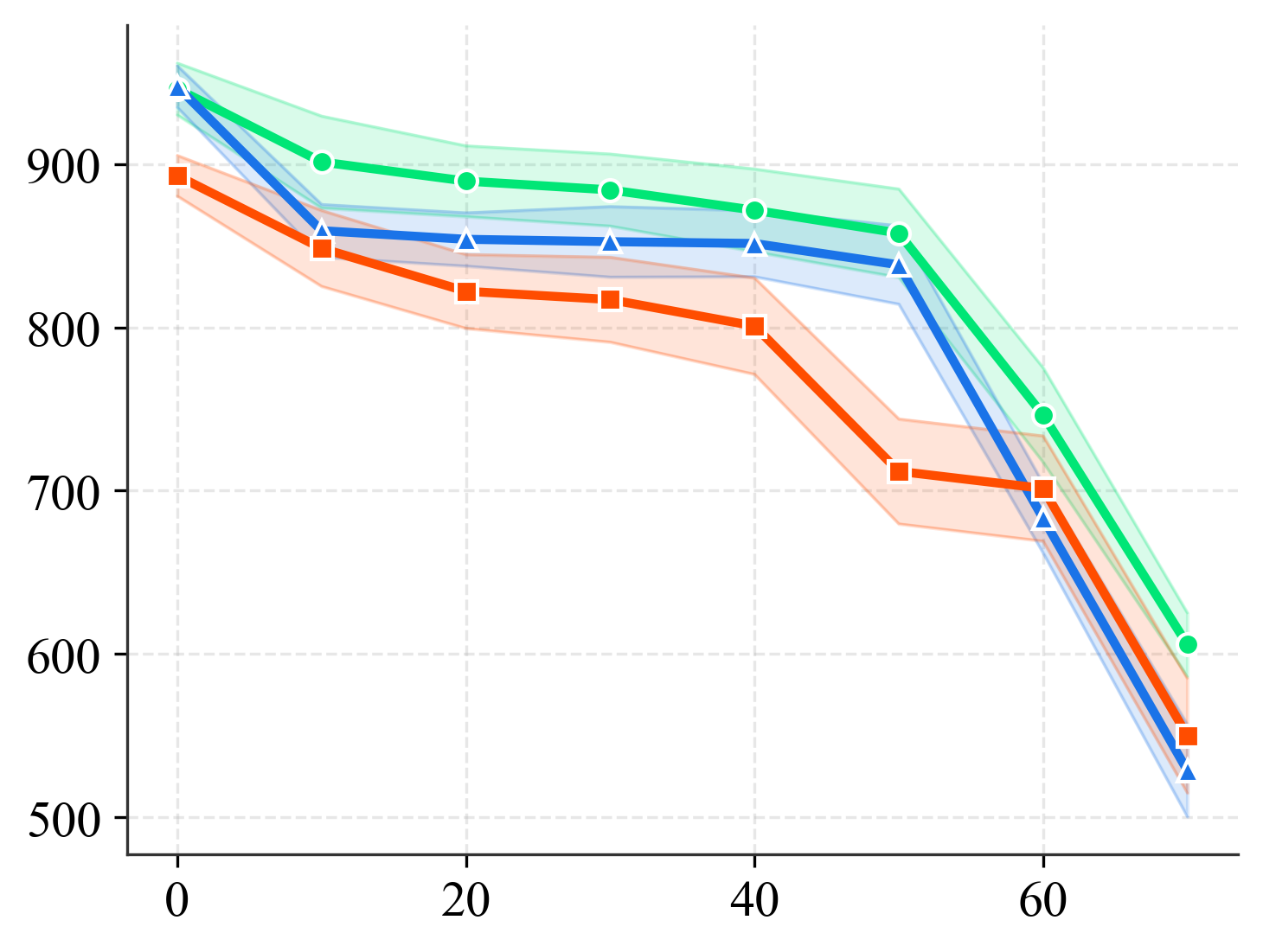} &
\cellcolor{walkgreen}\includegraphics[width=\linewidth]{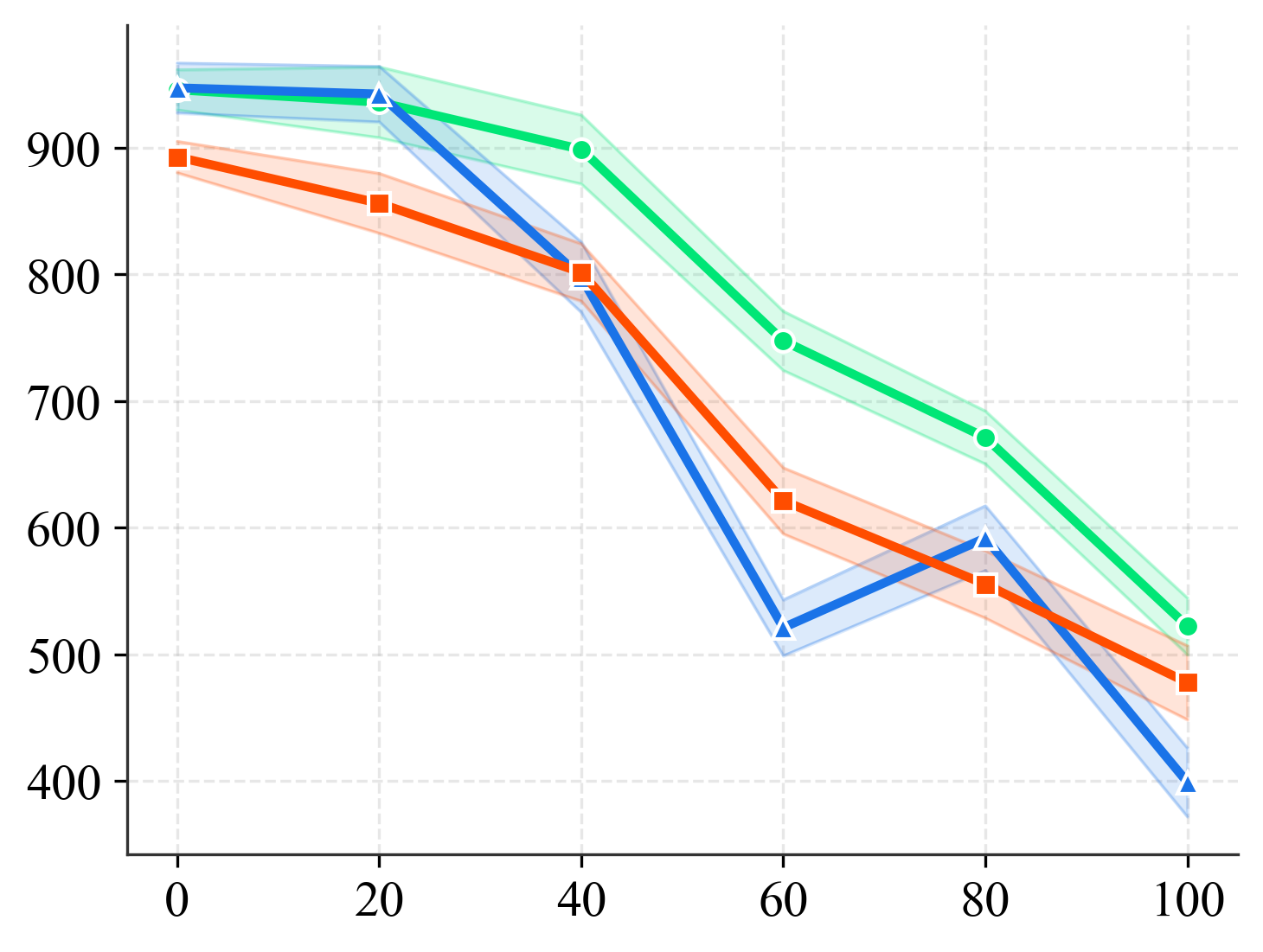} &
\cellcolor{walkgreen}\includegraphics[width=\linewidth]{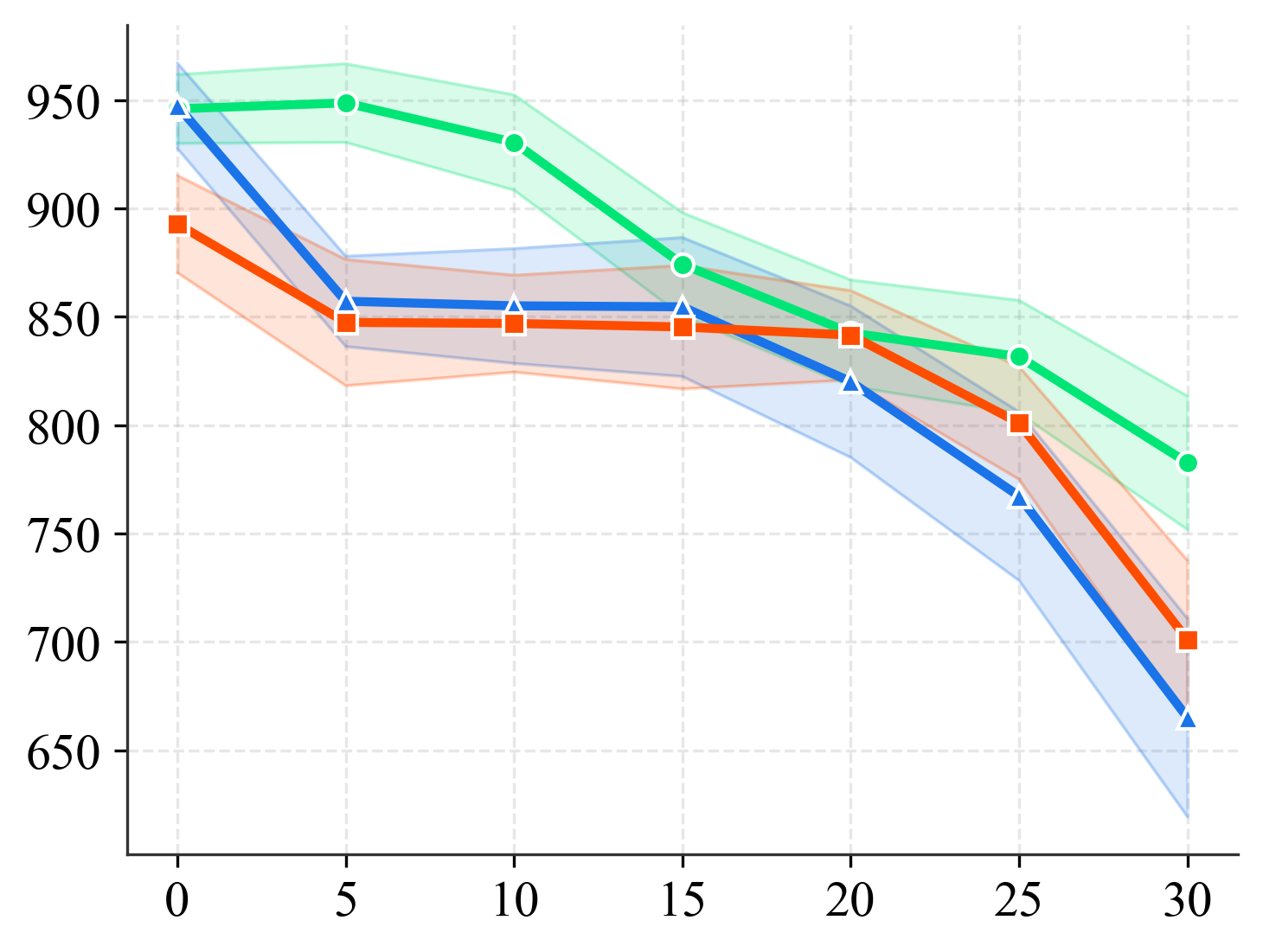} \\[1pt]

\cellcolor{fliporange}\rotatebox{90}{\small\textbf{Flip}} &
\cellcolor{fliporange}\includegraphics[width=\linewidth]{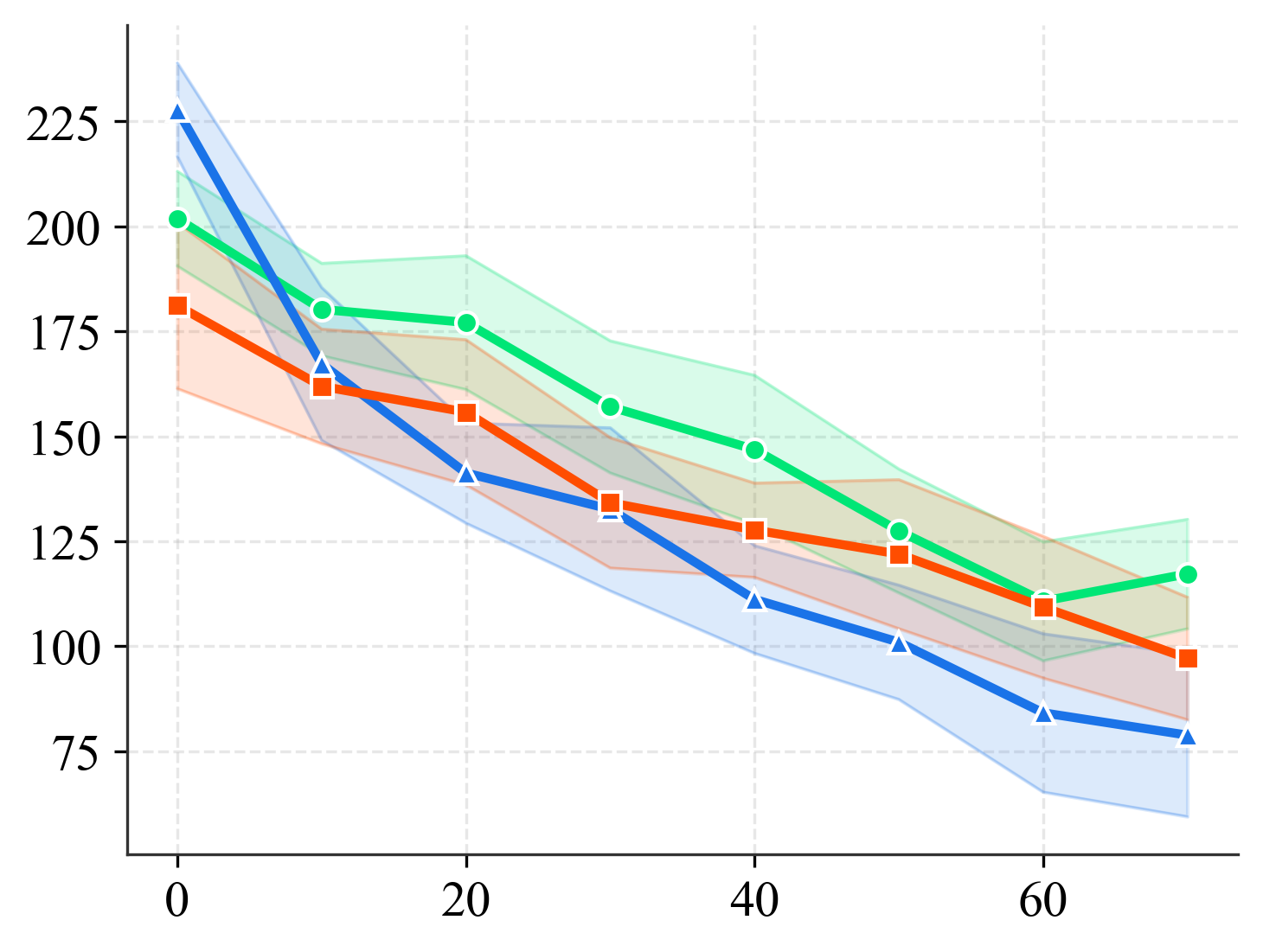} &
\cellcolor{fliporange}\includegraphics[width=\linewidth]{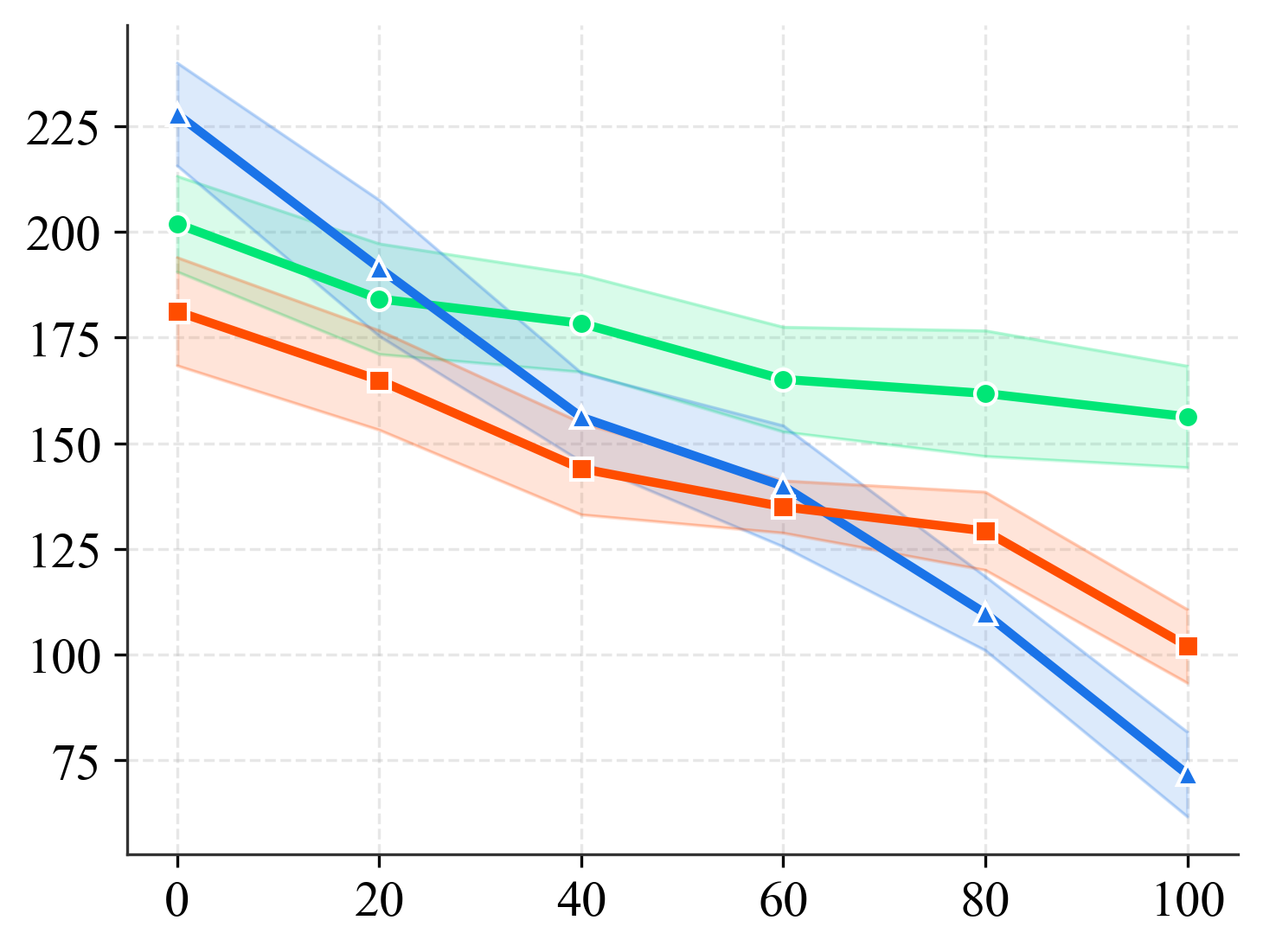} &
\cellcolor{fliporange}\includegraphics[width=\linewidth]{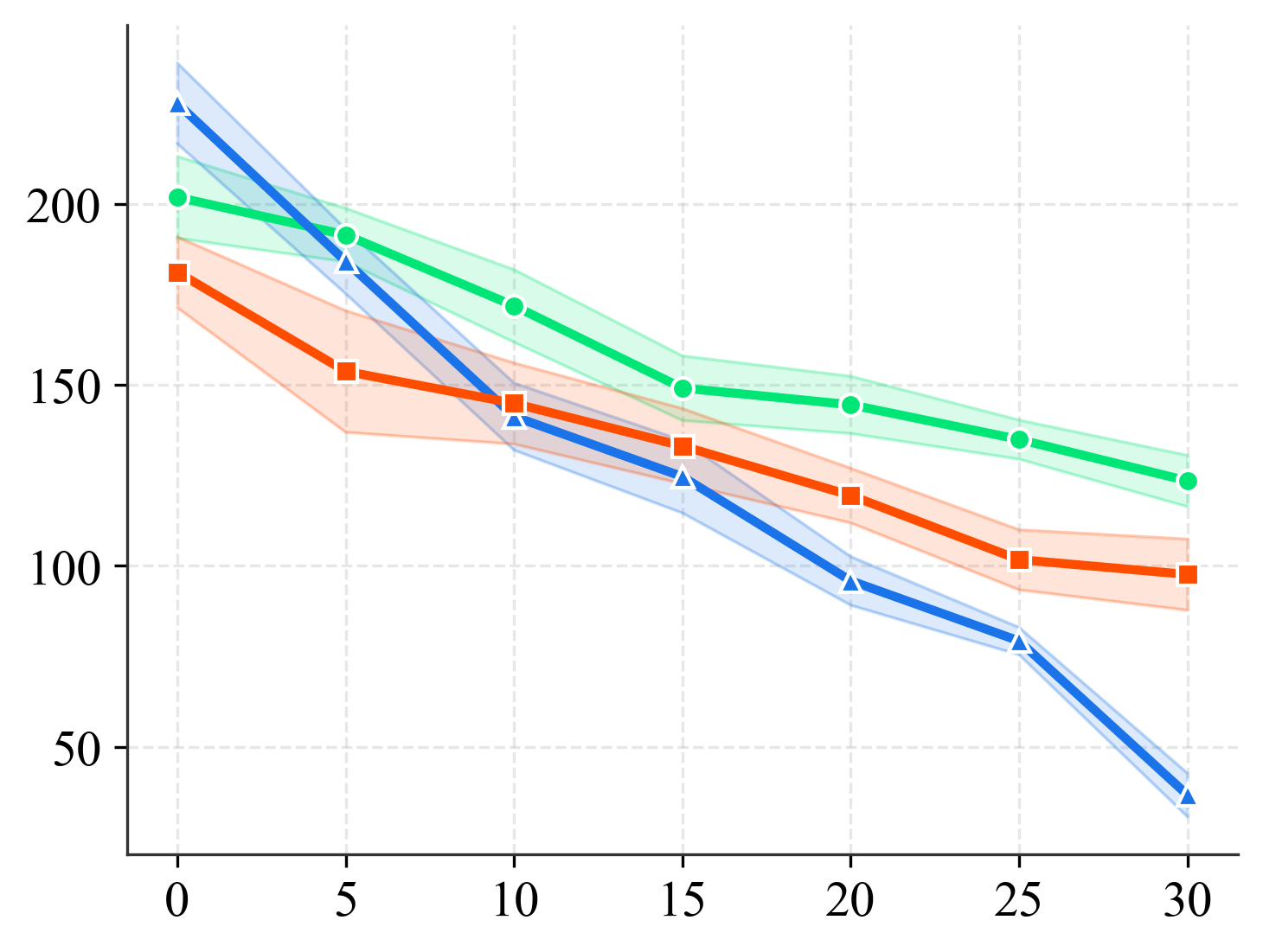} \\[1pt]

\cellcolor{standpurple}\rotatebox{90}{\small\textbf{Stand}} &
\cellcolor{standpurple}\includegraphics[width=\linewidth]{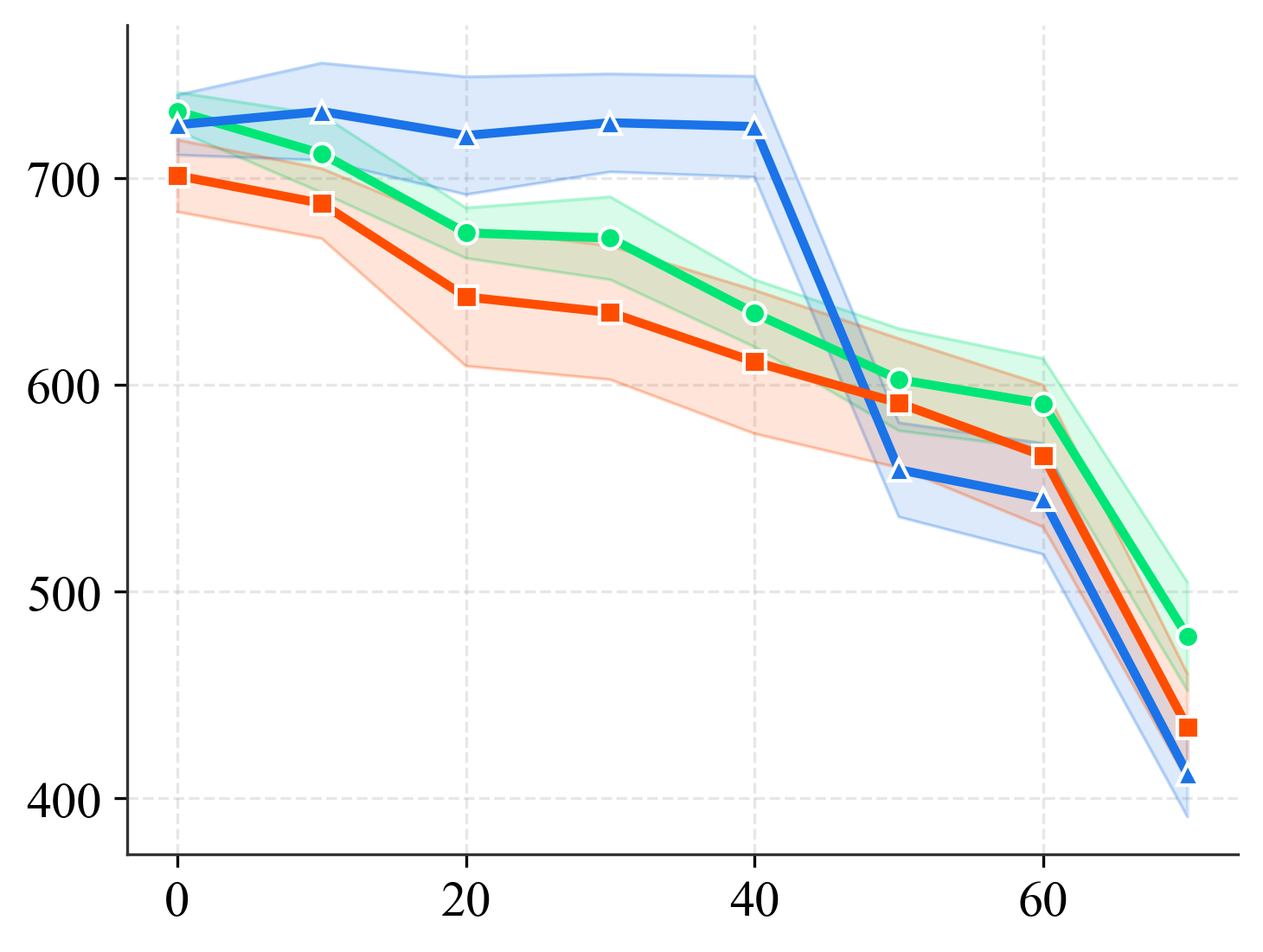} &
\cellcolor{standpurple}\includegraphics[width=\linewidth]{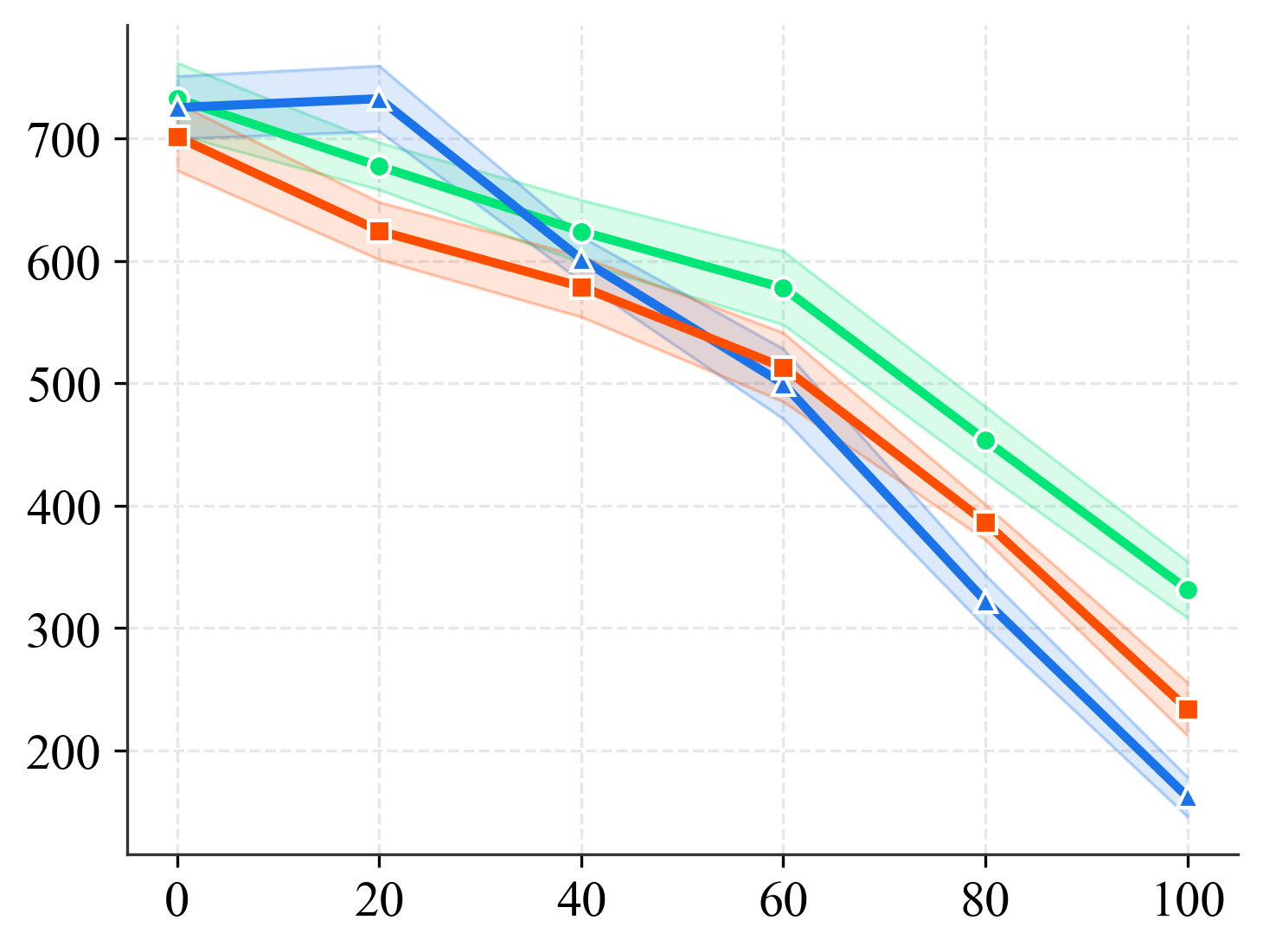} &
\cellcolor{standpurple}\includegraphics[width=\linewidth]{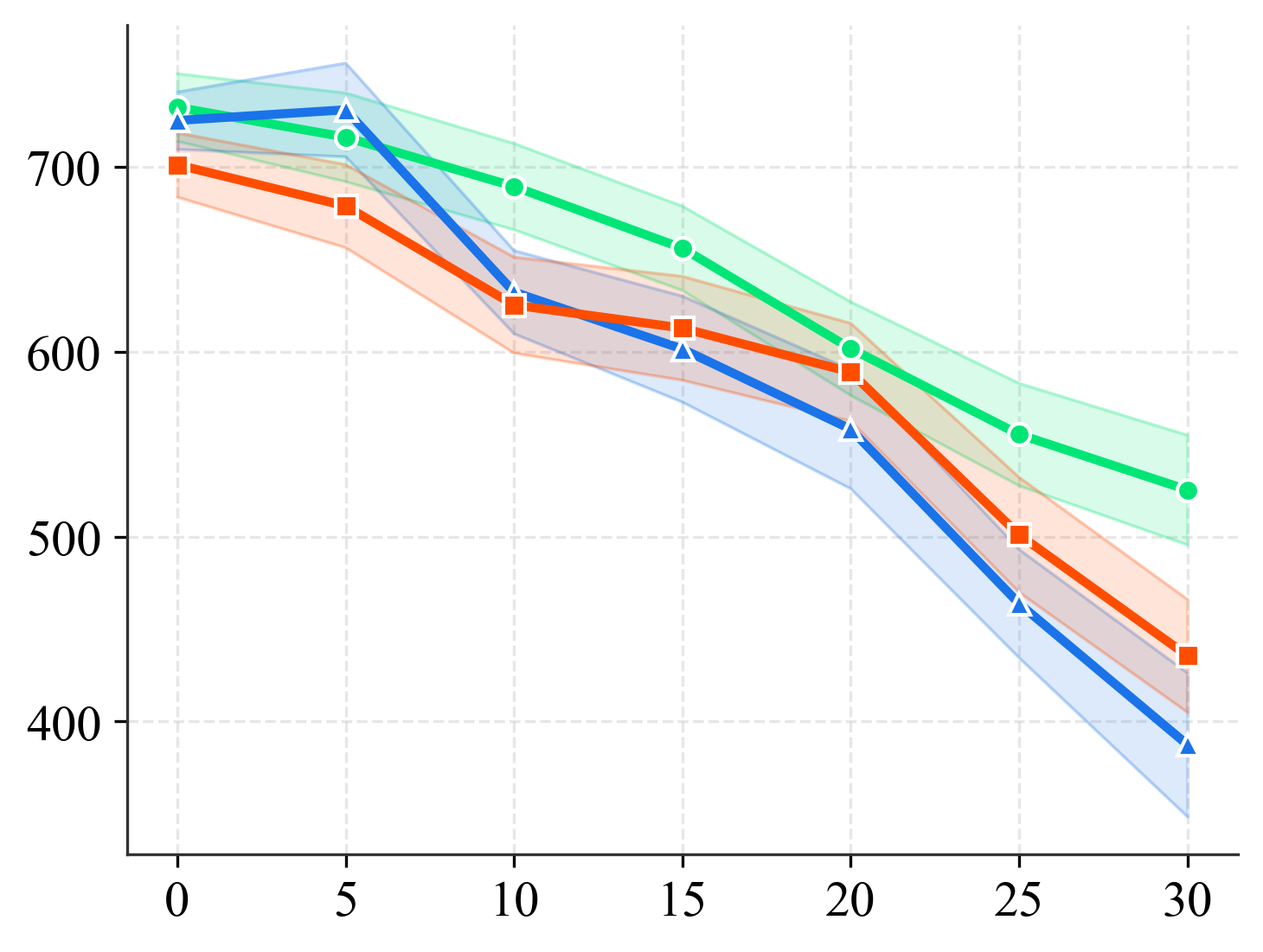} \\

\end{tabular}

\vspace{0.2cm}
\begin{minipage}{0.95\textwidth}
\centering
\footnotesize
\legendcircle{rbfmheavy}{RBFM-Heavy}\hspace{3.0em}
\legendtriangle{fbil}{FB-IL}\hspace{3.0em}
\legendsquare{rbfmlight}{RBFM-Light}
\end{minipage}

\caption{Walker performance with $95\%$ confidence interval across four tasks (rows) and three perturbation types (columns): gravity and body mass increase the physical load on the robot (\% change from nominal); joint friction loss adds passive resistive torque at each joint, simulating mechanical wear (absolute Nm per joint). Pretrained on APS data.}
\label{fig:walker_aps}
\end{figure}

\begin{figure}[!htbp]
\centering
\setlength{\tabcolsep}{4pt}
\renewcommand{\arraystretch}{0.0}

\begin{tabular}{>{\centering\arraybackslash}m{0.5cm}
                >{\centering\arraybackslash}m{0.30\textwidth}
                >{\centering\arraybackslash}m{0.30\textwidth}
                >{\centering\arraybackslash}m{0.30\textwidth}}

&
\textbf{\hspace{0.2cm}Gravity} &
\textbf{\hspace{0.2cm}Mass} &
\textbf{\hspace{0.4cm}Joint Friction Loss} \\[2pt]

\cellcolor{runblue}\rotatebox{90}{\small\textbf{Run}} &
\cellcolor{runblue}\includegraphics[width=\linewidth]{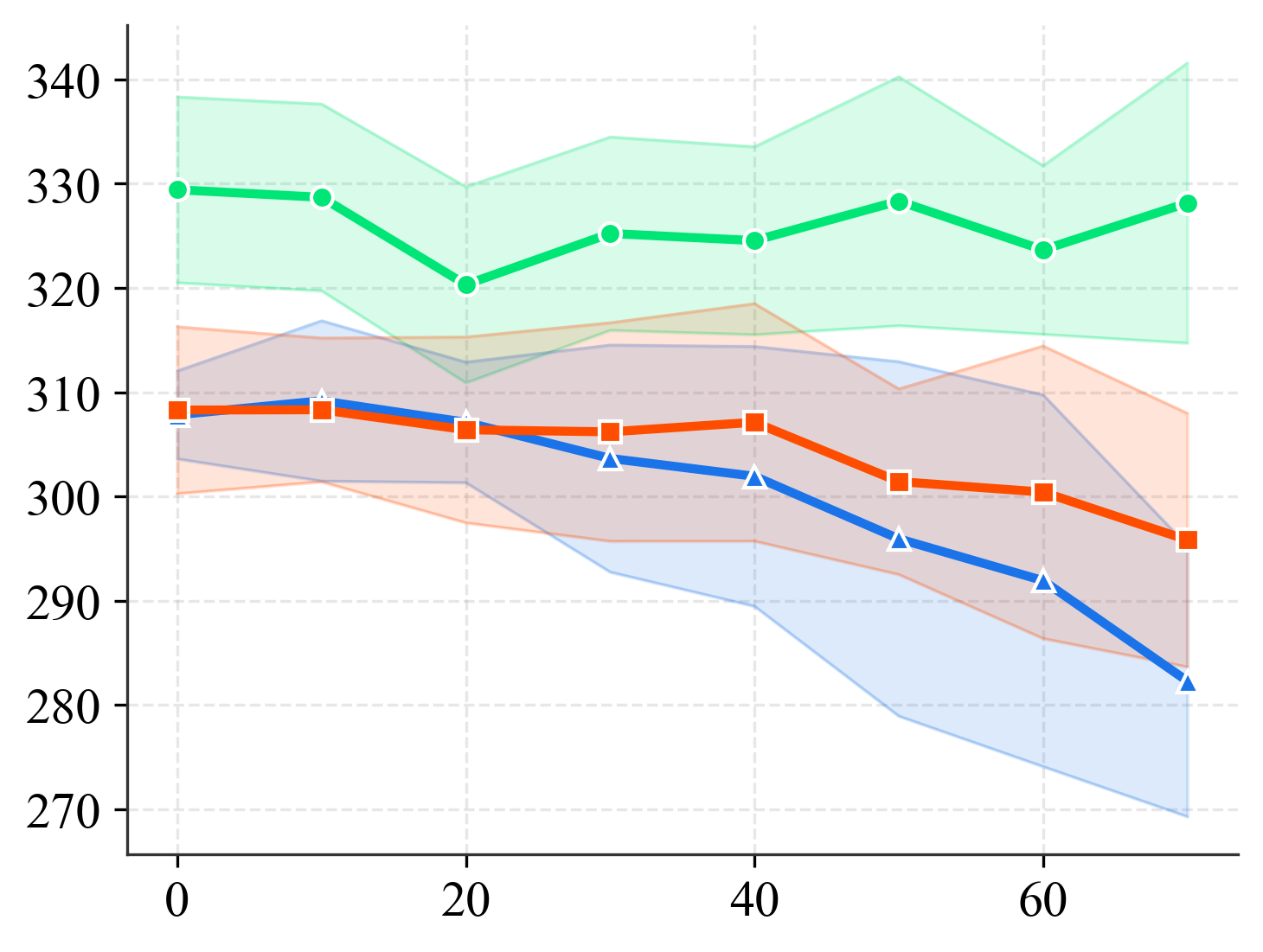} &
\cellcolor{runblue}\includegraphics[width=\linewidth]{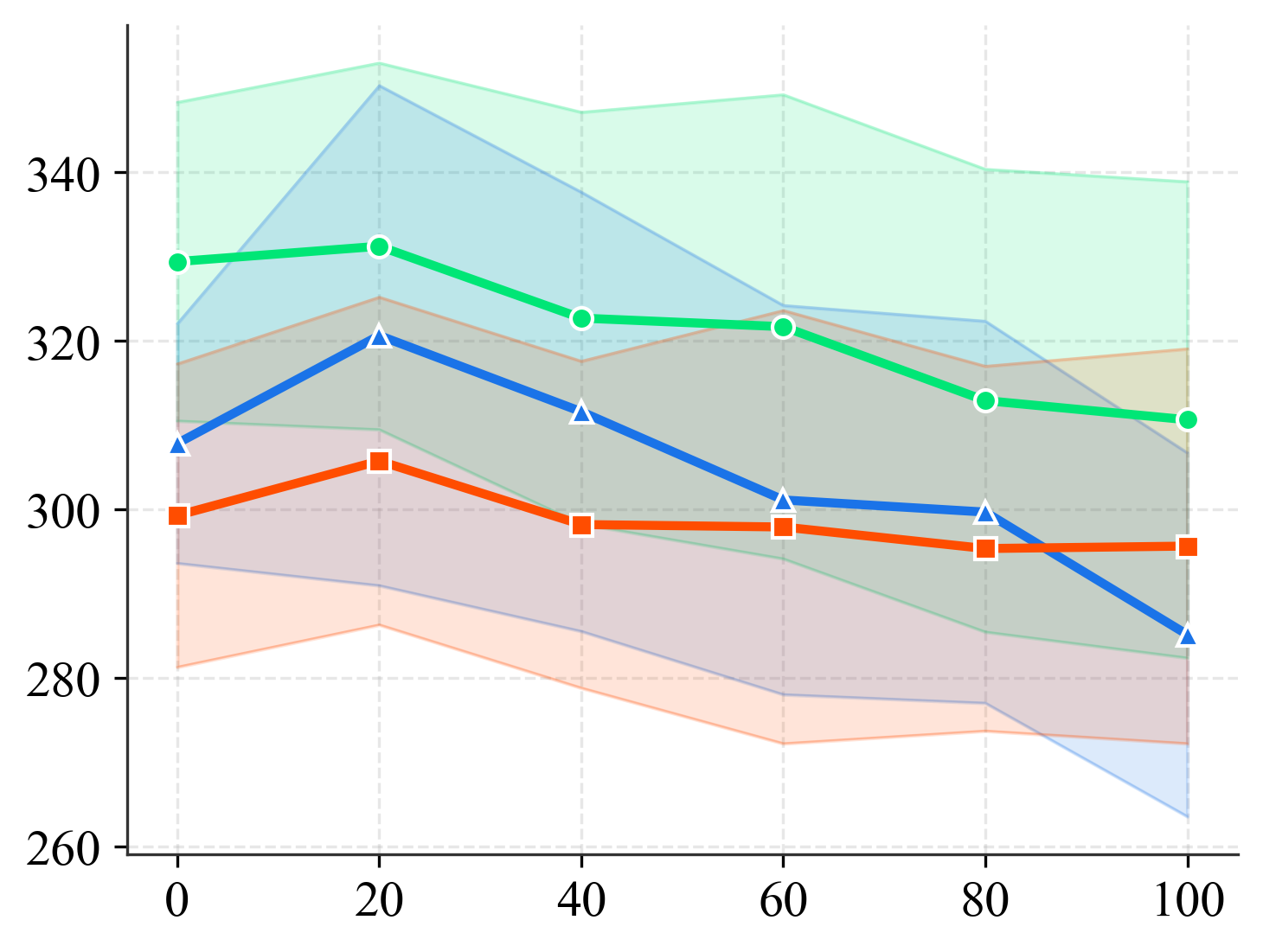} &
\cellcolor{runblue}\includegraphics[width=\linewidth]{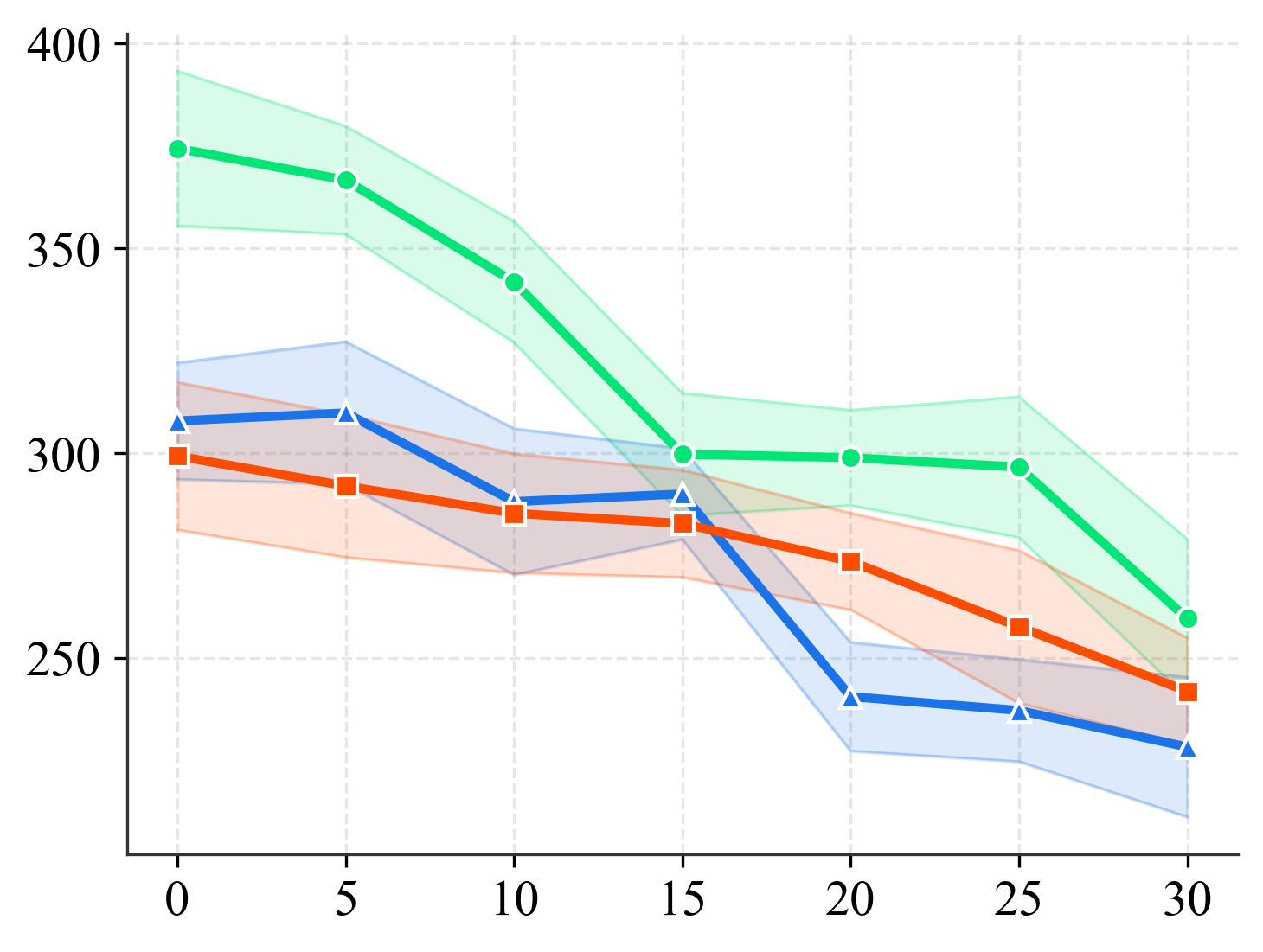} \\[1pt]

\cellcolor{walkgreen}\rotatebox{90}{\small\textbf{Walk}} &
\cellcolor{walkgreen}\includegraphics[width=\linewidth]{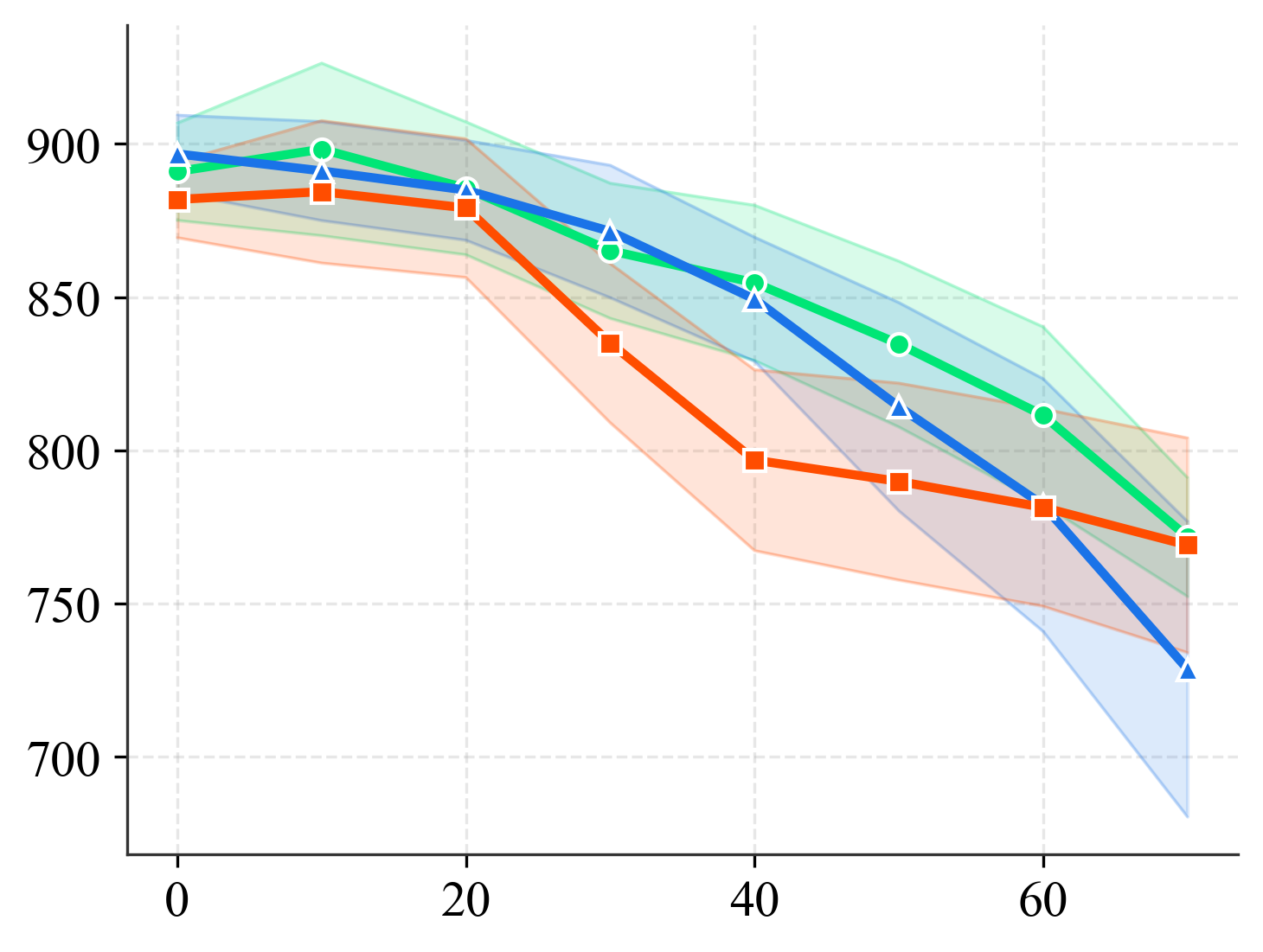} &
\cellcolor{walkgreen}\includegraphics[width=\linewidth]{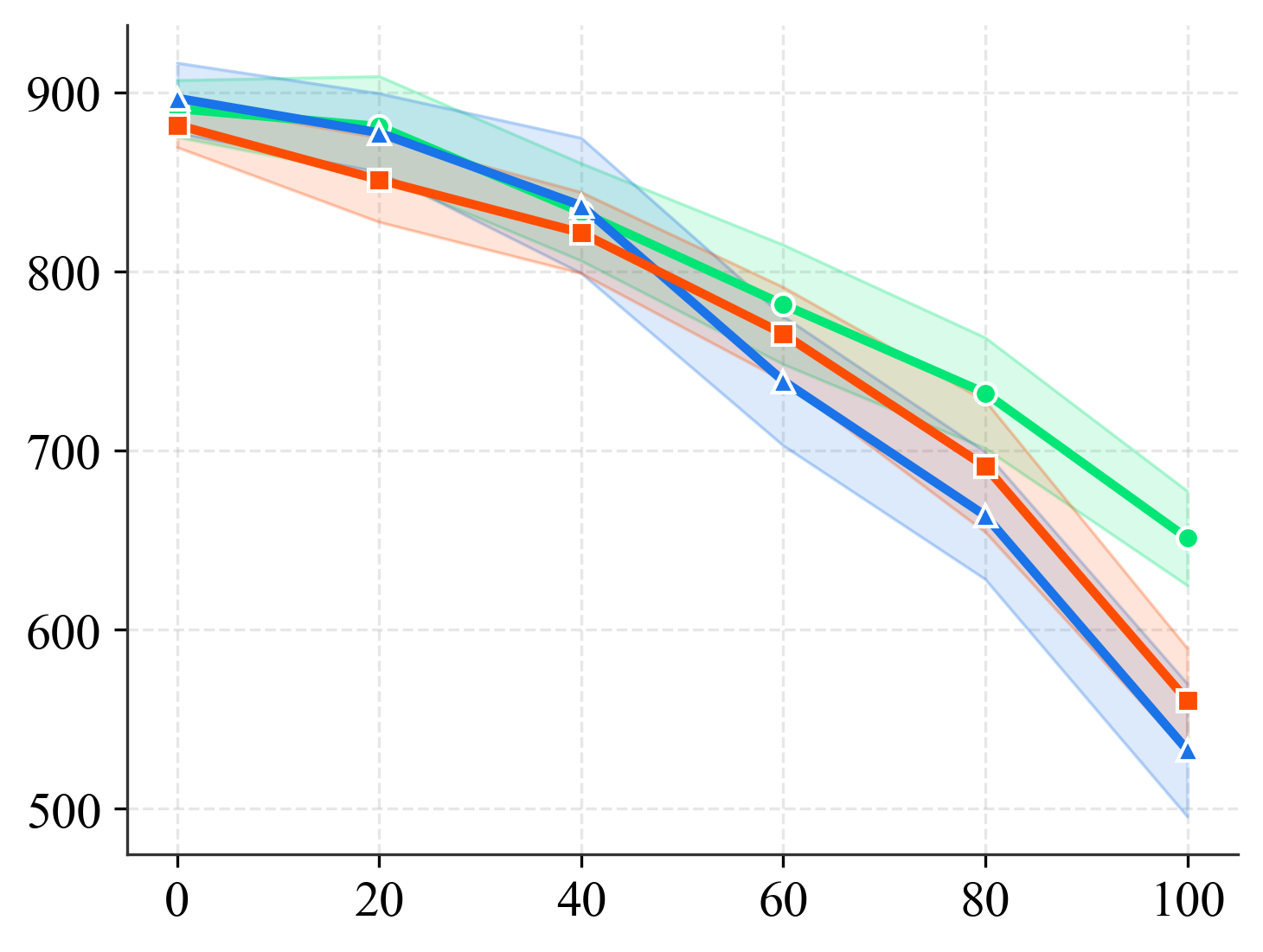} &
\cellcolor{walkgreen}\includegraphics[width=\linewidth]{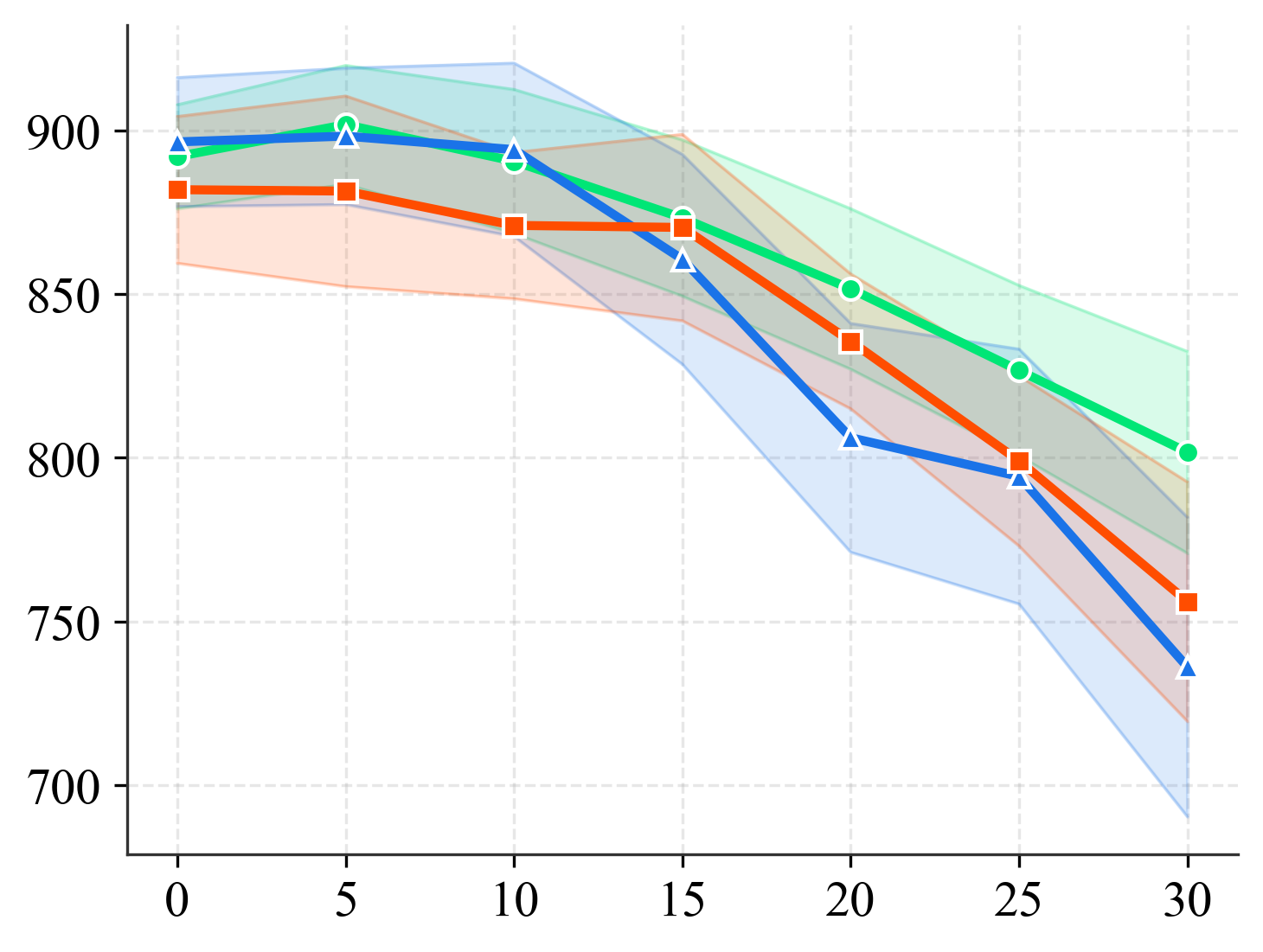} \\[1pt]

\cellcolor{fliporange}\rotatebox{90}{\small\textbf{Flip}} &
\cellcolor{fliporange}\includegraphics[width=\linewidth]{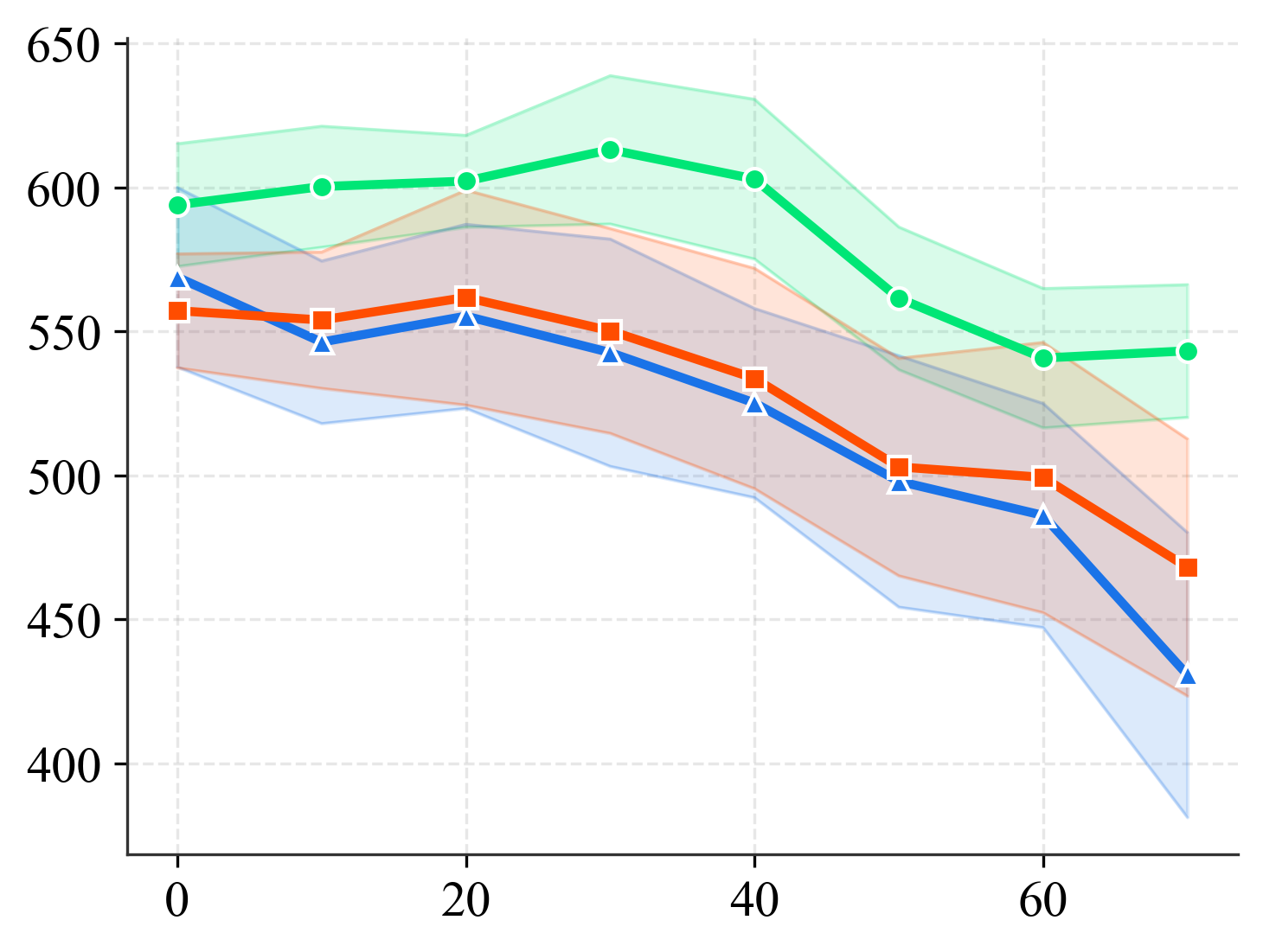} &
\cellcolor{fliporange}\includegraphics[width=\linewidth]{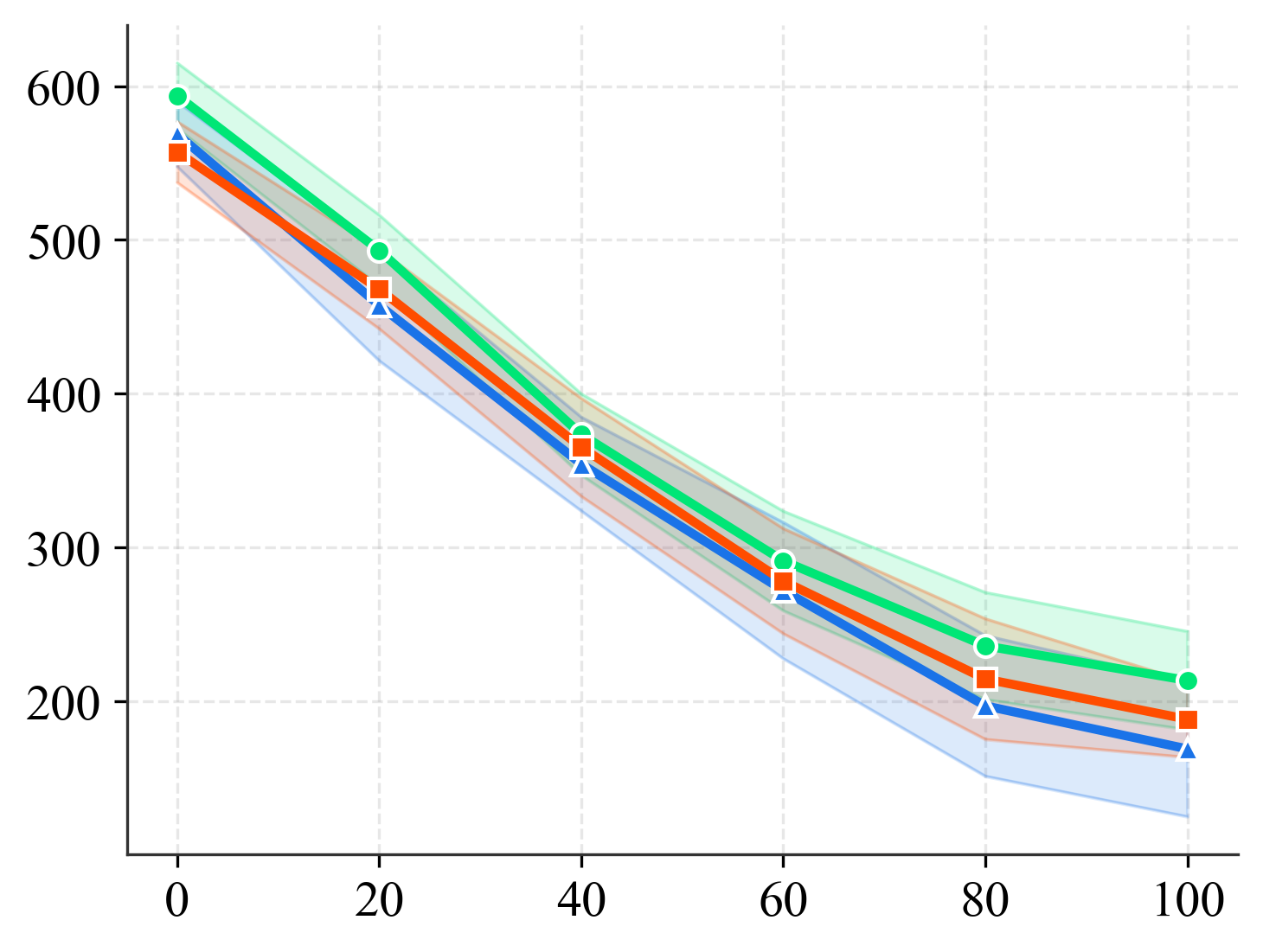} &
\cellcolor{fliporange}\includegraphics[width=\linewidth]{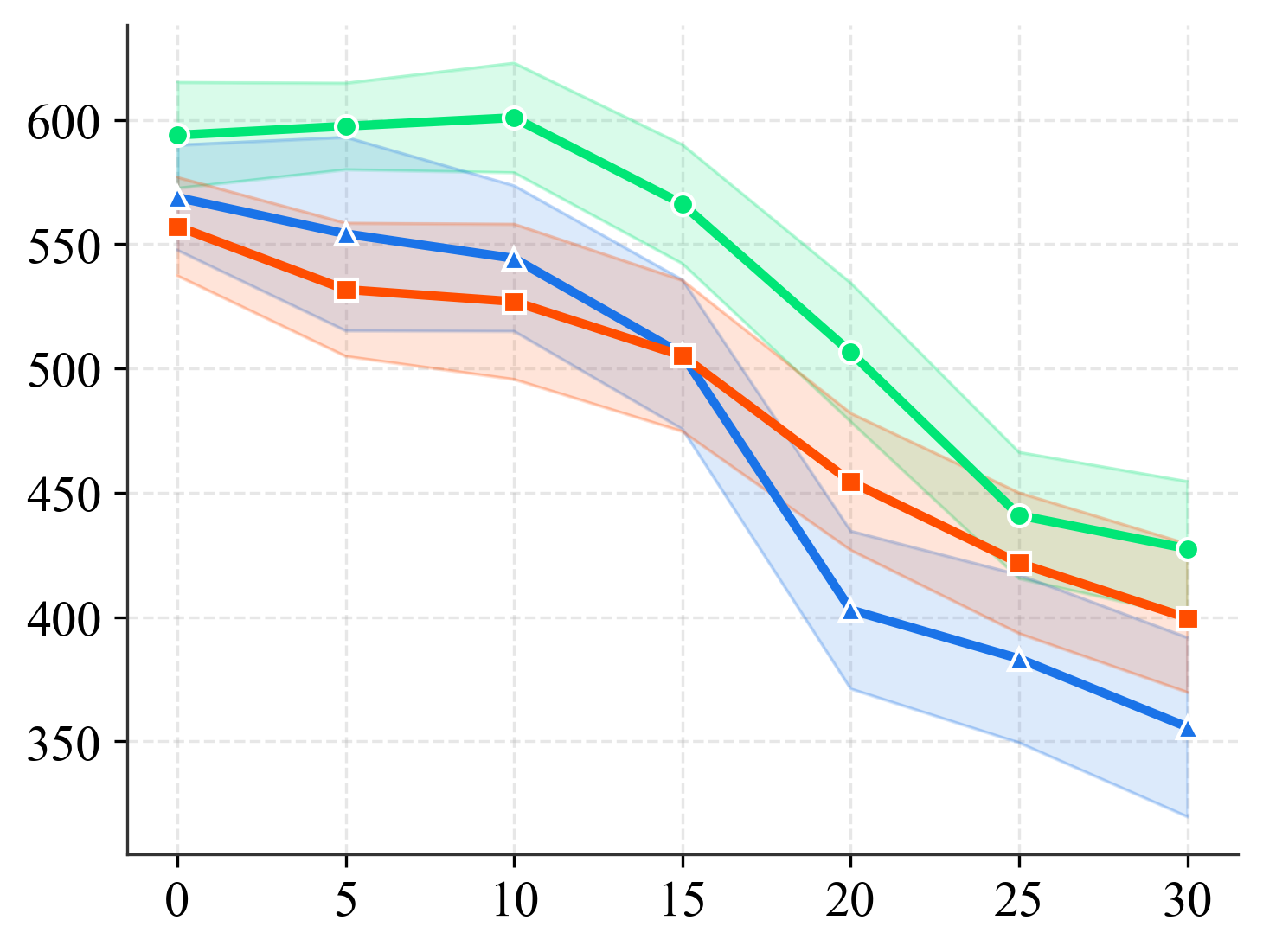} \\[1pt]

\cellcolor{standpurple}\rotatebox{90}{\small\textbf{Stand}} &
\cellcolor{standpurple}\includegraphics[width=\linewidth]{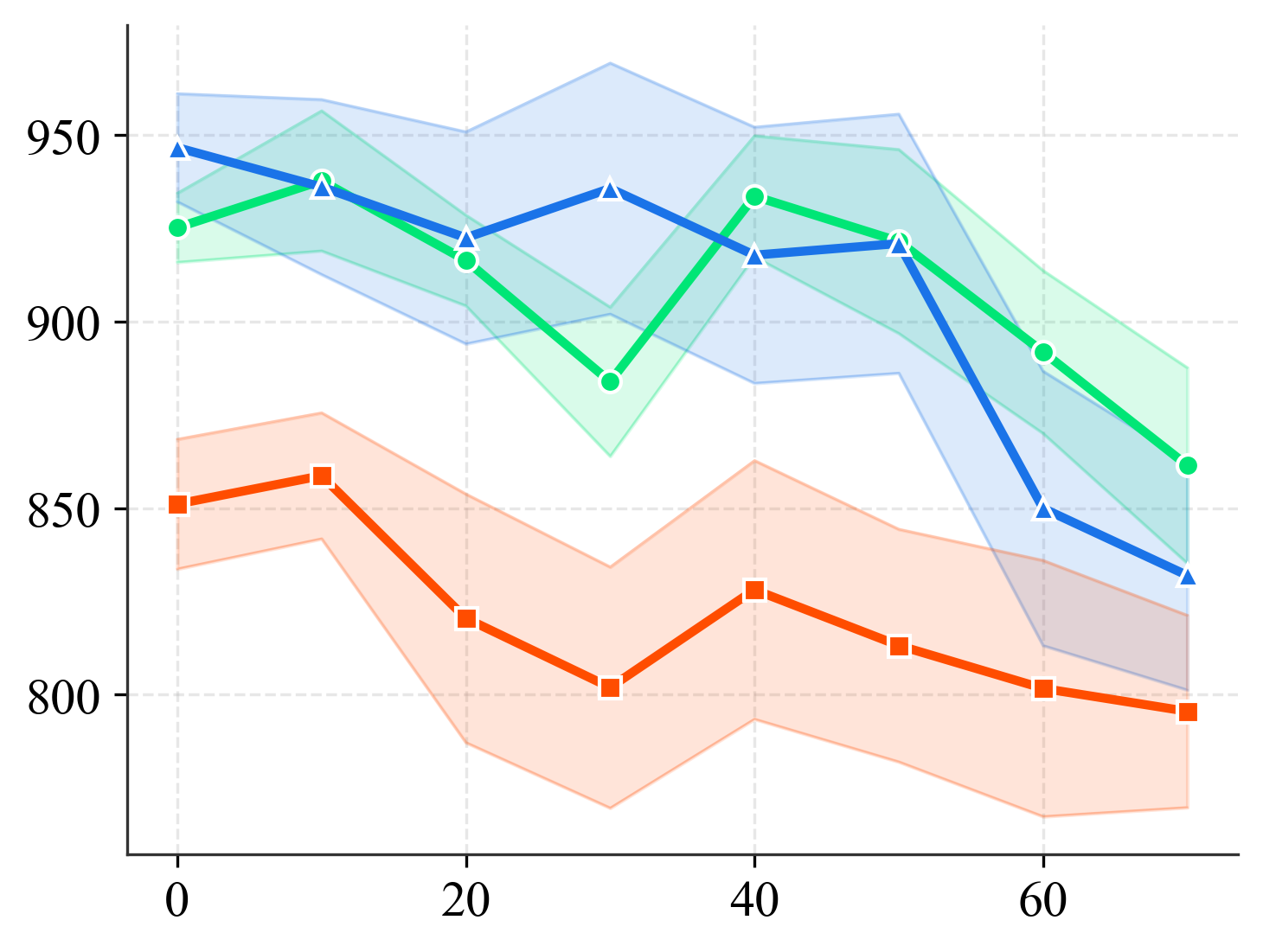} &
\cellcolor{standpurple}\includegraphics[width=\linewidth]{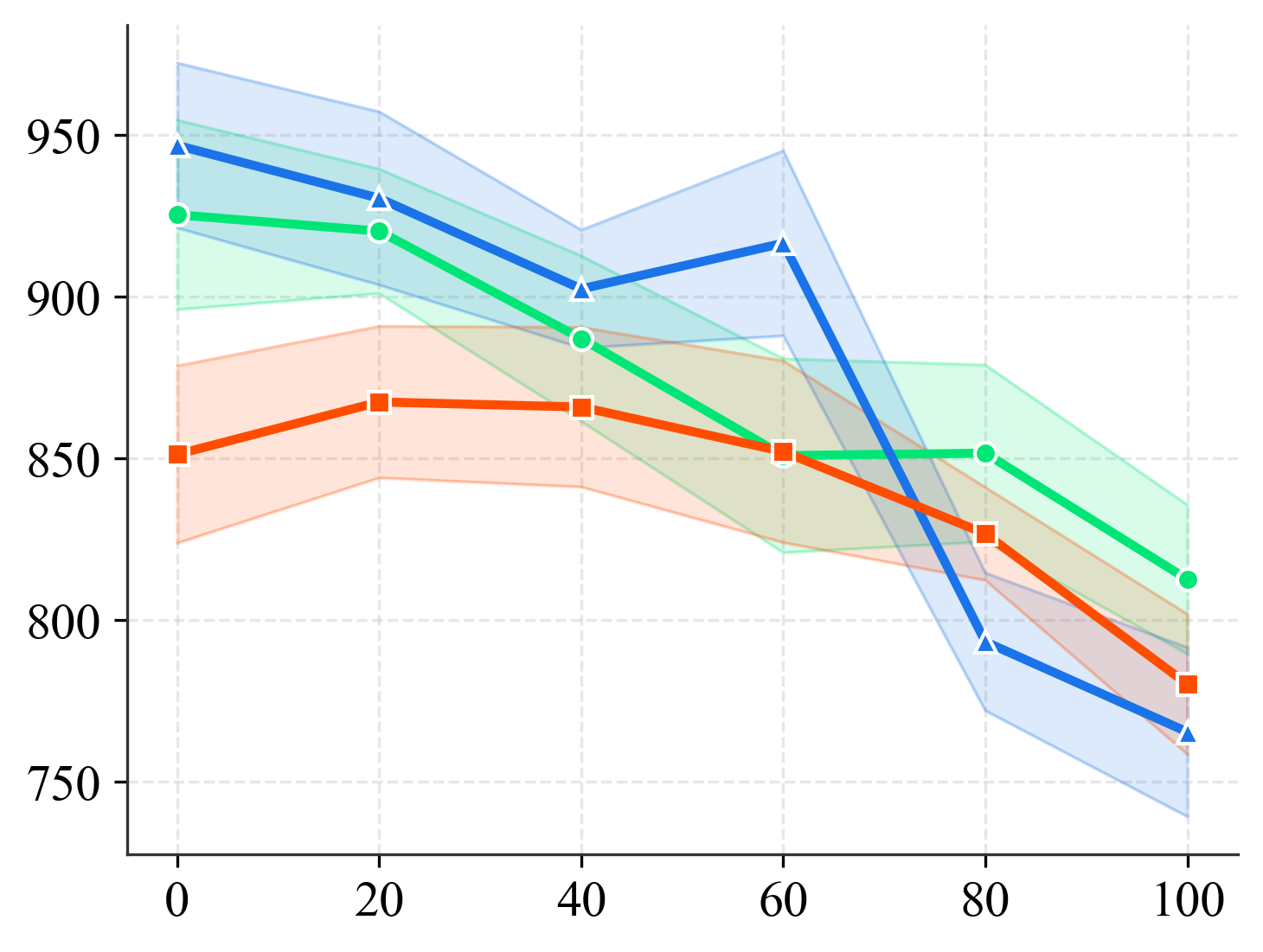} &
\cellcolor{standpurple}\includegraphics[width=\linewidth]{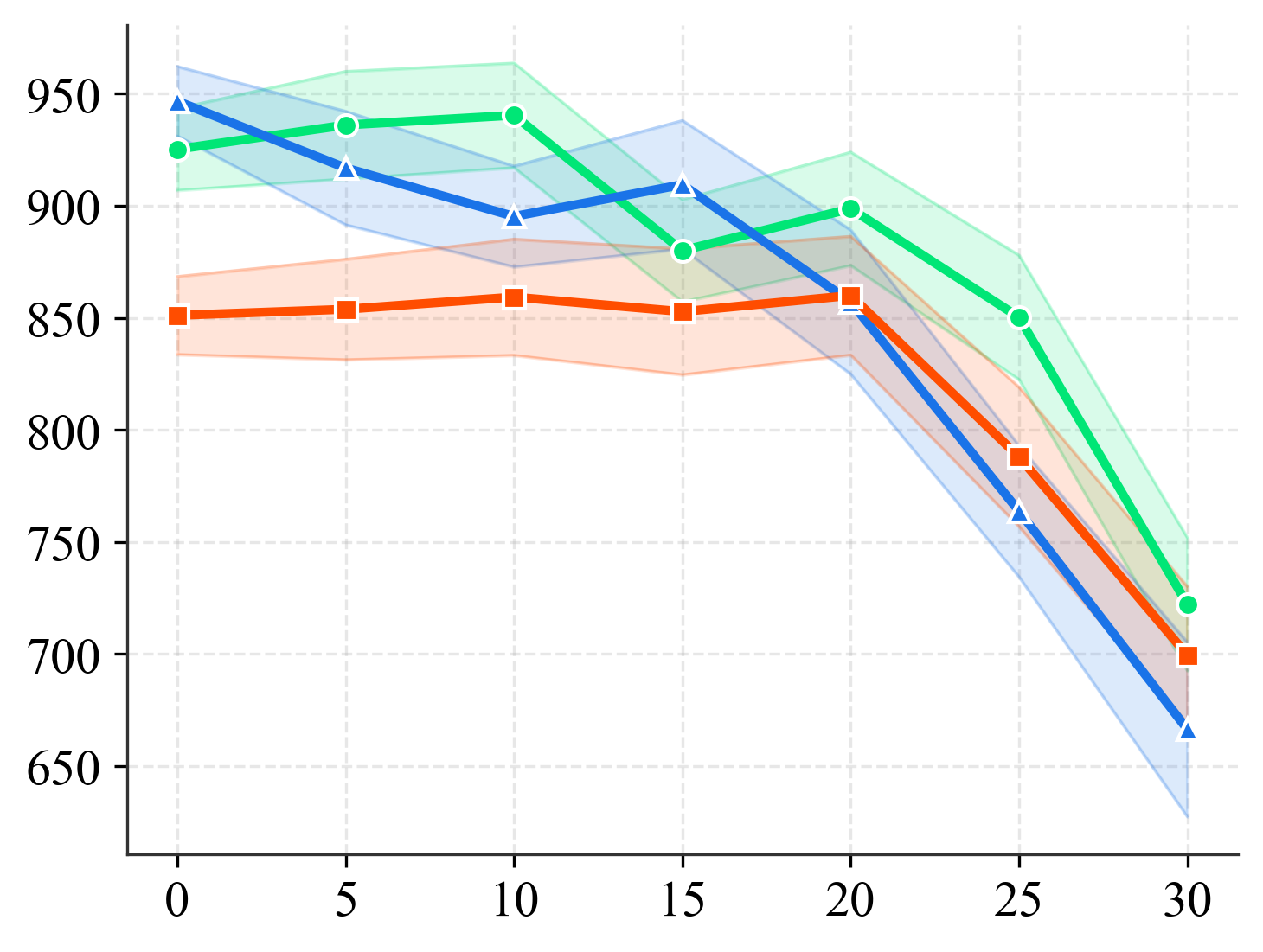} \\

\end{tabular}

\vspace{0.2cm}
\begin{minipage}{0.95\textwidth}
\centering
\footnotesize
\legendcircle{rbfmheavy}{RBFM-Heavy}\hspace{3.0em}
\legendtriangle{fbil}{FB-IL}\hspace{3.0em}
\legendsquare{rbfmlight}{RBFM-Light}
\end{minipage}

\caption{Walker performance with $95\%$ confidence interval across four tasks (rows) and three perturbation types (columns): gravity and body mass increase the physical load on the robot (\% change from nominal); joint friction loss adds passive resistive torque at each joint, simulating mechanical wear (absolute Nm per joint). Pretrained on PROTO data.}
\label{fig:walker_proto}
\end{figure}



\appsection{Discussion and Limitation}
\label{sec:app_limitations}

While answering (Q4) in Section~\ref{sec:experiments}, we observed that when the pretraining dataset is insufficient, performance becomes highly unpredictable as perturbations increase. This suggests that our approach relies on sufficiently rich exploratory data to learn well-structured FB representations, which are then used to derive robust policies for downstream tasks within a given environment. Figure~\ref{fig:pretrain_comparison_100k_vs_500k} illustrates the IQM metric during pretraining for the Walker domain, comparing models trained on a $100k$ RND dataset versus a $500k$ dataset. We observe that with only $100k$ samples, the FB representations fail to learn effectively. Consequently, the IQM scores on representative tasks such as run, stand, flip, and walk are significantly lower than those achieved with the $500k$ dataset. This confirms that the failure of our approach stems from insufficient pretraining, which adversely impacts task inference and, hence, the policy.


\begin{figure}[ht!]
\centering

\begin{subfigure}{0.45\textwidth}
    \centering
    \includegraphics[width=\linewidth]{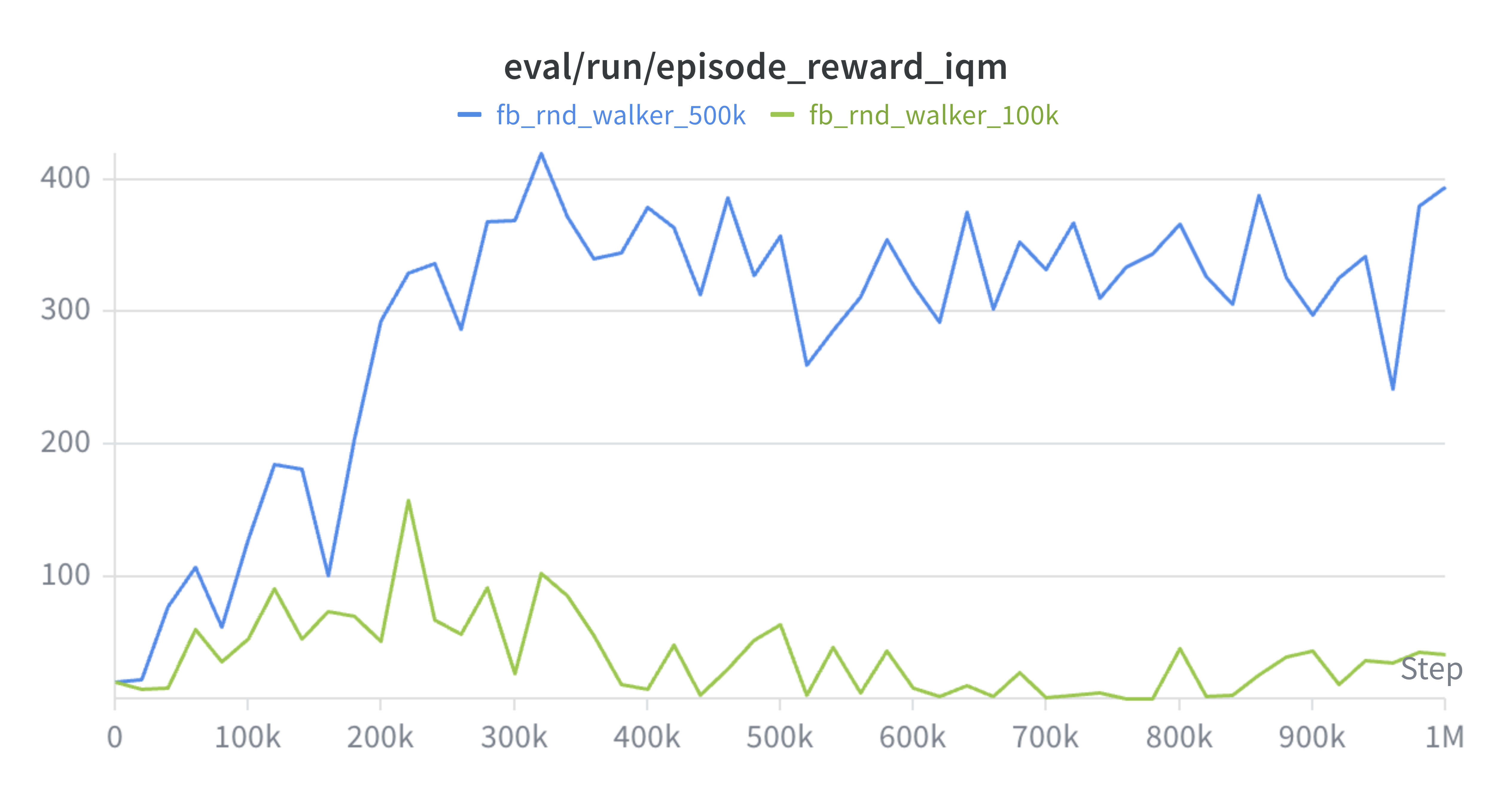}
    \caption{Run}
\end{subfigure}
\hfill
\begin{subfigure}{0.45\textwidth}
    \centering
    \includegraphics[width=\linewidth]{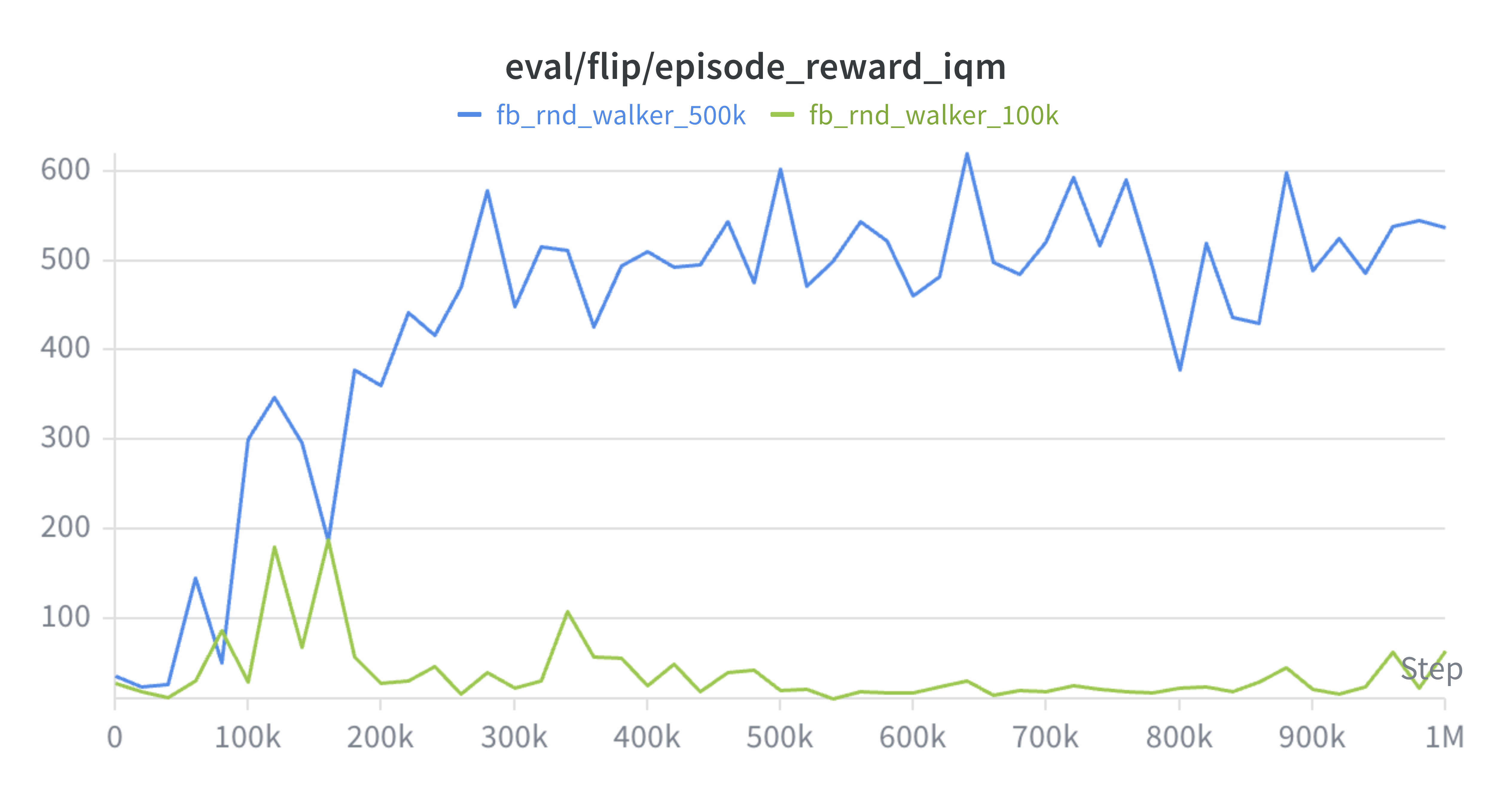}
    \caption{Flip}
\end{subfigure}

\vspace{0.5cm}

\begin{subfigure}{0.45\textwidth}
    \centering
    \includegraphics[width=\linewidth]{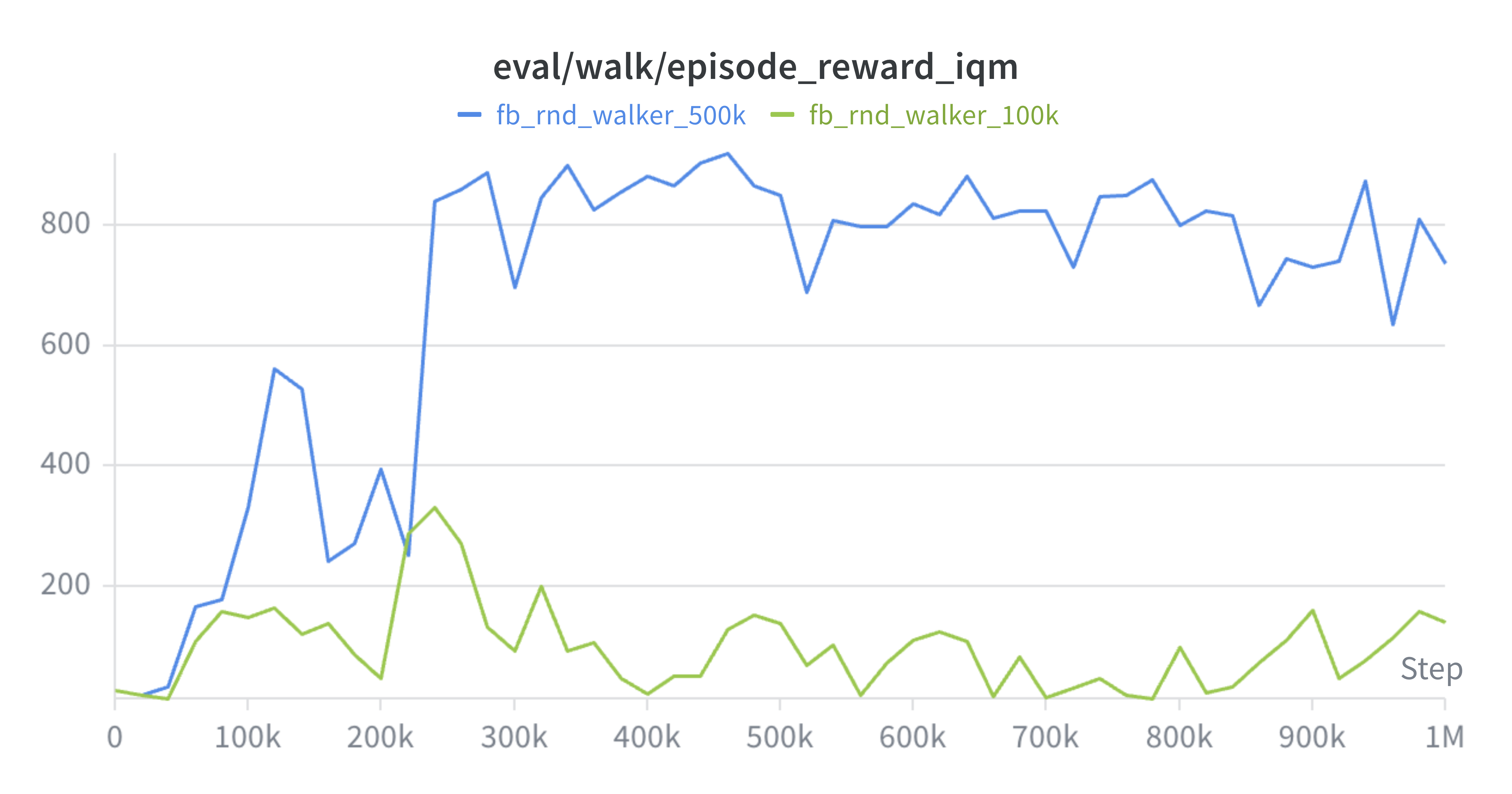}
    \caption{Walk}
\end{subfigure}
\hfill
\begin{subfigure}{0.45\textwidth}
    \centering
    \includegraphics[width=\linewidth]{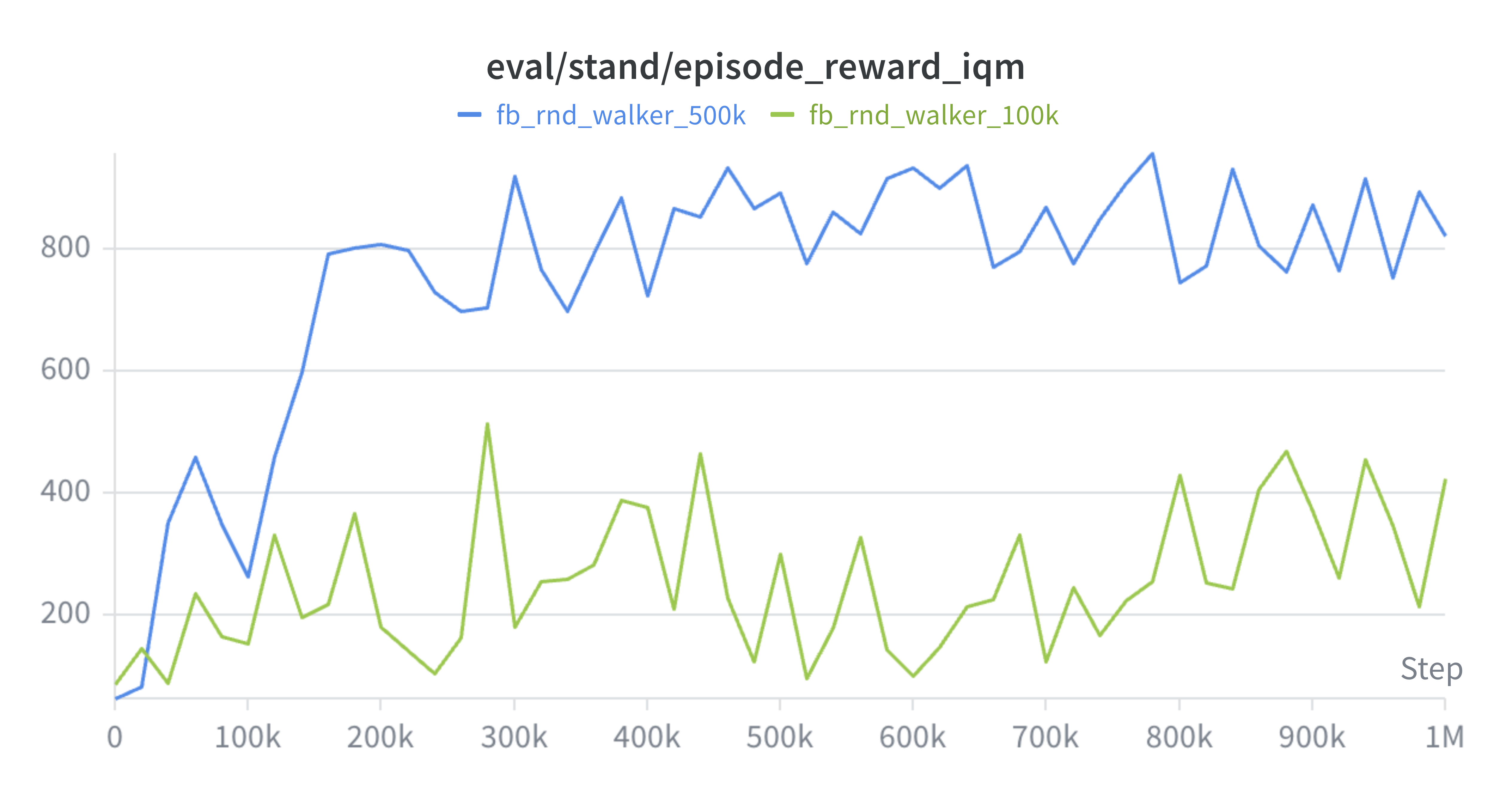}
    \caption{Stand}
\end{subfigure}

\caption{Comparison of pretraining of Walker on $100k$ RND dataset vs $500k$ RND dataset, where the models are evaluated every $20,000$ timesteps where we perform $10$ rollouts and
record the IQM.}
\label{fig:pretrain_comparison_100k_vs_500k}
\end{figure}

\clearpage

\end{document}